\tikzset{>=latex}
\crefname{equation}{}{}
\crefname{corollary}{Cor.}{Cors.}
\crefname{lemma}{Lem.}{Lems.}
\crefname{theorem}{Thm.}{Thms.}
\crefname{proposition}{Prop.}{Props.}
\crefname{assumption}{Assump.}{Assumps.}
\crefname{equation}{}{}
\crefname{section}{Sec.}{Secs.}
\crefname{appendix}{App.}{Apps.}
\crefname{figure}{Fig.}{Figs.}
\crefname{example}{Ex.}{Exs.}
\crefname{algorithm}{Alg.}{Algs.}
\definecolor{mycolor}{RGB}{120, 150, 193}
\newcommand{\subscript}[2]{$#1 _ #2$}
\newlist{assumplist}{enumerate}{1}
\setlist[assumplist]{label=(\subscript{\textbf{A}}{{\arabic*}})}
\Crefname{assumplisti}{Assumption}{Assumptions}
\newcommand{\eps}{\epsilon}
\def\balign#1\ealign{\begin{align}#1\end{align}}
\def\baligns#1\ealigns{\begin{align*}#1\end{align*}}
\def\balignat#1\ealign{\begin{alignat}#1\end{alignat}}
\def\balignats#1\ealigns{\begin{alignat*}#1\end{alignat*}}
\def\bitemize#1\eitemize{\begin{itemize}#1\end{itemize}}
\def\benumerate#1\eenumerate{\begin{enumerate}#1\end{enumerate}}
\newenvironment{talign*}
 {\let\displaystyle\textstyle\csname align*\endcsname}
 {\endalign}
\newenvironment{talign}
 {\let\displaystyle\textstyle\csname align\endcsname}
 {\endalign}
\def\balignst#1\ealignst{\begin{talign*}#1\end{talign*}}
\def\balignt#1\ealignt{\begin{talign}#1\end{talign}}
\newcommand{\qtext}[1]{\quad\text{#1}\quad}
\let\originalleft\left
\let\originalright\right
\renewcommand{\left}{\mathopen{}\mathclose\bgroup\originalleft}
\renewcommand{\right}{\aftergroup\egroup\originalright}
\def\tinycitep*#1{{\tiny\citep*{#1}}}
\def\tinycitealt*#1{{\tiny\citealt*{#1}}}
\def\tinycite*#1{{\tiny\cite*{#1}}}
\def\smallcitep*#1{{\scriptsize\citep*{#1}}}
\def\smallcitealt*#1{{\scriptsize\citealt*{#1}}}
\def\smallcite*#1{{\scriptsize\cite*{#1}}}
\def\mbb#1{\mathbb{#1}}
\def\tbf#1{\textbf{#1}}
\def\reals{\mathbb{R}} %
\def\<{\left\langle} %
\def\>{\right\rangle}
\def\defeq{\triangleq} %
\def\norm#1{\left\|{#1}\right\|} %
\newcommand{\twonorm}[1]{\norm{#1}_2} %
\newcommand{\opnorm}[1]{\norm{#1}_{\mathrm{op}}} %
\newcommand{\staticopnorm}[1]{\staticnorm{#1}_{\mathrm{op}}} %
\newcommand{\fronorm}[1]{\norm{#1}_{F}} %
\def\staticnorm#1{\|{#1}\|} %
\newcommand{\statictwonorm}[1]{\staticnorm{#1}_2} %
\newcommand{\inner}[2]{\langle{#1},{#2}\rangle} %
\def\E{\mbb{E}} %
\DeclareMathOperator{\tr}{Tr} %
\newcommand{\grad}{\nabla} %
\newcommand{\Hess}{\nabla^2} %
\newcommand{\deriv}[2]{\frac{d #1}{d #2}} %
\def\KL#1#2{\textnormal{KL}({#1}\Vert{#2})}
\newcommand{\iid}{\textrm{i.i.d.}\xspace}
\providecommand{\argmin}{\mathop\mathrm{arg min}}
\providecommand{\diag}{\mathop\mathrm{diag}}
\providecommand{\tr}{\mathop\mathrm{tr}}
\newtheorem{theorem}{Theorem}
\newtheorem{lemma}[theorem]{Lemma}
\newtheorem{proposition}[theorem]{Proposition}
\newtheorem{definition}{Definition}
\theoremstyle{definition}
\newtheorem{example}{Example}
\renewenvironment{proof}{\noindent\textbf{Proof}\hspace*{1em}}{\qed\\}
\newenvironment{proof-sketch}{\noindent\textbf{Proof Sketch}
  \hspace*{1em}}{\qed\bigskip\\}
\newenvironment{proof-idea}{\noindent\textbf{Proof Idea}
  \hspace*{1em}}{\qed\bigskip\\}
\newenvironment{proof-of-lemma}[1][{}]{\noindent\textbf{Proof of Lemma {#1}}
  \hspace*{1em}}{\qed\\}
\newenvironment{proof-of-theorem}[1][{}]{\noindent\textbf{Proof of Theorem {#1}}
  \hspace*{1em}}{\qed\\}
\newenvironment{proof-attempt}{\noindent\textbf{Proof Attempt}
  \hspace*{1em}}{\qed\bigskip\\}
\newenvironment{remark}{\noindent\textbf{Remark}
  \hspace*{.5em}}{\smallskip}%
\newcommand{\gset}[0]{\mathcal{G}} %
\newcommand{\ball}{\mathcal{B}} %
\newcommand{\supnorm}[1]{\norm{#1}_{\infty}} %
\def\MSD#1#2#3{\textnormal{MSD}({#1},{#2},{#3})}
\def\MKSD#1#2#3{\textnormal{MKSD}_{#3}({#1},{#2})}
\newcommand{\cH}{\mathcal{H}}
\newcommand{\cX}{\mathcal{X}}
\newcommand{\Keta}{K_{\nabla\psi^*}}
\newcommand{\Ketat}{K_{\nabla\psi^*,t}}
\newcolumntype{C}{>{$\displaystyle} c <{$}}
\title{Sampling with Mirrored Stein Operators}
\author{%
  Jiaxin Shi$^1$ \quad Chang Liu$^2$ \quad Lester Mackey$^1$\\
  $^1$ Microsoft Research New England\\  $^2$ Microsoft Research Asia\\
  \texttt{\{jiaxinshi,chang.liu,lmackey\}@microsoft.com} \\
}
\begin{document}

\maketitle

\begin{abstract}
We introduce a new family of particle evolution samplers suitable for constrained domains and non-Euclidean geometries. 
Stein Variational Mirror Descent and Mirrored Stein Variational Gradient Descent minimize the Kullback-Leibler (KL) divergence to constrained target distributions by evolving particles in a dual space defined by a mirror map.
Stein Variational Natural Gradient exploits non-Euclidean geometry to 
more efficiently minimize the KL divergence to unconstrained targets. 
We derive these samplers from a new class of mirrored Stein operators and adaptive kernels developed in this work.
We demonstrate that these new samplers yield accurate approximations to distributions on the simplex, deliver valid confidence intervals in post-selection inference, and converge more rapidly than prior methods in large-scale unconstrained posterior inference.
Finally, we establish the convergence of our new procedures under verifiable conditions on the target distribution.
\end{abstract}

\section{Introduction}
Accurately approximating an unnormalized distribution with a discrete sample is a fundamental challenge in machine learning, probabilistic inference, and Bayesian inference.
Particle evolution methods like Stein variational gradient descent~\citep[SVGD,][]{liu2016stein} tackle this challenge by applying deterministic updates to particles using operators based on Stein's method~\citep{stein1972bound,gorham2015measuring,oates2017control,liu2016kernelized,chwialkowski2016kernel,gorham2017measuring} and reproducing kernels~\citep{berlinet2011reproducing} to sequentially minimize Kullback-Leibler (KL) divergence.
SVGD has found great success in approximating unconstrained distributions for probabilistic learning \citep{feng2017learning,haarnoja2017reinforcement,kim2018bayesian} but breaks down for constrained targets, like distributions on the simplex~\citep{patterson2013stochastic} or the targets of post-selection inference~\citep{taylor2015statistical,lee2016exact,tian2016magic}, and fails to exploit informative non-Euclidean geometry~\citep{amari1998natural}.

In this work, we derive a family of particle evolution samplers suitable for target distributions with constrained domains and non-Euclidean geometries.
Our development draws inspiration from mirror descent~(MD) \citep{nemirovskij1983problem}, a  first-order optimization method that generalizes gradient descent with non-Euclidean geometry.
To sample from a distribution with constrained support, our method first maps particles to a dual space. 
There, we update particle locations using a new class of \emph{mirrored Stein operators} and adaptive reproducing kernels introduced in this work.
Finally, the dual particles are mapped back to sample points in the original space, ensuring that all constraints are satisfied. 
We illustrate this procedure in \Cref{fig:md}.
In \cref{sec:stein}, we develop two algorithms -- Mirrored SVGD (MSVGD) and Stein Variational Mirror Descent (SVMD) -- with different updates in the dual space; when only a single particle is used, MSVGD reduces to gradient ascent on the log dual space density, and SVMD reduces to mirror ascent on the log  target density. 
In addition, by exploiting the connection between MD and natural gradient descent~\citep{amari1998natural,raskutti2015information}, we develop a third algorithm -- Stein Variational Natural Gradient (SVNG) -- that extends SVMD to unconstrained targets with non-Euclidean geometry.

In \cref{sec:experiments}, we demonstrate the advantages of our algorithms on benchmark simplex-constrained problems from the literature, constrained sampling problems in post-selection inference~\citep{taylor2015statistical,lee2016exact,tian2016magic}, and unconstrained large-scale posterior inference with the Fisher information metric.
Finally, we analyze the convergence of our mirrored algorithms in \cref{sec:theory} and discuss our results in \cref{sec:discussion}.

\paragraph{Related work}
Our mirrored Stein operators \cref{eq:mirror-stein-op} are instances of diffusion Stein operators in the sense of \citet{gorham2017measuring}, but their specific properties have not been studied, nor have they been used to develop sampling algorithms.
There is now a large body of work on transferring algorithmic ideas from optimization to %
MCMC \citep[e.g.,][]{welling2011bayesian,simsekli2016stochastic,dalalyan2017further,durmus2018efficient,ma2019there} and SVGD-like sampling methods \citep[e.g.,][]{liu2019understanding,liu2019understanding_b,zhu2020variance,zhang2020variance}.
The closest to our work in this space is the recent marriage of mirror descent and MCMC.
For example, \citet{hsieh2018mirrored} propose to run Langevin Monte Carlo (LMC, an Euler discretization of the Langevin diffusion) in a mirror space. 
\citet{zhang2020wasserstein} analyze the convergence properties of the mirror-Langevin diffusion, 
\citet{chewi2020exponential} demonstrate its advantages over the Langevin diffusion when using a Newton-type metric, and \citet{ahn2020efficient} study its discretization for MCMC sampling in constrained domains.
Relatedly, \citet{patterson2013stochastic} proposed stochastic Riemannian LMC for sampling on the simplex.

Several modifications of SVGD have been proposed to incorporate geometric information. 
Riemannian SVGD \citep[RSVGD,][]{liu2018riemannian} generalizes SVGD to Riemannian manifolds, %
but, even with the same metric tensor, 
their updates are more complex than ours: notably they require higher-order kernel derivatives, 
do not operate in a mirror space, and do not reduce to natural gradient descent when a single particle is used. 
They also reportedly do not perform well when with scalable stochastic estimates of $\grad \log p$.
Stein Variational Newton~\citep[SVN,][]{detommaso2018stein,chen2019projected} introduces second-order information into SVGD.
Their algorithm requires an often expensive Hessian computation and need not lead to descent directions, so inexact approximations are employed in practice.
Our SVNG can be seen as an instance of matrix SVGD~\citep[MatSVGD,][]{wang2019stein} with an adaptive time-dependent kernel discussed in \cref{sec:svng}, %
a choice that is not explored in \citet{wang2019stein} and which recovers natural gradient descent when $n=1$ unlike the heuristic kernel constructions of~\citet{wang2019stein}.
None of the aforementioned works provide convergence guarantees, and neither SVN nor matrix SVGD deals with constrained domains.

\section{Background: Mirror Descent and Non-Euclidean Geometry}
\label{sec:md}

\begin{figure}[t] \vskip -0.3in
	\centering
	\begin{tikzpicture}[scale=0.43]
		\draw[thin,fill=white!90!mycolor] (-9, 0) -- (-5, 0) -- (-3, 3) -- (-5, 5) -- (-8, 5) -- (-10, 3) -- cycle;
		\node[] at (-8,1) {\large $\Theta$};
		\draw[-{Latex[length=3mm]}, thin] (-4.5, 3) .. controls (-3, 5.5) .. (0, 4);
		\draw[-{Latex[length=3mm]}, thin] (1, 2) .. controls (-4, 0) .. (-8, 2);
		\draw[-{Stealth[length=3mm]}, thin, dashed] (-4.5, 3) -- (-2.5, 2);
		\node at (-4.5,3) [circle,fill,inner sep=1.2pt]{};
		\draw[-{Stealth[length=3mm]}, thin] (0, 4) -- (1, 2);
		\node at (0,4) [circle,fill,inner sep=1.2pt]{};
		\node[] at (-3,5.5) { $\nabla \psi$};
		\node[] at (-3,-0.2) { $\nabla \psi^*$};
		\node[] at (-5,3) { $\theta_t$};
		\node[] at (-8.5,2.5) { $\theta_{t+1}$};
		\node[] at (0.8,4) { $\eta_t$};
		\node[] at (2, 1.5) { $\eta_{t+1}$};
		\node[anchor=west] at (1.5,3) {\small $\epsilon_t \mathbb{E}_{\theta\sim q_t}[\mathcal{M}_{p,\psi} K(\theta_t, \theta)]$};
	\end{tikzpicture}
	\caption{Updating particle approximations in constrained domains $\Theta$. Standard updates like SVGD (dashed arrow) can push particles outside of the support. Our mirrored Stein updates in \cref{alg:constrained} (solid arrows) preserve the support by updating particles in a dual space and mapping back to $\Theta$. %
	}
	\label{fig:md}
\end{figure}
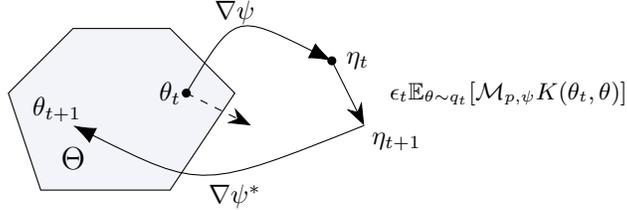

Standard gradient descent can be viewed as optimizing a local quadratic approximation to the target function $f$:
$
	\theta_{t+1} = \argmin_{\theta \in \Theta}  \nabla f(\theta_t)^\top\theta + \frac{1}{2\epsilon_t}\| \theta - \theta_t\|^2_2.
$
When $\Theta \subseteq \mathbb{R}^d$ is constrained, %
it can be advantageous to replace %
$\|\cdot\|_2$ 
with a %
function $\Psi$ that reflects the %
geometry of a problem \citep{nemirovskij1983problem,beck2003mirror}:
\begin{talign} \label{eq:mirror-bregman}
	\theta_{t+1} = \argmin_{\theta \in \Theta} \nabla f(\theta_t)^\top\theta + \frac{1}{\epsilon_t}\Psi(\theta, \theta_t).
\end{talign}
We consider the mirror descent algorithm~\citep{nemirovskij1983problem,beck2003mirror} which chooses $\Psi$ to be the Bregman divergence induced by a strongly convex, essentially smooth\footnote{$\psi$ is continuously differentiable on the interior of $\Theta$ with $\|\nabla \psi(\theta_t)\| \to \infty$ whenever $\theta_t \to \theta \in \partial \Theta$.} function $\psi: \Theta \to \mathbb{R}\cup\{\infty\}$: 
$
\Psi(\theta, \theta') = \psi(\theta) - \psi(\theta') - \nabla \psi(\theta')^\top (\theta - \theta').
$
When $\Theta$ is a $(d + 1)$-simplex $\{\theta: \sum_{i=1}^d \theta_i \leq 1\text{ and }\theta_i \geq 0\text{ for } i \in [d]\}$,
a common choice of $\psi$ is the negative entropy 
$\psi(\theta) = \sum_{i=1}^{d+1} \theta_i \log \theta_i$, for $\theta_{d+1} \defeq 1 - \sum_{i=1}^d \theta_d$.
The solution of \cref{eq:mirror-bregman} is given by
\begin{equation} \label{eq:mirror-descent}
	\theta_{t + 1} = \nabla \psi^*(\nabla \psi(\theta_t) - \epsilon_t \nabla f(\theta_t)),
\end{equation}
where $\psi^*(\eta) \defeq \sup_{\theta\in\Theta} \eta^\top \theta - \psi(\theta)$ is the convex conjugate of $\psi$ %
and $\nabla\psi$ is a bijection from $\Theta$ to $\mathrm{dom}(\psi^*)$ with inverse map $(\nabla \psi)^{-1} = \nabla\psi^*$.
We can view the update in \cref{eq:mirror-descent} as first mapping $\theta_t$ to $\eta_t$ by $\nabla\psi$, applying the update $\eta_{t+1} = \eta_t - \epsilon_t \nabla f(\theta_t)$, and mapping back through $\theta_{t+1} = \nabla \psi^*(\eta_{t+1})$. 

Mirror descent can also be viewed as a discretization of the continuous-time dynamics $d \eta_t = - \nabla f(\theta_t) dt,\;\theta_t = \nabla \psi^*(\eta_t)$, which is equivalent to the Riemannian gradient flow ~(see \cref{app:md-vs-ngd}):
\begin{equation} \label{eq:riemann-gf}
	d \theta_t = - \nabla^2\psi(\theta_t)^{-1} \nabla f(\theta_t) dt,\quad\text{or equivalently,}\quad d\eta_t = - \nabla^2\psi^*(\eta_t)^{-1} \nabla_{\eta_t} f(\nabla\psi^*(\eta_t)) dt,
\end{equation}
where $\nabla^2\psi(\theta)$ and $\nabla^2\psi^*(\eta)$ are Riemannian metric tensors. 
In information geometry, the discretization
of \cref{eq:riemann-gf} is known as \emph{natural gradient} descent~\citep{amari1998natural}.
There is considerable theoretical and practical evidence~\citep{martens2014new} showing that natural gradient works efficiently in learning.

\section{Stein's Identity and Mirrored Stein Operators}
\label{sec:stein}
Stein's identity~\citep{stein1972bound} is a tool for characterizing a target distribution $P$ using a so-called \emph{Stein operator}.
We assume $P$ has a differentiable density $p$ with a closed convex support $\Theta \subseteq \mathbb{R}^d$.
A Stein operator $\mathcal{S}_p$ takes as input functions $g$ from a \emph{Stein set} $\mathcal{G}$ and outputs mean-zero functions under $p$:
\begin{equation} \label{eq:stein-identity}
	\mathbb{E}_{\theta\sim p} [(\mathcal{S}_p g)(\theta)] = 0,\quad\text{for all } g \in \mathcal{G}.
\end{equation}
\citet{gorham2015measuring} proposed the \emph{Langevin Stein operator} given by
\begin{talign} \label{eq:langevin-stein-op}
	(\mathcal{S}_p g)(\theta) = g(\theta)^\top \nabla \log p(\theta) + \nabla \cdot g(\theta),
\end{talign}
where $g$ is a vector-valued function and $\grad\cdot g$ is its divergence. %
For an \emph{unconstrained} domain with $\E_p[\twonorm{\grad \log p(\theta)}] < \infty$, Stein's identity \cref{eq:stein-identity} holds for this operator whenever $g\in C^1$ is bounded and Lipschitz by 
\citep[proof of Prop.~3]{gorham2019measuring}. 
However, on constrained domains $\Theta$, Stein's identity fails to hold for many reasonable inputs $g$ if $p$ is non-vanishing or explosive at the boundary.

Motivated by this deficiency and by a desire to exploit non-Euclidean geometry, we propose an alternative \emph{mirrored Stein operator}, 
\begin{equation} \label{eq:mirror-stein-op}
	(\mathcal{M}_{p,\psi} g)(\theta) = g(\theta)^\top \nabla^2\psi(\theta)^{-1}\nabla \log p(\theta) + \nabla\cdot (\nabla^2\psi(\theta)^{-1}g(\theta)),
\end{equation}
where 
$\psi$ 
is a strongly convex, essentially smooth function as in \cref{sec:md} 
with $(\nabla^2 \psi)^{-1}$ differentiable and Lipschitz on $\Theta$. 
We derive this operator from the (infinitesimal) generator of the mirror-Langevin diffusion \cref{eq:mirror-langevin} in \Cref{app:mirror-stein-op}. 
The following result, proved in \Cref{app:mirror-stein-identity}, shows that $\mathcal{M}_{p, \psi}$ generates mean-zero functions under $p$ whenever $\Hess \psi^{-1}$ suitably cancels the growth of $p$ at the boundary.

\begin{proposition} \label{prop:mirror-stein-identity}
    Suppose that $\nabla^2\psi(\theta)^{-1}\nabla\log p(\theta)$ and $\nabla\cdot \nabla^2 \psi(\theta)^{-1}$ are $p$-integrable.
    If 
    $\lim_{r \to \infty} 
    \int_{\partial \Theta_r} p(\theta)\statictwonorm{\nabla^2\psi(\theta)^{-1} n_r(\theta)} d\theta = 0$
    for $\Theta_r \defeq \{\theta \in \Theta: \|\theta\|_\infty \leq r\}$ and
	$n_r(\theta)$ the outward unit normal vector\footnote{For a closed convex set whose boundary $\partial \Theta$ can be locally represented as $F(\theta_1, \dots, \theta_d) = 0$, %
	its unit normal vector is defined as
$
n(\theta) = \pm \frac{\nabla F(\theta_1, \dots, \theta_d)}{\| \nabla F(\theta_1, \dots, \theta_d) \|_2}.
$
} to $\partial\Theta_r$ at $\theta$, 
	 then
	 $\mathbb{E}_{p} [(\mathcal{M}_{p,\psi} g)(\theta)]  = 0$
	 if $g\in C^1$ is bounded Lipschitz. 
\end{proposition}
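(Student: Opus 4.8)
The plan is to recognize the density-weighted operator output as an exact divergence and then apply the divergence theorem on the truncated domains $\Theta_r$, letting $r\to\infty$. Write $M(\theta):=\nabla^2\psi(\theta)^{-1}$, which is symmetric because it inverts a Hessian. The key algebraic step I would establish first is the pointwise identity
\[
p(\theta)\,(\mathcal{M}_{p,\psi}g)(\theta) = \nabla\cdot\big(p(\theta)\,M(\theta)\,g(\theta)\big).
\]
Expanding the right-hand side by the product rule gives $(\nabla p)^\top M g + p\,\nabla\cdot(Mg)$; substituting $\nabla p = p\,\nabla\log p$ and using symmetry $M=M^\top$ turns the first summand into $p\,g^\top M\,\nabla\log p$, and adding the two pieces reproduces exactly $p$ times the definition of $\mathcal{M}_{p,\psi}g$ in \cref{eq:mirror-stein-op}. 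This reduces the claim to showing that the integral of a divergence over $\Theta$ vanishes.

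Before taking limits I would record that $\mathcal{M}_{p,\psi}g\in L^1(p)$, so that the interior integrals converge. The first term $g^\top M\,\nabla\log p$ is dominated by $\|g\|_\infty\,\|M\nabla\log p\|$ and hence $p$-integrable by the first hypothesis. The divergence term splits as $\nabla\cdot(Mg)=(\nabla\cdot M)^\top g+\langle M,\nabla g\rangle_F$; the first summand is dominated by $\|g\|_\infty\,\|\nabla\cdot M\|$ and is $p$-integrable by the second hypothesis, while the second is controlled by $\|\nabla g\|_F\le\sqrt{d}\,\mathrm{Lip}(g)$ together with the $p$-integrability of $M=\nabla^2\psi^{-1}$, which follows from its stated differentiability and Lipschitz regularity on the relevant domains. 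Thus all three pieces are $p$-integrable.

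Next, I would fix $r$, set $V:=p\,M g$, and apply the divergence theorem on the bounded set $\Theta_r$:
\[
\int_{\Theta_r}\nabla\cdot V(\theta)\,d\theta = \int_{\partial\Theta_r} V(\theta)^\top n_r(\theta)\,d\theta = \int_{\partial\Theta_r} p(\theta)\,g(\theta)^\top \nabla^2\psi(\theta)^{-1} n_r(\theta)\,d\theta.
\]
By the identity from the first step the left-hand side equals $\int_{\Theta_r} p\,\mathcal{M}_{p,\psi}g$, which converges to $\mathbb{E}_p[\mathcal{M}_{p,\psi}g]$ as $r\to\infty$ by dominated convergence and the integrability just verified. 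For the right-hand side, Cauchy--Schwarz gives $|g^\top M n_r|\le\|g\|_\infty\,\|M n_r\|_2$, so the boundary integral is bounded in absolute value by $\|g\|_\infty\int_{\partial\Theta_r}p\,\|\nabla^2\psi(\theta)^{-1} n_r\|_2\,d\theta$, which tends to $0$ precisely by the stated boundary hypothesis. Equating the two limits yields $\mathbb{E}_p[\mathcal{M}_{p,\psi}g]=0$.

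The main obstacle is the rigorous application of the divergence theorem, since $p$ — and hence $V$ — may be unbounded or explosive as $\theta$ approaches the true boundary $\partial\Theta$ (for instance a Dirichlet density on the simplex), so $V$ need not extend continuously to $\overline{\Theta_r}$. I would handle this by exhausting the interior of $\Theta_r$ by an increasing family of compact sets with piecewise-smooth Lipschitz boundaries, applying the classical divergence theorem on each, and passing to the limit; the contributions on the portion of the approximating boundary that approaches $\partial\Theta$ are exactly what the integrand $p\,\|\nabla^2\psi^{-1} n_r\|_2$ controls, so the role of $\nabla^2\psi^{-1}$ is to cancel the boundary growth of $p$ and force these contributions to vanish. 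The remaining bookkeeping — that $\partial\Theta_r$ splits into the truncation caps $\{\|\theta\|_\infty=r\}$ and the genuine boundary, and that both are absorbed by the single limit in the hypothesis — is routine.
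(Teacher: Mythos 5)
Your proposal is correct and follows essentially the same route as the paper's own proof: rewrite $p\,(\mathcal{M}_{p,\psi}g)$ as the exact divergence $\nabla\cdot(p\,\nabla^2\psi^{-1}g)$, pass to the truncated sets $\Theta_r$ by dominated convergence, apply the divergence theorem, and kill the boundary term via Cauchy--Schwarz, $\|g\|_\infty$, and the stated boundary hypothesis. Your additional attention to the $\mathrm{tr}(\nabla^2\psi^{-1}\nabla g)$ integrability term and to exhausting $\Theta_r$ from within when $p$ explodes at $\partial\Theta$ are careful refinements of steps the paper treats implicitly, not a different argument.
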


\begin{example}[\tbf{Dirichlet $p$, Negative entropy $\psi$}]\label{ex:dirichlet}
When %
$\theta_{1:d+1} \sim \mathrm{Dir}(\alpha)$ for $\alpha\in \reals_{+}^{d+1}$, even 
setting $g(\theta)$ = $\mathbf{1}$ in \eqref{eq:langevin-stein-op}
need not cause the identity to hold when some $\alpha_j \leq 1$. 
However, when $\psi(\theta) = \sum_{j=1}^{d+1} \theta_j \log \theta_j$, 
we show in \Cref{app:dir-exp-details} that
the conditions of \cref{prop:mirror-stein-identity} are met for any $\alpha$. 
Remarkably, the mirror-Langevin diffusion for our choice of $\psi$ is the Wright-Fisher diffusion~\citep{ethier1976class} which \citet{gan2017dirichlet} recently used to bound distances to Dirichlet distributions. %
\end{example}

\section{Sampling with Mirrored Stein Operators}

\citet{liu2016stein} pioneered the idea of using Stein operators to approximate a target distribution with particles.
Their popular SVGD algorithm updates each particle in its approximation by applying the update rule $\theta_{t+1} = \theta_t + \epsilon_t g_t(\theta_t)$ for a chosen mapping $g_t : \mathbb{R}^d\to \mathbb{R}^d$.  
Specifically, SVGD chooses the mapping $g_t^*$ that leads to the largest decrease in KL divergence to $p$ in the limit as $\epsilon_t \to 0$.  
The following theorem summarizes their main findings. %

\begin{theorem}[{\citealp[Thm.~3.1]{liu2016stein}}] \label{thm:kl-change-ls}
	Suppose $(\theta_t)_{t\geq 0}$ satisfies $d \theta_t = g_t(\theta_t) dt$ for bounded Lipschitz $g_t \in C^1: \mathbb{R}^d \to \mathbb{R}^d$
	and that $\theta_t$ has density $q_t$ with  $\mathbb{E}_{q_t}[\|\nabla\log q_t\|_2] < \infty$. 
	If $\KL{q_t}{p} \defeq \E_{q_t}[\log(q_t/p)]$ exists then, for the Langevin Stein operator $\mathcal{S}_p$ \cref{eq:langevin-stein-op},
	\begin{talign} \label{eq:kl-change-ls}
		\deriv{}{t}\KL{q_t}{p} %
		= -\mathbb{E}_{\theta_t \sim q_t}[(\mathcal{S}_p g_t)(\theta_t)].
	\end{talign}
\end{theorem}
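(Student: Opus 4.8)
The plan is to exploit the fact that a deterministic particle flow transports probability mass according to the continuity equation, which turns the statement into a differentiation under the integral sign followed by two integrations by parts. First I would record the transport (Liouville) equation: since $g_t \in C^1$ is bounded and Lipschitz, the ODE $d\theta_t = g_t(\theta_t)\dt$ generates a well-defined flow on $\reals^d$, and the density $q_t$ of $\theta_t$ satisfies $\partial_t q_t = -\nabla\cdot(q_t g_t)$. This is the only place the dynamics enter, and it replaces time differentiation of $q_t$ with a spatial divergence.

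Next I would write $\KL{q_t}{p} = \int q_t \log q_t \dt\theta - \int q_t \log p \dt\theta$ and differentiate under the integral. Because $\int \partial_t q_t \dt\theta = \deriv{}{t}\int q_t \dt\theta = 0$, the constant term produced by $\partial_t(q_t \log q_t)$ drops out, leaving $\deriv{}{t}\KL{q_t}{p} = \int \partial_t q_t\, \log(q_t/p) \dt\theta$. Substituting the continuity equation and integrating by parts (the boundary term at infinity vanishing) gives
\begin{talign*}
	\deriv{}{t}\KL{q_t}{p} = \int q_t\, g_t^\top\big(\nabla\log q_t - \nabla\log p\big) \dt\theta = \mathbb{E}_{q_t}[g_t^\top\nabla\log q_t] - \mathbb{E}_{q_t}[g_t^\top\nabla\log p].
\end{talign*}
Finally I would eliminate the score of $q_t$ via Stein's identity for $q_t$ itself: using $q_t \nabla\log q_t = \nabla q_t$ and one more integration by parts, $\mathbb{E}_{q_t}[g_t^\top\nabla\log q_t] = \int g_t^\top\nabla q_t \dt\theta = -\int q_t\, \nabla\cdot g_t \dt\theta = -\mathbb{E}_{q_t}[\nabla\cdot g_t]$. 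Combining the last two displays yields $\deriv{}{t}\KL{q_t}{p} = -\mathbb{E}_{q_t}[g_t^\top\nabla\log p + \nabla\cdot g_t] = -\mathbb{E}_{q_t}[(\mathcal{S}_p g_t)]$, which is \cref{eq:kl-change-ls}.

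The main obstacle is analytic rather than algebraic: I must justify differentiating under the integral sign and verify that both integration-by-parts steps produce no boundary contribution at infinity. The hypotheses are tailored for exactly this — $g_t$ bounded Lipschitz controls $\nabla\cdot g_t$ and the factor multiplying the tails, while $\mathbb{E}_{q_t}[\twonorm{\nabla\log q_t}] < \infty$ (together with the existence of $\KL{q_t}{p}$, hence integrability of $g_t^\top\nabla\log p$) makes the integrands $q_t g_t^\top\nabla\log q_t$ and $q_t g_t^\top\nabla\log p$ absolutely integrable. I would make the boundary terms vanish through a truncation argument over balls of radius $r\to\infty$, bounding the flux by $\|g_t\|_\infty$ times the mass of $q_t$ (and its score) outside the ball, which tends to zero by the integrability assumptions. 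The regularity of $q_t$ needed for $\nabla\log q_t$ to be meaningful is supplied by the standing assumption that $\theta_t$ admits a density.
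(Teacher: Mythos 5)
Your proof is correct, but note that the paper never proves this statement itself: it is imported verbatim as Theorem 3.1 of \citet{liu2016stein}, so there is no internal proof to compare against. Your continuity-equation argument --- the Liouville equation $\partial_t q_t = -\nabla\cdot(q_t g_t)$, differentiation under the integral, and two integrations by parts --- is the standard continuous-time derivation; it is essentially the gradient-flow argument of \citet{liu2017stein}, a reference the paper uses elsewhere. The original Liu--Wang proof takes a different, discrete route: it considers the one-step map $T(\theta) = \theta + \epsilon g(\theta)$, writes the KL divergence of the pushforward density via the change-of-variables formula (producing a $\log|\det \nabla T(\theta)|$ term), and differentiates at $\epsilon = 0$; the time-derivative statement quoted here is the limiting form of that computation. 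The change-of-variables route matches the discrete algorithm directly and avoids PDE machinery; your route is cleaner for genuinely continuous-time dynamics and is the form the paper implicitly leans on when deriving its mirrored analogue \cref{eq:mirror-kl-rate}.

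One caveat: your parenthetical claim that existence of $\KL{q_t}{p}$ implies integrability of $g_t^\top \nabla\log p$ under $q_t$ does not follow --- integrability of $\log(q_t/p)$ says nothing about its gradient. That integrability is an implicit standing assumption of the theorem (without it the right-hand side of \cref{eq:kl-change-ls} is not even defined), so you should assume it rather than derive it; the explicit hypothesis $\E_{q_t}[\twonorm{\nabla\log q_t}] < \infty$ plays the analogous role for the $q_t$-score term, which is why it appears in the statement. With that adjustment, your truncation argument for the vanishing boundary terms (choosing radii $r_k \to \infty$ along which the sphere integrals of the integrable flux tend to zero) is sound.
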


To improve its current particle approximation, SVGD finds the choice of %
$g_t$ that most quickly decreases $\KL{q_t}{p}$ at time $t$, i.e., it minimizes $\frac{d}{dt}\KL{q_t}{p}$ over a set %
of candidate directions $g_t$.
SVGD finds $g_t$ in a reproducing kernel Hilbert space \citep[RKHS,][]{berlinet2011reproducing} norm ball $\ball_{\mathcal{H}^d} = \{g: \norm{g}_{\mathcal{H}^d} \leq 1\}$, 
where $\cH^d$ is the product RKHS containing vector-valued functions with each component in the RKHS $\mathcal{H}$ of $k$.
Then the optimal $g_t^* \in \ball_{\mathcal{H}^d}$ that minimizes~\cref{eq:kl-change-ls} is
\begin{align} \label{eq:svgd}
	g_t^* \propto g^*_{q_t, k} \defeq  \mathbb{E}_{\theta_t \sim q_t}[k(\theta_t, \cdot)\nabla \log p(\theta_t) + \nabla_{\theta_t} k(\theta_t, \cdot)] = \mathbb{E}_{\theta_t \sim q_t}[\mathcal{S}_p K_k(\cdot, \theta_t)],
\end{align}
where we let $K_k(\theta, \theta') = k(\theta, \theta')I$, and $\mathcal{S}_p K_k(\cdot, \theta)$ denotes applying $\mathcal{S}_p$ to each row of $K_k(\cdot, \theta)$.
SVGD has found great success in approximating unconstrained target distributions $p$ but breaks down for constrained targets and fails to exploit non-Euclidean geometry.
Our goal is to develop new particle evolution samplers suitable for constrained domains and non-Euclidean geometries.

\subsection{Mirrored dynamics}

SVGD encounters two difficulties when faced with a constrained support. 
First, the SVGD updates can push the random variable $\theta_t$ outside of its support $\Theta$, rendering all future updates undefined. %
Second, Stein's identity \cref{eq:stein-identity} often fails to hold for candidate directions in  $\mathcal{B}_{\cH^d}$ (cf. \cref{ex:dirichlet}).
When this occurs, SVGD need not converge to $p$ as $p$ is not a stationary point of its dynamics~(i.e., $\deriv{}{t}\KL{q_t}{p}|_{q_t=p} \neq 0$ when $q_t = p$).
Inspired by mirror descent~\citep{nemirovskij1983problem}, we consider the following \emph{mirrored} dynamics 
\begin{talign}
\label{eq:mirrored-dynamics}
\theta_t = \nabla\psi^*(\eta_t)
\qtext{for}
d \eta_t = g_t(\theta_t) dt, 
\quad \text{or, equivalently,}\quad d \theta_t = \nabla^2 \psi(\theta_t)^{-1} g_t(\theta_t) dt,
\end{talign}
where $g_t: \Theta \to \mathbb{R}^d$ now represents the update direction in $\eta$ space.
The inverse mirror map $\nabla\psi^*$ automatically guarantees that $\theta_t$ belongs to the constrained domain $\Theta$. 
Since $\psi$ is strongly convex and $\nabla^2\psi^{-1}$ is bounded Lipschitz, from \Cref{thm:kl-change-ls} it follows for any bounded Lipschitz $g_t$ that
\begin{talign} \label{eq:mirror-kl-rate}
	\frac{d}{dt}\KL{q_t}{p}  
	= -\mathbb{E}_{\theta_t\sim q_t}[(\mathcal{M}_{p,\psi} g_t)(\theta_t)],
\end{talign}
where $\mathcal{M}_{p,\psi}$ is the mirrored Stein operator \cref{eq:mirror-stein-op}. 
In the following sections, we propose three new deterministic sampling algorithms by seeking the optimal direction $g_t$ that minimizes \cref{eq:mirror-kl-rate} over different function classes. 
\Cref{thm:opt-mirror-rkhs} (proved in \cref{app:proof-opt-mirror-rkhs}) forms the basis of our analysis.

\begin{theorem}[Optimal mirror updates in RKHS] \label{thm:opt-mirror-rkhs}
    Suppose $(\theta_t)_{t \geq 0}$ follows the mirrored dynamics \cref{eq:mirrored-dynamics}.
	Let $\mathcal{H}_K$ denote the RKHS of a matrix-valued kernel $K: \Theta \times \Theta \to \mathbb{S}^{d\times d}$~\citep{micchelli2005learning}. 
	Then, the optimal direction of $g_t$ that minimizes \cref{eq:mirror-kl-rate} in the norm ball $\mathcal{B}_{\cH_K} \defeq \{g: \norm{g}_{\cH_K} \leq 1\}$ is
	\begin{talign} \label{eq:opt-mirror-rkhs}
		g_t^* \propto g^*_{q_t, K} 
		\defeq \mathbb{E}_{\theta_t \sim q_t}[\mathcal{M}_{p,\psi} K(\cdot, \theta_t)], 
	\end{talign}
	where $\mathcal{M}_{p,\psi} K(\cdot, \theta)$ applies $\mathcal{M}_{p,\psi}$ \cref{eq:mirror-stein-op} to each row of the matrix-valued function $K_\theta = K(\cdot, \theta)$.
\end{theorem}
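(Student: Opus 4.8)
The plan is to recognize \Cref{thm:opt-mirror-rkhs} as the matrix-valued, mirrored analogue of the scalar SVGD optimality statement \eqref{eq:svgd} and to prove it by the same two-step recipe: represent the objective as a single RKHS inner product, then apply Cauchy--Schwarz. Since $\nabla^2\psi^{-1}$ is bounded Lipschitz and the KL rate is given by \eqref{eq:mirror-kl-rate}, minimizing $\frac{d}{dt}\KL{q_t}{p}$ over $g\in\mathcal{B}_{\cH_K}$ is equivalent to maximizing the functional $L(g)\defeq\mathbb{E}_{\theta_t\sim q_t}[(\mathcal{M}_{p,\psi}g)(\theta_t)]$ over the unit ball. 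The entire argument therefore reduces to exhibiting a fixed element $g^*_{q_t,K}\in\cH_K$ with $L(g)=\langle g,g^*_{q_t,K}\rangle_{\cH_K}$ for all $g\in\cH_K$.

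First I would establish a pointwise representer identity $(\mathcal{M}_{p,\psi}g)(\theta)=\langle g,(\mathcal{M}_{p,\psi}K)(\cdot,\theta)\rangle_{\cH_K}$ at each interior $\theta$. Writing $M(\theta)\defeq\nabla^2\psi(\theta)^{-1}$ and expanding the divergence term of \eqref{eq:mirror-stein-op} by the product rule gives three contributions, $(\mathcal{M}_{p,\psi}g)(\theta)=g(\theta)^\top M(\theta)\nabla\log p(\theta)+g(\theta)^\top(\nabla\cdot M(\theta))+\sum_{i,j}M_{ij}(\theta)\,\partial_i g_j(\theta)$. The first two are zeroth-order in $g$ and are handled by the matrix-valued reproducing property $\langle g,K(\cdot,\theta)c\rangle_{\cH_K}=g(\theta)^\top c$, taking $c=M(\theta)\nabla\log p(\theta)$ and $c=\nabla\cdot M(\theta)$ respectively. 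The third is first-order and requires the differentiated reproducing property $\partial_i g_j(\theta)=\langle g,(\partial^{(2)}_i K)(\cdot,\theta)e_j\rangle_{\cH_K}$, where $\partial^{(2)}_i$ differentiates $K$ in the $i$-th coordinate of its second argument. Summing the three inner products and using bilinearity identifies the representer as exactly $(\mathcal{M}_{p,\psi}K)(\cdot,\theta)$, i.e.\ $\mathcal{M}_{p,\psi}$ applied to $K$ in its second argument row-wise, matching the definition in \eqref{eq:opt-mirror-rkhs}.

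Taking expectations over $\theta_t\sim q_t$ and pulling the expectation inside the inner product (as a Bochner integral, valid once $\theta\mapsto\norm{(\mathcal{M}_{p,\psi}K)(\cdot,\theta)}_{\cH_K}$ is $q_t$-integrable) yields $L(g)=\langle g,\mathbb{E}_{\theta_t\sim q_t}[(\mathcal{M}_{p,\psi}K)(\cdot,\theta_t)]\rangle_{\cH_K}=\langle g,g^*_{q_t,K}\rangle_{\cH_K}$. Cauchy--Schwarz then gives $L(g)\le\norm{g}_{\cH_K}\norm{g^*_{q_t,K}}_{\cH_K}$, with equality over $\mathcal{B}_{\cH_K}$ attained at $g_t^*=g^*_{q_t,K}/\norm{g^*_{q_t,K}}_{\cH_K}$ whenever $g^*_{q_t,K}\ne0$; hence $g_t^*\propto g^*_{q_t,K}$ as claimed.

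The main obstacle is justifying the first-order step: one must verify that $(\partial^{(2)}_i K)(\cdot,\theta)e_j$ indeed lies in $\cH_K$ and that $g\mapsto\partial_i g_j(\theta)$ is a bounded linear functional represented by it, which demands enough smoothness of the matrix kernel $K$ (a $C^{1,1}$/trace-class-derivative condition) so that differentiation commutes with the $\cH_K$ inner product. This, together with the standing bounded-Lipschitz regularity of $M=\nabla^2\psi^{-1}$ and the integrability of $M\nabla\log p$ and $\nabla\cdot M$ from \Cref{prop:mirror-stein-identity}, also secures the Bochner integrability needed for the expectation interchange; the remaining manipulations are routine.
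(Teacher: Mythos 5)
Your proposal is correct and follows essentially the same route as the paper's proof: both express $\mathbb{E}_{q_t}[(\mathcal{M}_{p,\psi}g)(\theta_t)]$ as $\langle g, \mathbb{E}_{q_t}[\mathcal{M}_{p,\psi}K(\cdot,\theta_t)]\rangle_{\cH_K}$ via the matrix-valued reproducing property and its differentiated form, then conclude by Cauchy--Schwarz over the unit ball (your product-rule split of the divergence term versus the paper's direct differentiation of $m(\theta)^\top\nabla^2\psi(\theta)^{-1}e_i$ is an immaterial reorganization, since the pieces recombine to the same representer). Your explicit attention to the Bochner integrability and kernel-smoothness conditions needed for the interchange and for the differentiated reproducing property is a point of care the paper's proof leaves implicit.
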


\subsection{Mirrored Stein Variational Gradient Descent}

\begin{algorithm}[t] %
	\caption{Mirrored Stein Variational Gradient Descent \& Stein Variational Mirror Descent}
	\label{alg:constrained}
	\begin{algorithmic}
		\State {\bf Input:} density $p$ on $\Theta$, 
		kernel $k$, 
		mirror function 
		$\psi$, particles $(\theta_0^i)_{i=1}^n \subset \Theta$,
		step sizes $(\eps_t)_{t=1}^T$
		\State {\bf Init:} $\eta_0^i \gets \nabla\psi(\theta_0^i) \text{ for } i \in [n]$
		\For{$t=0:T$}
		\State {\bf if} SVMD {\bf then} $K_t \gets K_{\psi, t}$ \cref{eq:kernel-svmd} {\bf else} $K_t \gets kI$ (MSVGD)
		\State $\text{for } i \in [n], \eta_{t+1}^i \gets \eta_t^i + \epsilon_t \frac{1}{n}\sum_{j=1}^n \mathcal{M}_{p,\psi} K_t(\theta_t^i, \theta_t^j)$
		\quad(\text{for $\mathcal{M}_{p,\psi} K_t(\cdot,\theta)$ defined in \cref{thm:opt-mirror-rkhs}})
		\State $\text{for } i \in [n], \theta_{t+1}^i \gets \nabla\psi^*(\eta_{t+1}^i)$
		\EndFor
		\State {\bf return} $\{\theta^i_{T+1}\}_{i=1}^{n}$.
	\end{algorithmic}
\end{algorithm}

Following the pattern of SVGD, one can choose the %
$K$ of \cref{thm:opt-mirror-rkhs} to be $K_k(\theta, \theta') = k(\theta, \theta') I$, where $k$ is any scalar-valued kernel. %
In this case, the resulting update 
$g^*_{q_t, K_k}(\cdot) = \mathbb{E}_{\theta_t \sim q_t}[\mathcal{M}_{p,\psi} K_k(\cdot, \theta_t)]$
is equivalent to running SVGD in the dual $\eta$ space before mapping back to $\Theta$.

\begin{theorem}[Mirrored SVGD updates] \label{thm:msvgd-is-svgd}
    In the setting of \cref{thm:opt-mirror-rkhs}, 
	let $k_{\psi}(\eta, \eta') = k(\nabla \psi^*(\eta), \nabla \psi^*(\eta'))$, $p_{H}(\eta) = p(\nabla\psi^*(\eta))\cdot|\det \nabla^2\psi^*(\eta)|$ denote the density of $\eta = \grad\psi(\theta)$ when $\theta\sim p$, and $q_{t, H}$ denote the distribution of $\eta_t$ under the mirrored dynamics \cref{eq:mirrored-dynamics}.
	If $K_k = kI$, 
	\begin{equation} \label{eq:msvgd-updates}
		g^*_{q_t, K_k}(\theta') = \mathbb{E}_{\eta_t \sim q_{t, H}}[k_{\psi}(\eta_t, \eta')\nabla \log p_{H}(\eta_t) + \nabla_{\eta_t} k_{\psi}(\eta_t, \eta')]\quad \forall \theta' \in \Theta, \eta' = \nabla\psi(\theta').
	\end{equation}
\end{theorem}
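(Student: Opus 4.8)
The plan is to show that the mirrored update $g^*_{q_t, K_k}(\theta')$ coincides, term by term, with the ordinary SVGD update \cref{eq:svgd} built from the Langevin Stein operator of the dual density $p_H$ and dual kernel $k_\psi$ in $\eta$-space. Writing $A(\theta) \defeq \nabla^2\psi(\theta)^{-1}$ and expanding the definition of $\mathcal{M}_{p,\psi}$ \cref{eq:mirror-stein-op} applied to each row of $K_k(\cdot,\theta_t) = k(\cdot,\theta_t)I$ (with the operator acting on the integrated particle variable $\theta_t$, exactly as in \cref{eq:svgd}), and expanding the matrix divergence $\nabla\cdot(k A\,e_i) = [A\nabla k]_i + k(\nabla\cdot A)_i$ via the product rule and the symmetry of $A$, I first obtain
\[
g^*_{q_t, K_k}(\theta') = \E_{\theta_t \sim q_t}\Big[k(\theta_t,\theta')\,A(\theta_t)\nabla\log p(\theta_t) + A(\theta_t)\nabla_{\theta_t}k(\theta_t,\theta') + k(\theta_t,\theta')\,(\nabla\cdot A)(\theta_t)\Big],
\]
where $(\nabla\cdot A)_i = \sum_j \partial_{\theta_j} A_{ij}$ is the row-wise divergence of the matrix field $A$. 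The task is then to match this against the claimed right-hand side after a change of variables.

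First I would use that $q_{t,H}$ is the law of $\eta_t = \nabla\psi(\theta_t)$ when $\theta_t \sim q_t$, so that $\E_{\eta_t\sim q_{t,H}}[F(\eta_t)] = \E_{\theta_t\sim q_t}[F(\nabla\psi(\theta_t))]$ for any $F$, which turns the target expectation into one over $q_t$. Using $\theta = \nabla\psi^*(\eta)$, the Jacobian identity $\partial\theta/\partial\eta = \nabla^2\psi^*(\eta) = A(\theta)$ (obtained by differentiating $\nabla\psi^*(\nabla\psi(\theta)) = \theta$), and the chain rule, I would then compute the three dual-space ingredients: $k_\psi(\eta_t,\eta') = k(\theta_t,\theta')$; $\nabla_{\eta_t}k_\psi(\eta_t,\eta') = A(\theta_t)\nabla_{\theta_t}k(\theta_t,\theta')$; and, since $p_H(\eta) = p(\nabla\psi^*(\eta))\det\nabla^2\psi^*(\eta)$ with $\det\nabla^2\psi^* > 0$ by strong convexity of $\psi$,
\[
\nabla_{\eta_t}\log p_H(\eta_t) = A(\theta_t)\nabla\log p(\theta_t) + \nabla_{\eta_t}\log\det\nabla^2\psi^*(\eta_t).
\]
Substituting these, the dual-space score contribution $k_\psi\nabla\log p_H$ splits into $k(\theta_t,\theta')A(\theta_t)\nabla\log p(\theta_t)$ plus $k(\theta_t,\theta')\nabla_{\eta_t}\log\det\nabla^2\psi^*(\eta_t)$, while the dual-space kernel-gradient contribution is exactly $A(\theta_t)\nabla_{\theta_t}k(\theta_t,\theta')$. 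The first of these matches the first term of the mirrored update and the kernel-gradient piece matches the second, so the entire theorem reduces to matching the remaining pair $k(\theta_t,\theta')(\nabla\cdot A)(\theta_t)$ with $k(\theta_t,\theta')\nabla_{\eta_t}\log\det\nabla^2\psi^*(\eta_t)$.

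Thus everything collapses to the single key identity
\[
(\nabla\cdot A)(\theta) = \nabla_\eta\log\det\nabla^2\psi^*(\eta)\big|_{\eta = \nabla\psi(\theta)}, \qquad A = (\nabla^2\psi)^{-1},
\]
i.e. the row-wise divergence of the inverse Hessian in primal coordinates equals the dual-space gradient of $\log\det\nabla^2\psi^*$. To establish it I would apply Jacobi's formula $\partial_{\eta_i}\log\det\nabla^2\psi^* = \tr(A^{-1}\partial_{\eta_i}A)$ (valid since $\nabla^2\psi^* = A$), convert $\eta$-derivatives to primal derivatives through $\partial_{\eta_i} = \sum_m A_{mi}\partial_{\theta_m}$, and apply the inverse-derivative rule $\partial_{\theta_l}A = -A(\partial_{\theta_l}\nabla^2\psi)A$ so that both sides are expressed through the fully symmetric third-derivative tensor $\partial_{\theta_k}\partial_{\theta_l}\partial_{\theta_n}\psi$; a short index computation, exploiting the symmetry of $A$ and of that tensor, then shows the two index sums agree. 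I expect this matrix-calculus identity to be the main obstacle: the change of variables and the chain-rule computations are routine, whereas reconciling the primal divergence of $A$ with the dual log-determinant gradient is the genuine content. As a sanity check I would verify the scalar case, where the identity reduces to $\tfrac{d}{d\theta}(1/\psi'') = (\psi^*)'''/(\psi^*)''$ and follows immediately from $(\psi^*)''(\eta) = 1/\psi''(\theta)$. Once the identity is in hand, the term-by-term match completes the proof.
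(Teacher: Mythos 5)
Your proposal is correct and follows essentially the same route as the paper's proof: your key identity $(\nabla\cdot A)(\theta) = \nabla_\eta\log\det\nabla^2\psi^*(\eta)\big|_{\eta=\nabla\psi(\theta)}$ is exactly the paper's \cref{lem:div-logdet} restated in dual coordinates (via $\log\det\nabla^2\psi^* = -\log\det\nabla^2\psi$ and $\nabla_\eta = A\nabla_\theta$), and the paper establishes it with the same Jacobi-formula/inverse-derivative/third-derivative-symmetry index computation you outline. The only organizational differences are that the paper proves the change-of-variables step once for general matrix-valued kernels (\cref{lem:eta-svgd}) and then specializes to $K_k = kI$, pushing the primal expression into dual coordinates, whereas you specialize immediately and pull the dual expression back to primal coordinates to match terms.
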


The proof is in \cref{app:proof-msvgd}. 
By discretizing the dynamics $d\eta_t = g^*_{q_t, K_k}(\theta_t) dt$ and initializing with any particle approximation $q_0 = \frac{1}{n}\sum_{i=1}^n \delta_{\theta_0^i}$, we obtain \emph{Mirrored SVGD~(MSVGD)}, our first algorithm for sampling in constrained domains.
The details are summarized in \Cref{alg:constrained}. 

When only a single particle is used ($n=1$) and the differentiable input kernel satisfies $\nabla k(\theta, \theta) = 0$, 
the MSVGD update \cref{eq:msvgd-updates} reduces to gradient descent on 
$-\log p_H(\eta)$.
Note however that the modes of the mirrored density $p_H(\eta)$ need not match those of the target density $p(\theta)$ (see the examples in \Cref{app:mode-mismatch}). 
Since we are primarily interested in the $\theta$-space density, it is natural to ask whether there exists a mirrored dynamics that reduces to finding the mode of $p(\theta)$ in this limiting case. 
In the next section, we give an answer to this question by designing an adaptive reproducing kernel that yields a mirror descent-like update.

\subsection{Stein Variational Mirror Descent}
\label{sec:svmd}

Our second sampling algorithm for constrained problems is called \emph{Stein Variational Mirror Descent~(SVMD)}. 
We start by introducing a new matrix-valued kernel that incorporates the metric $\nabla^2\psi$ and evolves with the distribution $q_t$. 

\begin{definition}[Kernels for SVMD]
    Given a continuous scalar-valued kernel $k$, consider the Mercer representation\footnote{See \Cref{app:bg-rkhs} for background on Mercer representations in non-compact domains.} $k(\theta, \theta') = \sum_{i\geq 1} \lambda_{t,i} u_{t,i}(\theta) u_{t,i}(\theta')$ w.r.t.\  $q_t$,
where $u_{t,i}$ is an eigenfunction satisfying
\begin{talign} \label{eq:eigen-integral}
	\mathbb{E}_{\theta_t \sim q_t} [k(\theta, \theta_t) u_{t,i}(\theta_t)] = \lambda_{t,i} u_{t,i}(\theta).
\end{talign}
	For $k^{1/2}_t(\theta, \theta') \defeq \sum_{i \geq 1}\lambda_{t,i}^{1/2}u_{t,i}(\theta)u_{t,i}(\theta')$, %
	we define the adaptive SVMD kernel at time $t$, 
	\begin{talign} \label{eq:kernel-svmd}
		K_{\psi, t}(\theta, \theta') \defeq  \mathbb{E}_{\theta_t\sim q_t}[ k_t^{1/2}(\theta, \theta_t) \nabla^2\psi(\theta_t)k_t^{1/2}(\theta_t, \theta')].
	\end{talign}
\end{definition}

By \cref{thm:opt-mirror-rkhs}, the optimal update direction for the SVMD kernel ball is 
$%
	g^*_{q_t, K_{\psi, t}} \!=\! \mathbb{E}_{q_t}[\mathcal{M}_{p,\psi} K_{\psi, t}(\cdot, \theta_t)].
$
We obtain the SVMD algorithm (summarized in \Cref{alg:constrained})  by discretizing $d\eta_t = g^*_{q_t, K_{\psi, t}}(\theta_t) dt$ and  initializing with %
$q_0 = \frac{1}{n}\sum_{i=1}^n \delta_{\theta_0^i}$.
Because of the discrete representation of $q_t$, $K_{\psi, t}$ takes the form
\begin{talign}
	K_{\psi, t}(\theta,\theta') &= \sum_{i=1}^n\sum_{j=1}^n  \lambda_{t,i}^{1/2}\lambda_{t,j}^{1/2}	u_{t,i}(\theta)u_{t,j}(\theta')\Gamma_{t,ij}, \\
	\Gamma_{t,ij} &= \frac{1}{n} \sum_{\ell=1}^n u_{t,i}(\theta_t^\ell) u_{t,j}(\theta_t^\ell) \nabla^2\psi(\theta_t^\ell).
\end{talign}
Here both $\lambda_{t,j}$ and $u_{t,j}(\theta_t^i)$  can be computed by solving a matrix eigenvalue problem involving the particle set $\{\theta_t^i\}_{i=1}^n$:
$B_t v_{t,j} = n \lambda_{t,j} v_{t,j},$
where $B_t = (k(\theta_t^i, \theta_t^j))_{i,j=1}^n \in \mathbb{R}^{n\times n}$ is the Gram matrix of pairwise kernel evaluations at particle locations, and the $i$-th element of $v_{t,j}$ is $u_{t,j}(\theta_t^i)$.
To compute $\grad_\theta  K_{\psi, t}(\theta,\theta')$, we differentiate both sides of \eqref{eq:eigen-integral} to find that
$
	\grad u_{t,j}(\theta) = \frac{1}{\lambda_{t,j}}\sum_{i = 1}^n v_{t,j,i}\grad_{\theta} k(\theta, \theta_t^i).
$
This technique was used in \citet{shi2018spectral} to estimate gradients of eigenfunctions w.r.t.\ a continuous $q$. 
Following their recommendations, we truncate the sum at the $J$-th largest eigenvalues according to a threshold ($\tau \geq \sum_{j=1}^J \lambda_{t,j} / \sum_{j=1}^n \lambda_{t,j}$) to ensure numerical stability. 

Notably, SVMD differs from MSVGD only in its choice of kernel, but, whenever %
$\nabla k(\theta, \theta) = 0$, this change is sufficient to exactly recover mirror descent when $n=1$.
\begin{proposition}[Single-particle SVMD is mirror descent] \label{prop:svmd-to-md}
If $n=1$, then one step of SVMD becomes
\begin{talign}
\label{eq:one-particle-svmd}
	\eta_{t+1} = \eta_t + \epsilon_t \left(k(\theta_t, \theta_t) \nabla\log p(\theta_t) + \nabla k(\theta_t, \theta_t)\right), \quad \theta_{t+1} = \nabla\psi^*(\eta_{t+1}).
\end{talign}
\end{proposition}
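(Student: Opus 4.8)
The plan is to reduce the adaptive SVMD kernel \cref{eq:kernel-svmd} to its single-particle form and then substitute it into the $\eta$-update of \cref{alg:constrained}, simplifying via the mirrored Stein operator \cref{eq:mirror-stein-op}. First I would specialize the Mercer machinery to $n=1$, where $q_t = \delta_{\theta_t}$ and the Gram matrix $B_t = k(\theta_t,\theta_t)$ is a scalar. The eigenvalue problem \cref{eq:eigen-integral} then has a single solution: with the $L^2(q_t)$-normalization $u_{t,1}(\theta_t) = 1$ one obtains $\lambda_{t,1} = k(\theta_t,\theta_t)$ and $u_{t,1}(\theta) = k(\theta,\theta_t)/k(\theta_t,\theta_t)$, so that $\Gamma_{t,11} = u_{t,1}(\theta_t)^2 \nabla^2\psi(\theta_t) = \nabla^2\psi(\theta_t)$. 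Assembling \cref{eq:kernel-svmd} yields the rank-one matrix kernel $K_{\psi,t}(\theta,\theta') = \frac{k(\theta,\theta_t)k(\theta',\theta_t)}{k(\theta_t,\theta_t)}\nabla^2\psi(\theta_t)$, and in particular $K_{\psi,t}(\cdot,\theta_t) = k(\cdot,\theta_t)\nabla^2\psi(\theta_t)$, a scalar kernel multiplying the constant metric $\nabla^2\psi(\theta_t)$.

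Next I would evaluate $\eta_{t+1} = \eta_t + \epsilon_t \mathcal{M}_{p,\psi}K_{\psi,t}(\theta_t,\theta_t)$ by applying $\mathcal{M}_{p,\psi}$ to each row of $K_{\psi,t}(\cdot,\theta_t)$, as prescribed in \cref{thm:opt-mirror-rkhs}. Writing $M = \nabla^2\psi(\theta_t)$, the $r$-th row is the vector field $g^{(r)}(\theta) = k(\theta,\theta_t) M e_r$ (with $e_r$ the $r$-th standard basis vector), and the key algebraic fact is the cancellation of the metric at the base point, $\nabla^2\psi(\theta_t)^{-1} M = I$, so that $\nabla^2\psi(\theta_t)^{-1} M e_r = e_r$. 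Applying the drift term of \cref{eq:mirror-stein-op} and collecting rows then gives $k(\theta_t,\theta_t)\nabla\log p(\theta_t)$: the metric factor built into $K_{\psi,t}$ exactly undoes the inverse metric in the operator, replacing the natural-gradient direction $\nabla^2\psi^{-1}\nabla\log p$ by the plain gradient $\nabla\log p$. Expanding the divergence term by the product rule, the part in which the derivative falls on the scalar kernel contributes $(\nabla k(\theta_t,\theta_t))^\top e_r$ per row, assembling to $\nabla k(\theta_t,\theta_t)$.

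The main obstacle I anticipate is the remaining piece of $\nabla\cdot(\nabla^2\psi(\theta)^{-1} g^{(r)}(\theta))$ in which the derivative falls on the metric $\nabla^2\psi(\theta)^{-1}$ rather than on $k$; this is the delicate step, since it is exactly here that one must exploit the sandwiched structure of the SVMD kernel \cref{eq:kernel-svmd} (the metric inserted at the particle location) to verify that the curvature contribution does not introduce terms beyond those stated. Once the row-wise computation is assembled into $\eta_{t+1} = \eta_t + \epsilon_t\big(k(\theta_t,\theta_t)\nabla\log p(\theta_t) + \nabla k(\theta_t,\theta_t)\big)$, the map $\theta_{t+1} = \nabla\psi^*(\eta_{t+1})$ is read directly off \cref{alg:constrained}. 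Comparing with \cref{eq:mirror-descent}, whenever $\nabla k(\theta,\theta) = 0$ the update collapses to $\eta_{t+1} = \eta_t + \epsilon_t k(\theta_t,\theta_t)\nabla\log p(\theta_t)$, which is precisely a mirror-ascent step on $\log p$ with step size $\epsilon_t k(\theta_t,\theta_t)$, completing the identification of single-particle SVMD with mirror descent.
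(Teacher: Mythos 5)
Your reduction of the Mercer machinery at $n=1$ is correct and matches the paper's (one-line) proof: $\lambda_{t,1}=k(\theta_t,\theta_t)$, $u_{t,1}(\theta_t)=1$ with the \Nystrom extension $u_{t,1}(\theta)=k(\theta,\theta_t)/k(\theta_t,\theta_t)$, hence $K_{\psi,t}(\cdot,\theta_t)=k(\cdot,\theta_t)\nabla^2\psi(\theta_t)$, and the drift term of \cref{eq:mirror-stein-op} indeed yields $k(\theta_t,\theta_t)\nabla\log p(\theta_t)$ via $\nabla^2\psi(\theta_t)^{-1}\nabla^2\psi(\theta_t)=I$. The genuine gap is exactly the step you defer as ``delicate'': the piece of $\nabla\cdot(\nabla^2\psi(\theta)^{-1}g^{(r)}(\theta))$ in which the derivative falls on the metric does \emph{not} vanish, and the sandwiched structure of \cref{eq:kernel-svmd} cannot make it vanish, because the sandwich pins the metric factor inside the kernel at the fixed particle $\theta_t$ while the factor $\nabla^2\psi(\theta)^{-1}$ supplied by the operator still varies with $\theta$. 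Applying the paper's own \cref{lem:div-logdet} to $g^{(r)}(\theta)=k(\theta,\theta_t)\nabla^2\psi(\theta_t)e_r$ and evaluating at $\theta=\theta_t$ gives
\begin{talign}
\nabla\cdot\left(\nabla^2\psi(\theta)^{-1}g^{(r)}(\theta)\right)\Big|_{\theta=\theta_t}
= e_r^\top\left(\nabla k(\theta_t,\theta_t)-k(\theta_t,\theta_t)\,\nabla\log\det\nabla^2\psi(\theta_t)\right),
\end{talign}
so assembling the rows with the drift term produces
\begin{talign}
\eta_{t+1}=\eta_t+\epsilon_t\left(k(\theta_t,\theta_t)\left[\nabla\log p(\theta_t)-\nabla\log\det\nabla^2\psi(\theta_t)\right]+\nabla k(\theta_t,\theta_t)\right),
\end{talign}
which carries a curvature term absent from \cref{eq:one-particle-svmd}.

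This term is generically nonzero for precisely the mirror maps of interest: with $\psi(\theta)=\sum_j(\theta_j\log\theta_j-\theta_j)$ on the nonnegative orthant (used in \cref{sec:selective}), $\nabla^2\psi(\theta)=\diag(1/\theta_j)$ and $\nabla\log\det\nabla^2\psi(\theta)$ is the vector with entries $-1/\theta_j$, so for a kernel with $k(\theta,\theta)\equiv 1$ and $\nabla k(\theta,\theta)=0$ your outline predicts $\eta_{t+1}=\eta_t+\epsilon_t\nabla\log p(\theta_t)$ while the computation above gives an additional $+\epsilon_t/\theta_{t,j}$ in each coordinate. Equivalently, since $\log p(\theta)-\log\det\nabla^2\psi(\theta)=\log p_H(\nabla\psi(\theta))$ for $p_H$ as in \cref{thm:msvgd-is-svgd}, the single-particle update your argument actually produces is mirror ascent on the pulled-back dual density $\log p_H\circ\nabla\psi$, not on $\log p$. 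So the proof cannot be closed as outlined: you would need either the extra hypothesis $\nabla\log\det\nabla^2\psi(\theta_t)=0$ (e.g., quadratic $\psi$, where the issue is vacuous) or a cancellation argument that you do not supply. Note that the paper's proof does not supply it either --- it computes only $K_{\psi,t}(\theta_t,\theta_t)=k(\theta_t,\theta_t)\nabla^2\psi(\theta_t)$, which accounts for the drift term, and never evaluates the divergence term --- so your row-by-row computation, carried out honestly, exposes rather than resolves the obstruction you flagged.
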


\subsection{Stein Variational Natural Gradient}
\label{sec:svng}
The fact that SVMD recovers mirror descent as a special case is not only of relevance in constrained problems. 
We next exploit the connection between MD and natural gradient descent discussed in \cref{sec:md} to design a new sampler -- \emph{Stein Variational Natural Gradient (SVNG)} -- that more efficiently approximates unconstrained targets.
The idea is to replace the Hessian
$\nabla^2\psi(\cdot)$ in the SVMD dynamics $d\theta_t = \nabla^2\psi(\theta_t)^{-1}g^*_{q_t, K_{\psi, t}}(\theta_t)$ with a general metric tensor $G(\cdot)$.
The result is the Riemannian gradient flow 
\begin{talign} \label{eq:kernel-svng}
d\theta_t = G(\theta_t)^{-1}g^*_{q_t, K_{G, t}}(\theta_t) dt
\qtext{with}
	K_{G, t}(\theta, \theta') \defeq \mathbb{E}_{\theta_t\sim q_t}[ k^{1/2}(\theta, \theta_t) G(\theta_t)k^{1/2}(\theta_t, \theta')].
\end{talign}
Given any initial particle approximation $q_0 = \frac{1}{n}\sum_{i=1}^n \delta_{\theta_0^i}$, 
we discretize these dynamics 
to obtain the unconstrained SVNG sampler of \Cref{alg:svng} in the appendix. 
SVNG can be seen as an instance of MatSVGD~\citep{wang2019stein} with a new adaptive time-dependent kernel $G^{-1}(\theta) K_{G,t}(\theta, \theta') G^{-1}(\theta')$.
However, similar to \Cref{prop:svmd-to-md} and
unlike the heuristic kernels of~\citet{wang2019stein}, SVNG reduces to natural gradient ascent for finding the mode of $p(\theta)$ when $n=1$.
SVNG is well-suited to Bayesian inference problems where the target is a posterior distribution $p(\theta) \propto \pi(\theta)\pi(y|\theta)$.
There, the metric tensor $G(\theta)$ can be set to the Fisher information matrix $\mathbb{E}_{\pi(y|\theta)}[\nabla\log \pi(y|\theta)\nabla\log \pi(y|\theta)^\top]$ of the data likelihood $\pi(y|\theta)$. 
Ample precedent from 
natural gradient variational inference~\citep{hoffman2013stochastic,khan2018fast} and Riemannian MCMC~\citep{patterson2013stochastic} suggests that encoding problem geometry in this manner often leads to more rapid convergence.

\section{Experiments}\label{sec:experiments}
We next conduct a series of simulated and real-data experiments to assess 
(1) distributional approximation on the simplex, 
(2) frequentist confidence interval construction for~(constrained) post-selection inference, and 
(3) large-scale posterior inference with non-Euclidean geometry.
To compare with standard SVGD on constrained domains and to prevent its particles from exiting the domain $\Theta$, we introduce a Euclidean projection onto $\Theta$ following each SVGD update. 
For SVMD, we need to solve an eigenvalue problem, which costs $O(n^3)$ time. %
In practice the number of particles used for particle evolution algorithms is relatively small, even for SVGD, due to the $O(n^2)$ cost of updates. 
We have produced a practical SVMD implementation that is computationally competitive with MSVGD and SVGD for standard particle counts like $n=50$ (used in all experiments).

\subsection{Approximation quality on the simplex}
\label{sec:simplex}

\begin{figure}[t] \vskip -0.3in
	\centering
	\begin{subfigure}[t]{0.342\textwidth}
		\includegraphics[width=\textwidth]{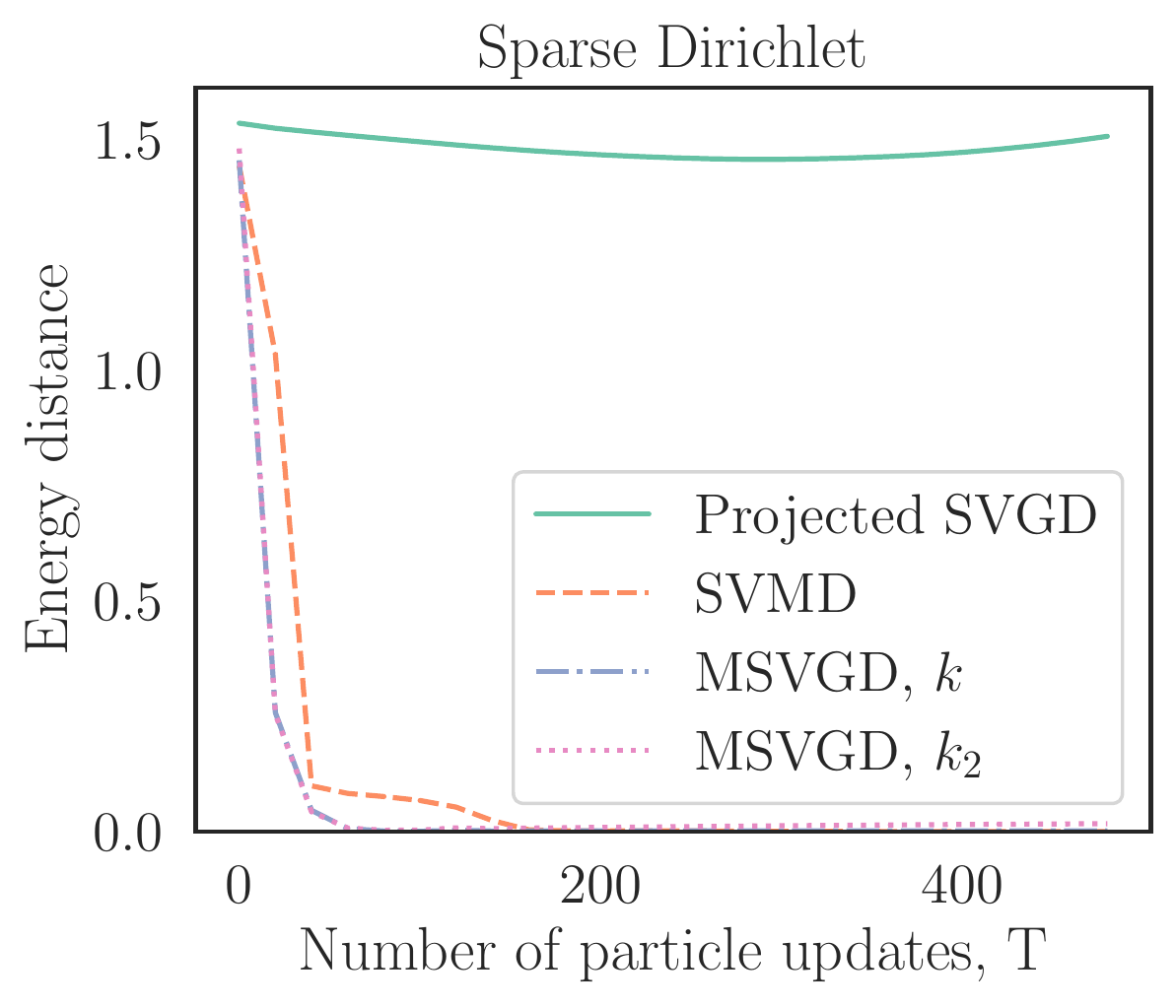}
	\end{subfigure}
	\begin{subfigure}[t]{0.322\textwidth}
		\includegraphics[width=\textwidth]{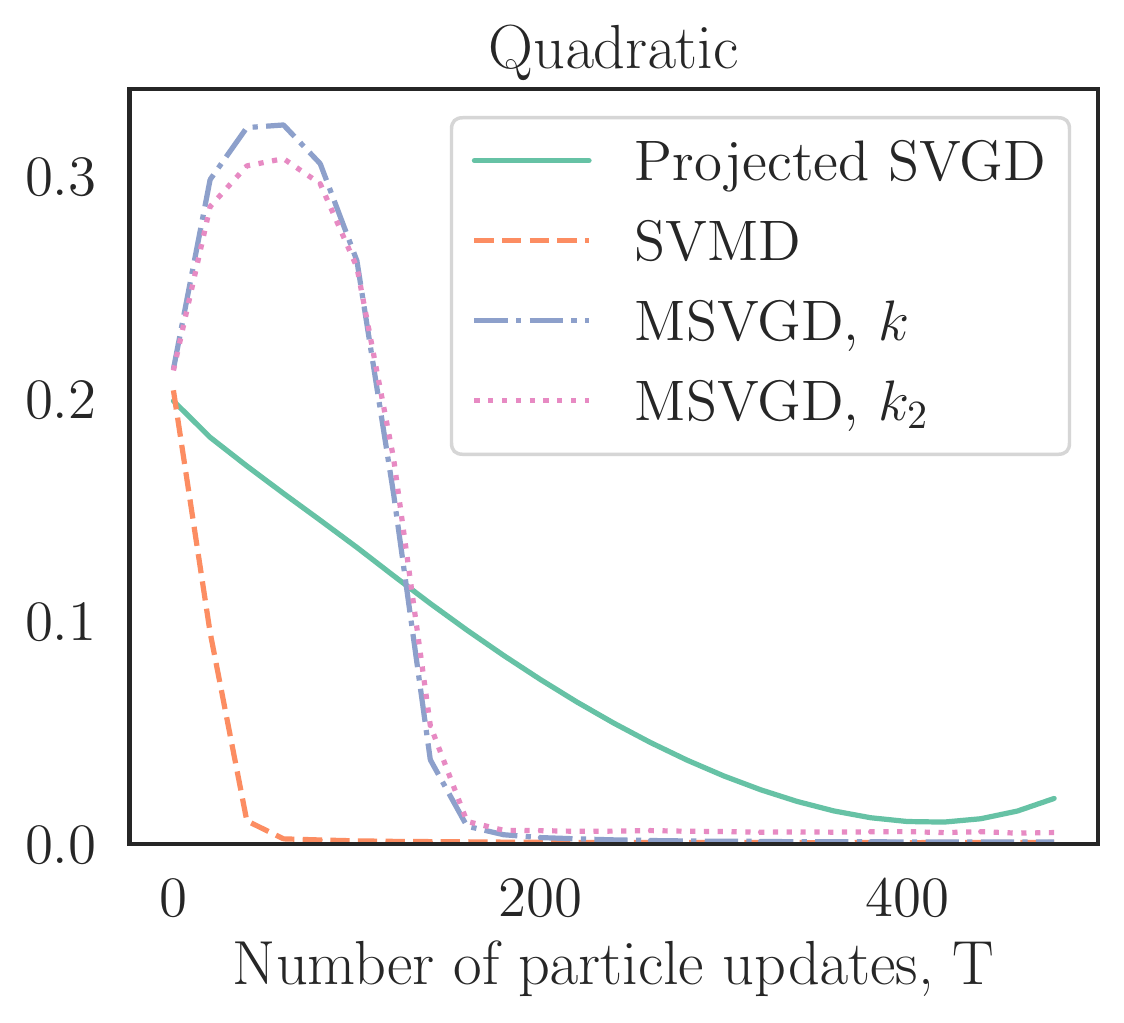}
	\end{subfigure}
	\caption{
	Quality of $50$-particle approximations to $20$-dimensional  distributions on the simplex after $T$ particle updates. (Left) Sparse Dirichlet posterior of~\citet{patterson2013stochastic}. (Right) Quadratic simplex target of~\citet{ahn2020efficient}.
	Details of the target distributions are in \Cref{app:exp-simplex}.
	}
	\label{fig:simplex-test}
\end{figure}

We first measure distributional approximation quality using two $20$-dimensional simplex-constrained targets: %
the sparse Dirchlet posterior of \citet{patterson2013stochastic} extended to $20$ dimensions
and the quadratic simplex target of \citet{ahn2020efficient}.
The Dirichlet target mimics the multimodal sparse conditionals that arise in latent Dirichlet allocation~\citep{blei2003latent} but induces a log concave density in $\eta$ space, while the quadratic is log-concave in $\theta$ space.
In \cref{fig:simplex-test}, we compare the quality of MSVGD, SVMD, and projected SVGD with $50$ particles
and inverse multiquadric 
kernel $k$ \citep{gorham2017measuring} 
by computing the energy distance~\citep{szekely2013energy} to a surrogate ground truth sample of size $1000$ (drawn i.i.d.\ or, in the quadratic case, from the No-U-Turn Sampler~\citep{hoffman2014no}).
We also compare to MSVGD with $k_2(\theta, \theta') = k(\nabla\psi(\theta), \nabla\psi(\theta'))$, a choice which corresponds to running SVGD in the dual space with kernel $k$ by \cref{thm:msvgd-is-svgd} and which ensures the convergence of MSVGD to $p$ by the upcoming \cref{thm:large-sample,thm:descent-lem,thm:type-1-control}.

In the quadratic case, SVMD is favored over MSVGD as it is able to exploit the log-concavity of $p(\theta)$.
In contrast, for the multimodal sparse Dirichlet with $p(\theta)$ unbounded near the boundary, MSVGD converges slightly more rapidly than SVMD by exploiting the log concave structure in $\eta$ space. 
This parallels the observation of \citet{hsieh2018mirrored} that LMC in the mirror space outperforms Riemannian LMC for sparse Dirichlet distributions. 
Projected SVGD fails to converge to the target in both cases and has particular difficulty in approximating the sparse Dirichlet target with unbounded density. 
MSVGD with $k$ and $k_2$ perform very similarly, but we observe that $k$ yields better approximation quality upon convergence. 
Therefore, we employ $k$ in the remaining MSVGD experiments.

\subsection{Confidence intervals for post-selection inference}
\label{sec:selective}

We next apply our algorithms to the constrained sampling problems that arise in post-selection inference~\citep{taylor2015statistical,lee2016exact}. 
Specifically, we consider the task of forming valid confidence intervals (CIs) for regression parameters selected using the randomized Lasso~\citep{tian2016magic} with data  
$X \in \mathbb{R}^{\tilde{n} \times p}$ and $y \in \mathbb{R}^{\tilde{n}}$ and user-generated randomness $w\in\reals^p$ from a log-concave distribution with density $g$. 
The randomized Lasso returns $\hat{\beta} \in \mathbb{R}^p$ with non-zero coefficients denoted by $\hat{\beta}_E$ and their signs by $s_E$.
It is common practice to report least squares CIs for $\beta_E$ by running a linear regression on the selected features $E$.
However, since $E$ is chosen based on the same data, the resulting CIs are often invalid.

Post-selection inference solves this problem by conditioning the inference on the knowledge of $E$ and $s_E$. %
To construct valid CIs, it suffices to approximate the \emph{selective distribution} with support 
$
	\{\hat{\beta}_{E}, u_{-E}:\;s_E \odot \hat{\beta}_E > 0,\; u_{-E} \in [-1, 1]^{p - |E|}\}
$
and density
\begin{align}
	\hat{g}(\hat{\beta}_E, u_{-E}) \propto g\big(X^\top y - \big(\begin{smallmatrix}
		X_E^\top X_E + \epsilon I_{|E|} \\
		X_{-E}^\top X_E
	\end{smallmatrix}\big)\hat{\beta}_{E} +  \lambda\big(\begin{smallmatrix}
	    s_E \\
		u_{-E}
	\end{smallmatrix}\big)\big).
\end{align}
In our experiments, we integrate out $u_{-E}$ analytically, following \citet{tian2016magic}, and reparameterize $\hat{\beta}_E$ as $s_E \odot |\hat{\beta}_E|$ %
to obtain a log-concave density of $|\hat{\beta}_E|$ supported on the nonnegative orthant with mirror function $\psi(\theta) = \sum_{j=1}^d (\theta_j \log \theta_j  - \theta_j)$. 
In \Cref{fig:sel2d} we show the example of a 2D selective distribution using samples drawn by NUTS~\citep{hoffman2014no}.
We also plot the results by projected SVGD, SVMD, and MSVGD in this example. 
Projected SVGD fails to approximate the target with many samples gathering at the truncation boundary, while the samples by MSVGD and SVMD closely resemble the truth.

We then compare our methods with the standard \texttt{norejection} MCMC approach of the \texttt{selectiveInference} R package~\citep{tibshirani2019selective} using the example simulation setting described in \citet{sepehri2017non} and a penalty factor 0.7.
To generate $N$ total sample points we run MCMC for $N$ iterations after burn-in or aggregate the particles from $N/n$ independent runs of MSVGD or SVMD with $n=50$ particles.
As $N$ ranges from 1000 to 3000 in \cref{fig:coverage-wrt-k}, the MSVGD and SVMD CIs consistently yield higher coverage than the standard 90\% CIs. 
This increased coverage is of particular value for smaller sample sizes, for which the standard CIs tend to undercover.
For a much larger sample size of $N=5000$ in \cref{fig:coverage}, the SVMD and standard CIs closely track one another across confidence levels, while MSVGD consistently yields longer CIs with high coverage. 
The higher coverage of MSVGD is only of value for larger confidence levels at which the other methods begin to undercover.

\begin{figure}[t] \vskip -0.3in
	\centering
	\begin{subfigure}[t]{0.45\textwidth}
		\centering
		\includegraphics[width=0.975\textwidth]{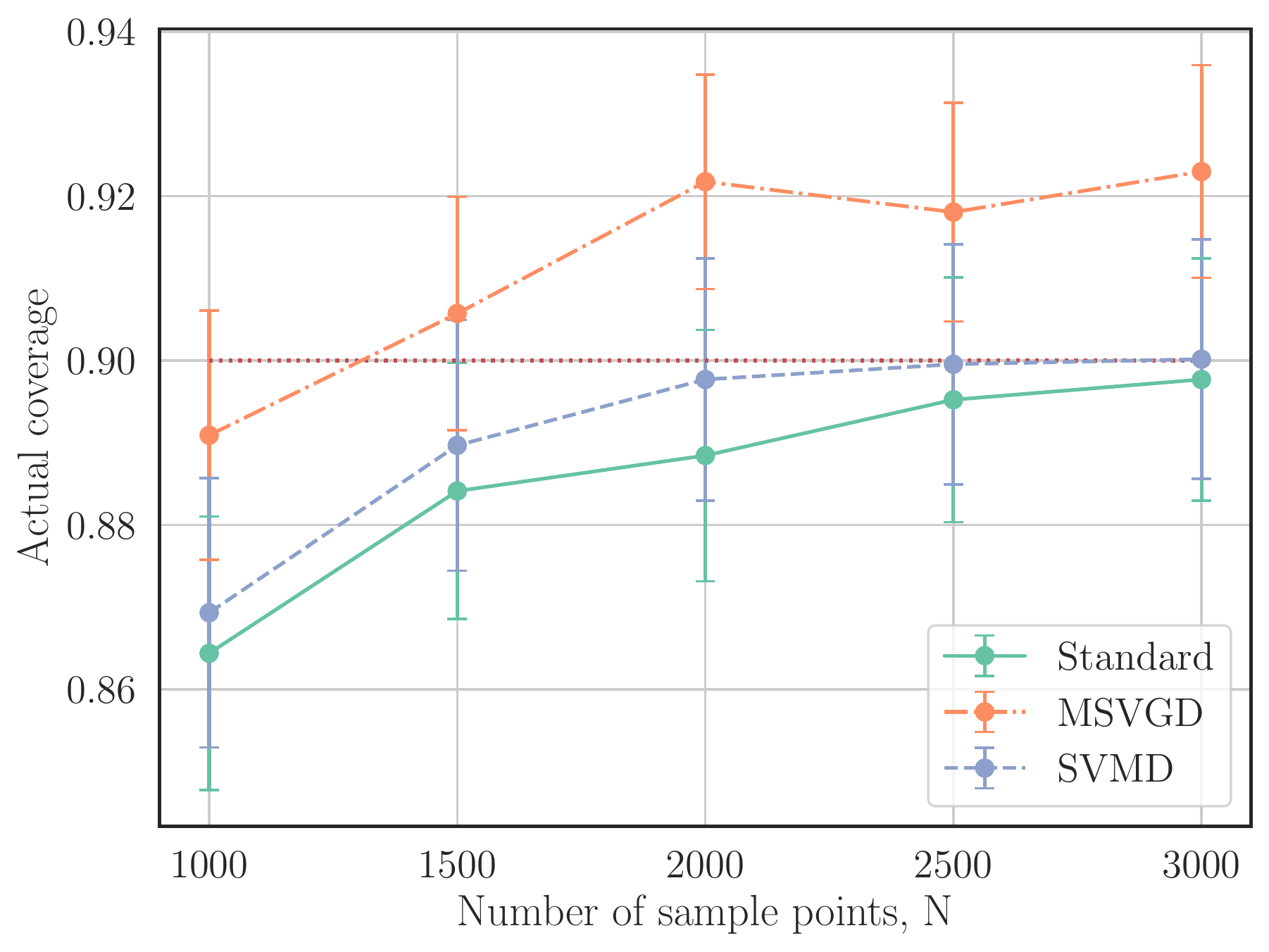}
		\caption{Nominal coverage: 0.9}
		\label{fig:coverage-wrt-k}
	\end{subfigure}
	\begin{subfigure}[t]{0.45\textwidth}
		\centering
		\includegraphics[width=\textwidth]{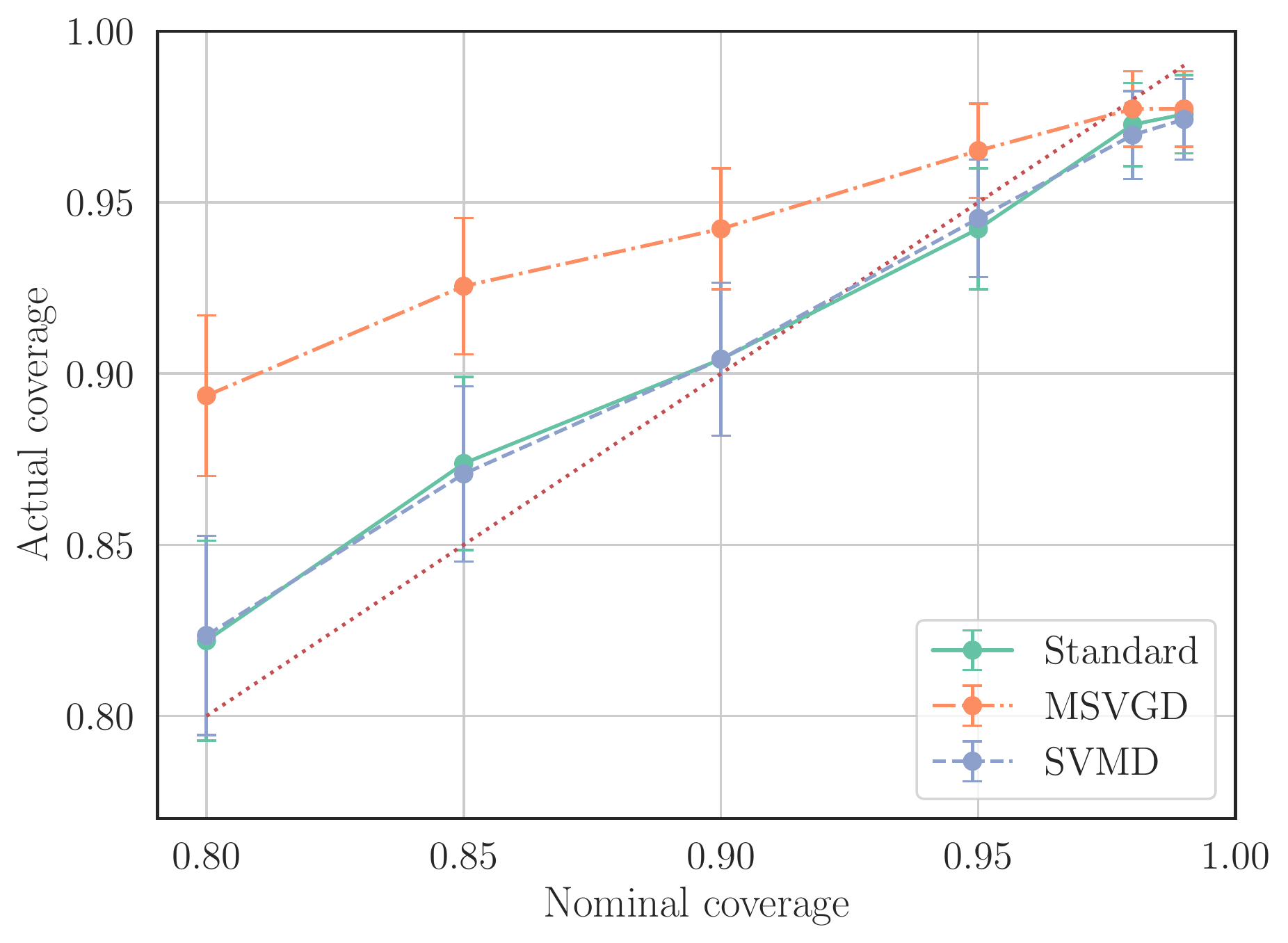}
		\caption{$N=5000$ sample points}
		\label{fig:coverage}
	\end{subfigure}
	\caption{Coverage of post-selection CIs across 
	(a) 500  / (b) 200 
	replications of simulation of \citet{sepehri2017non}.}
\end{figure}

We next apply our samplers to a post-selection inference task on the HIV-1 drug resistance dataset~\citep{rhee2006genotypic}, where we run randomized Lasso~\citep{tian2016magic} to find statistically significant mutations associated with drug resistance using susceptibility data on virus isolates.
We take the vitro measurement of log-fold change under the 3TC drug as response and include mutations that had appeared at least 11 times in the dataset as regressors. 
In \Cref{fig:hiv-ci} we plot the CIs of selected mutations obtained with $N=5000$ sample points. 
We see that the invalid unadjusted least squares CIs can lead to premature conclusions, e.g., declaring mutation 215Y significant when there is insufficient support after conditioning on the selection event.
In contrast, mutation 184V, which has known association with drug resistance, is declared significant by all methods even after post-selection adjustment.
The MSVGD and SVMD CIs mostly track those of the standard \texttt{selectiveInference} method, but their conclusions sometimes differ: e.g., 62Y is flagged as significant by MSVGD and SVMD but not by \texttt{selectiveInference}.

\begin{figure}[ht]
	\centering
	\begin{subfigure}[t]{0.36\textwidth}
		\centering
		\includegraphics[width=\textwidth]{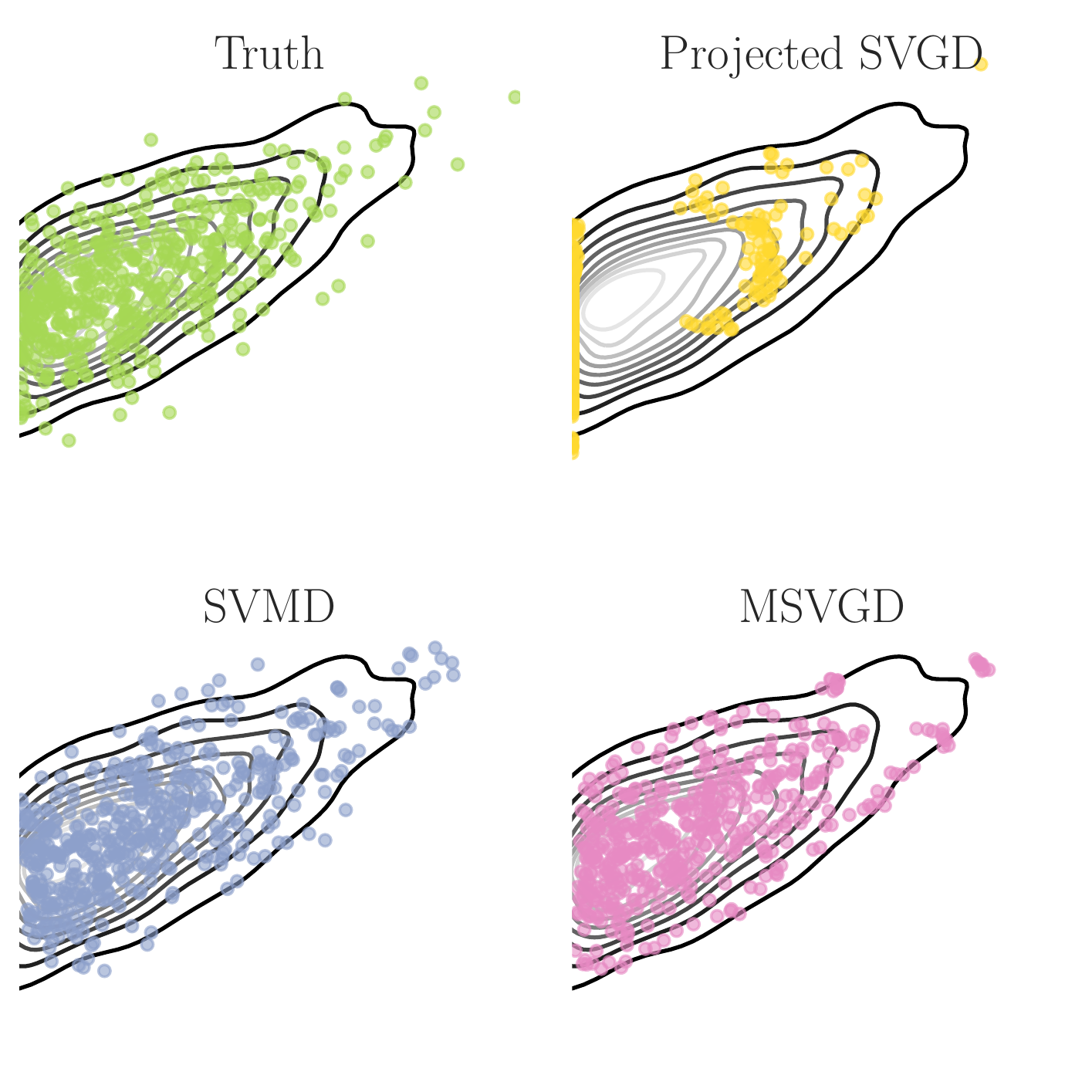}
		\caption{}
		\label{fig:sel2d}
	\end{subfigure}
	\begin{subfigure}[t]{0.6\textwidth}
		\centering
		\includegraphics[width=\textwidth]{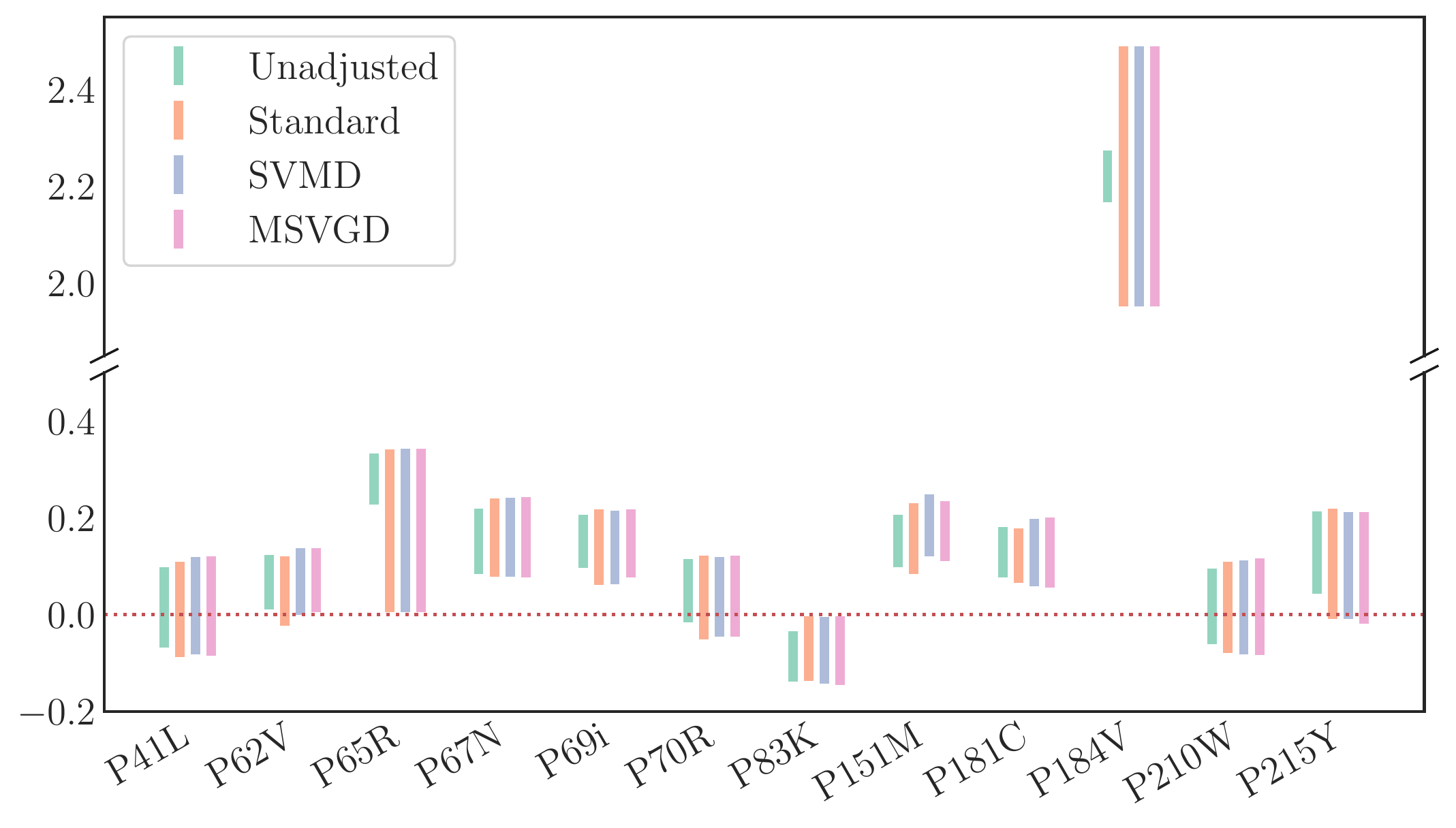}
		\caption{}
		\label{fig:hiv-ci}
	\end{subfigure}
	\caption{(a) Sampling from a 2D selective density; (b) Unadjusted and post-selection CIs for the mutations selected by the randomized Lasso as candidates for HIV-1 drug resistance (see \cref{sec:selective}).}
\end{figure}

\subsection{Large-scale posterior inference with non-Euclidean geometry}
\label{sec:unconstrained}

Finally, we demonstrate the advantages of exploiting non-Euclidean geometry by recreating the real-data large-scale  Bayesian logistic regression experiment of \citet{liu2016stein} with 581,012 datapoints and $d=54$ feature dimensions.
Here, the target $p$ is the posterior distribution over logistic regression parameters. 
We adopt the Fisher information metric tensor $G$, compare 20-particle SVNG to SVGD and its prior geometry-aware variants RSVGD \citep{liu2018riemannian} and MatSVGD %
with average and mixture kernels \citep{wang2019stein},
and for all methods use stochastic minibatches of size $256$ to scalably approximate each log likelihood query.
In \cref{fig:lr}, %
all geometry-aware methods 
substantially improve the log predictive probability of SVGD.
SVNG also strongly outperforms RSVGD and converges to its maximum test probability in half as many steps as MatSVGD (Avg) and more rapidly than MatSVGD (Mixture). %

\begin{figure}[t] \vskip -0.3in
	\centering
	\begin{subfigure}[t]{0.329\textwidth}
		\includegraphics[width=\textwidth]{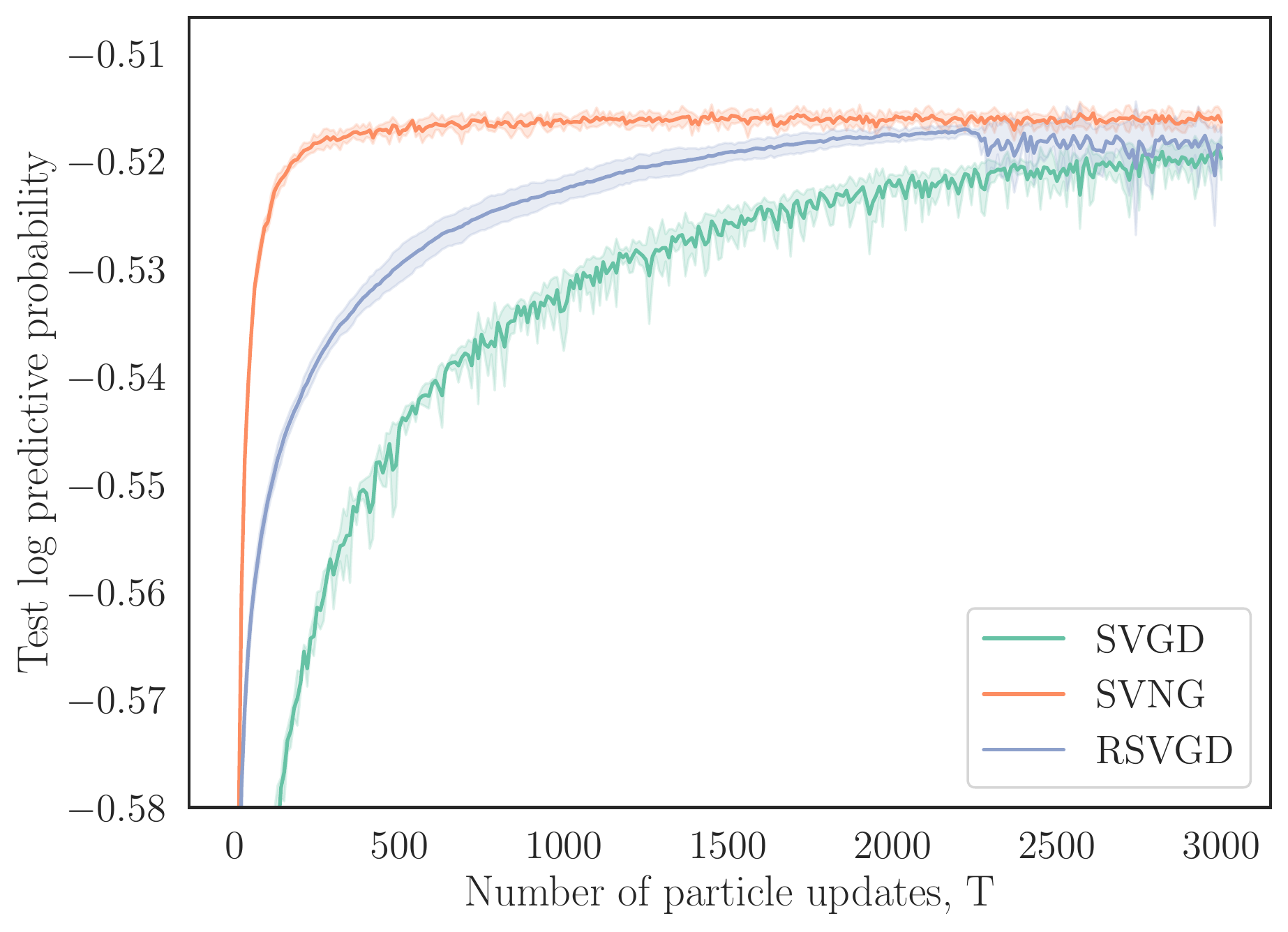}
	\end{subfigure}
	\begin{subfigure}[t]{0.334\textwidth}
		\includegraphics[width=\textwidth]{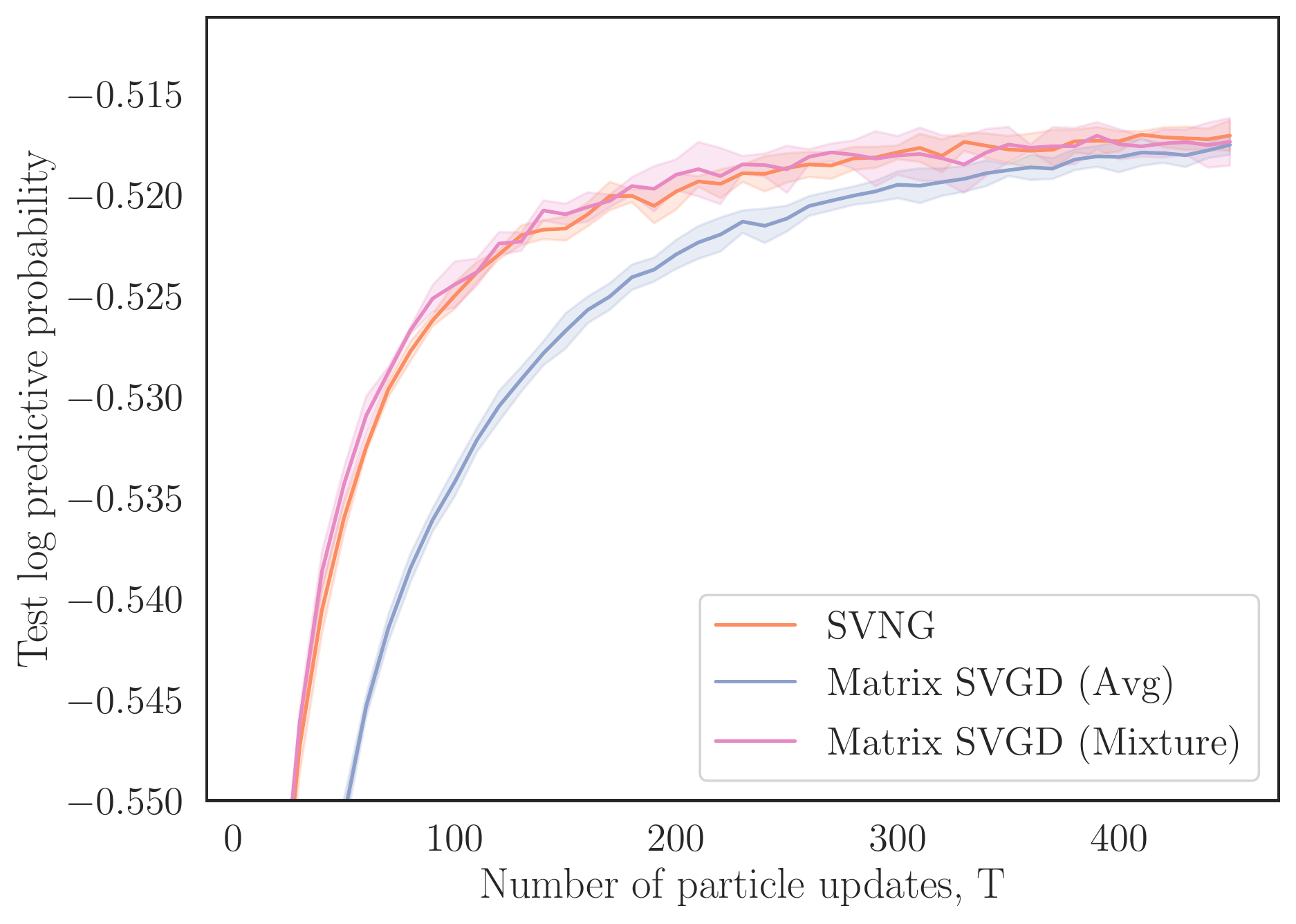}
	\end{subfigure}
	\begin{subfigure}[t]{0.323\textwidth}
		\includegraphics[width=\textwidth]{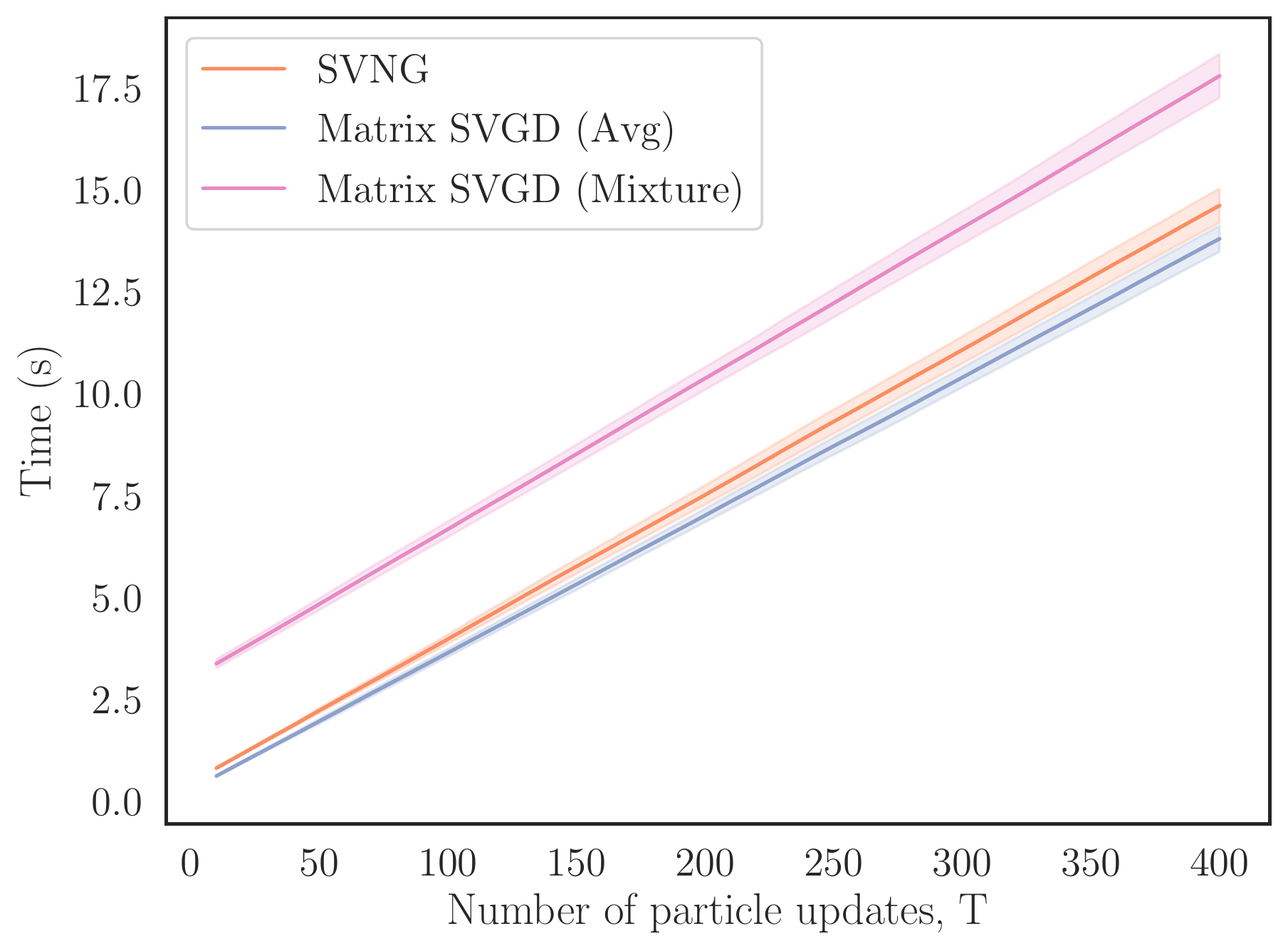}
	\end{subfigure}
	\caption{Value of non-Euclidean geometry in large-scale Bayesian logistic regression. 
	}
	\label{fig:lr}
\end{figure}

\section{Convergence Guarantees}\label{sec:theory}
We next turn our attention to the convergence properties of our proposed methods. 
For $K_t$ and $\eps_t$ as in \cref{alg:constrained}, let  $(q_{t}^\infty, q_{t, H}^\infty)$ represent the distributions of the mirrored Stein updates $(\theta_t, \eta_t)$ when $\theta_0 \sim q_{0}^\infty$ and 
$\eta_{t + 1} = \eta_t + \epsilon_t g^*_{q_t, K_t}(\theta_t)$ for $t\geq 0$.
Our first result, proved in \cref{app:proof-large-sample}, shows that if the \cref{alg:constrained} initialization $q_{0, H}^n = \frac{1}{n}\sum_{i=1}^n \delta_{\eta^i_0}$ converges in Wasserstein distance to a distribution $q_{0, H}^\infty$ as $n\to\infty$, then, on each round $t > 0$, 
the output of \cref{alg:constrained},  $q_{t}^n=\frac{1}{n}\sum_{i=1}^n \delta_{\theta^i_t}$, converges to $q_{t}^\infty$.
\begin{theorem}[Convergence of mirrored updates as $n\to\infty$] \label{thm:large-sample}
	Suppose \cref{alg:constrained} is initialized with $q_{0, H}^n = \frac{1}{n}\sum_{i=1}^n \delta_{\eta^i_0}$ satisfying $W_1 (q_{0, H}^n, q_{0, H}^\infty) \to 0$ for $W_1$ the $L^1$ Wasserstein distance.
	Define the $\eta$-induced kernel $\Ketat(\eta, \eta') \defeq K_t(\nabla\psi^*(\eta), \nabla\psi^*(\eta'))$. 
	If, for some $c_1, c_2 > 0$, 
	\begin{talign}
		\|\nabla (\Ketat(\cdot, \eta)\nabla \log p_H(\eta) + \nabla \cdot \Ketat(\cdot, \eta)) \|_{\mathrm{op}} &\leq c_1(1 + \norm{\eta}_2), \\
		\|\nabla (\Ketat(\eta', \cdot)\nabla \log p_H(\cdot) + \nabla \cdot \Ketat(\eta', \cdot)) \|_{\mathrm{op}} &\leq c_2(1 + \norm{\eta'}_2),
	\end{talign}
	then, $W_1 (q_{t, H}^n, q_{t, H}^\infty) \to 0$ and $q_{t}^n \Rightarrow q_{t}^\infty$ for each round $t$.
\end{theorem}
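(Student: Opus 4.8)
The plan is to prove the claim by a propagation-of-chaos induction on the round $t$, carried out entirely in the dual $\eta$-space, with a final transfer back to $\theta$-space. First I would rewrite the $\eta$-update of \cref{alg:constrained} in intrinsic $\eta$-coordinates: by the change of variables behind \cref{thm:msvgd-is-svgd} (extended to the matrix-valued, time-dependent kernel $K_t$), applying $\mathcal{M}_{p,\psi}K_t$ in $\theta$-space is equivalent to applying the Langevin Stein operator of $p_H$ with the induced kernel $\Ketat$, so that $\eta_{t+1} = T_t[q_{t,H}](\eta_t)$ for the mean-field map
\begin{equation*}
T_t[\mu](\xi) \defeq \xi + \epsilon_t\,\mathbb{E}_{\eta\sim\mu}[\Phi_t(\xi,\eta)],
\qquad
\Phi_t(\xi,\eta) \defeq \Ketat(\xi,\eta)\nabla\log p_H(\eta) + \nabla_\eta\cdot\Ketat(\xi,\eta).
\end{equation*}
The key structural observation is that both sequences of laws obey the \emph{same} deterministic recursion on measures, $q_{t+1,H}^n = (T_t[q_{t,H}^n])_\# q_{t,H}^n$ and $q_{t+1,H}^\infty = (T_t[q_{t,H}^\infty])_\# q_{t,H}^\infty$, differing only through their initialization; the whole argument then reduces to establishing $W_1$-stability of this map.

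For the inductive step I would assume $W_1(q_{t,H}^n, q_{t,H}^\infty)\to 0$ and take an optimal $W_1$-coupling $(\eta,\eta^\infty)$ of $(q_{t,H}^n, q_{t,H}^\infty)$, so $\mathbb{E}\|\eta-\eta^\infty\|_2 = W_1(q_{t,H}^n, q_{t,H}^\infty)$. Pushing this coupling through the respective maps and using $W_1(q_{t+1,H}^n, q_{t+1,H}^\infty)\le \mathbb{E}\|T_t[q_{t,H}^n](\eta) - T_t[q_{t,H}^\infty](\eta^\infty)\|_2$, I would write
\begin{equation*}
T_t[q_{t,H}^n](\eta) - T_t[q_{t,H}^\infty](\eta^\infty)
= (\eta - \eta^\infty) + \epsilon_t\big(h_t(\eta; q_{t,H}^n) - h_t(\eta^\infty; q_{t,H}^\infty)\big),
\end{equation*}
with $h_t(\xi;\mu)\defeq\mathbb{E}_{\eta\sim\mu}[\Phi_t(\xi,\eta)]$, and split the bracket as (A) $h_t(\eta; q_{t,H}^n) - h_t(\eta^\infty; q_{t,H}^n)$ (same measure, moved spatial argument) plus (B) $h_t(\eta^\infty; q_{t,H}^n) - h_t(\eta^\infty; q_{t,H}^\infty)$ (fixed argument, moved measure). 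Term (A) is controlled by the first hypothesis, which bounds $\|\nabla_\xi\Phi_t(\xi,\eta)\|_{\mathrm{op}}\le c_1(1+\|\eta\|_2)$, giving $\|\text{(A)}\|_2\le c_1\|\eta-\eta^\infty\|_2\,\mathbb{E}_{q_{t,H}^n}[1+\|\cdot\|_2]$. Term (B) is controlled by the second hypothesis, which makes $\eta\mapsto\Phi_t(\eta^\infty,\eta)$ Lipschitz with constant $c_2(1+\|\eta^\infty\|_2)$, so by Kantorovich--Rubinstein $\|\text{(B)}\|_2\le c_2(1+\|\eta^\infty\|_2)\,W_1(q_{t,H}^n, q_{t,H}^\infty)$. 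Taking expectations over the coupling yields a recursion $W_1(q_{t+1,H}^n, q_{t+1,H}^\infty)\le (1 + \epsilon_t C_t)\,W_1(q_{t,H}^n, q_{t,H}^\infty)$, where $C_t$ depends only on $c_1,c_2$ and the first moments of $q_{t,H}^n$ and $q_{t,H}^\infty$.

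To close the induction I would verify that these first moments remain finite and bounded uniformly in $n$: integrating the Jacobian bound $c_1(1+\|\eta\|_2)$ along a segment shows $T_t[\mu]$ has at most linear growth in $\xi$, which propagates finiteness of the first moment from round $t$ to round $t+1$, while $W_1$-convergence forces $\mathbb{E}_{q_{t,H}^n}[\|\cdot\|_2]\to\mathbb{E}_{q_{t,H}^\infty}[\|\cdot\|_2]<\infty$; hence $C_t$ is finite and $W_1(q_{t+1,H}^n, q_{t+1,H}^\infty)\to 0$. With the base case $t=0$ supplied by hypothesis, this gives $W_1(q_{t,H}^n, q_{t,H}^\infty)\to 0$ for every $t$. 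Finally, since $q_t^n = (\nabla\psi^*)_\# q_{t,H}^n$ and $q_t^\infty = (\nabla\psi^*)_\# q_{t,H}^\infty$, $W_1$-convergence implies weak convergence in $\eta$, and the continuous mapping theorem applied to the continuous map $\nabla\psi^*$ yields $q_t^n\Rightarrow q_t^\infty$.

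I expect the main obstacle to be the uncontrolled, linearly-growing Lipschitz constants $c_1(1+\|\eta\|_2)$ and $c_2(1+\|\eta'\|_2)$: because the regularity is not uniform, one cannot treat $T_t$ as a globally Lipschitz map and must instead track the first moments of both particle systems through the recursion and argue they stay finite and convergent. The two hypotheses are tailored precisely to this decomposition -- the $c_1$-bound governs stability in the spatial variable (Term A) together with linear growth of the update, while the $c_2$-bound converts the measure perturbation into a $W_1$ factor (Term B) -- so the bulk of the technical work is the moment bookkeeping that makes the recursion constant $C_t$ finite.
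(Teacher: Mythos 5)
Your proposal is correct and follows essentially the same route as the paper's proof: both rewrite the update as a mean-field map in $\eta$-space (one step of matrix SVGD targeting $p_H$ with kernel $K_{\nabla\psi^*,t}$), take an optimal $W_1$ coupling, split the difference into a spatial-argument term controlled by the $c_1$ condition and a measure-perturbation term controlled by the $c_2$ condition, obtain the recursion $W_1(q_{t+1,H}^n, q_{t+1,H}^\infty) \leq (1+\epsilon_t C_t)\,W_1(q_{t,H}^n, q_{t,H}^\infty)$ with $C_t$ depending on first moments, and finish with the continuous mapping theorem applied to $\nabla\psi^*$. Your explicit moment-propagation bookkeeping is in fact slightly more careful than the paper, which simply assumes integrable means at each round.
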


\begin{remark}
	The pre-conditions hold, for example, whenever $\nabla\log p_H$ is Lipschitz, $\psi$ is strongly convex, and $K_t=kI$ for $k$ bounded with bounded derivatives. %
\end{remark}

Given a mirrored Stein operator \cref{eq:mirror-stein-op}, an arbitrary Stein set $\gset$, and an arbitrary matrix-valued kernel $K$ we define the \emph{mirrored Stein discrepancy} and \emph{mirrored kernel Stein discrepancy}
\begin{talign} \label{eq:mksd}
	\MSD{q}{p}{\mathcal{G}} 
	    \defeq \sup_{g \in \mathcal{G}}\mathbb{E}_{q}[(\mathcal{M}_{p,\psi} g)(\theta)]
	\qtext{and}
	\MKSD{q}{p}{K} 
	    \defeq 
	        \MSD{q}{p}{\mathcal{B}_{\cH_K}}.
\end{talign}
The former is an example of a diffusion Stein discrepancy \citep{gorham2019measuring} and the latter an example of a diffusion kernel Stein discrepancy \citep{barp2019minimum}.
Since the MKSD optimization problem \cref{eq:mksd} matches that in \Cref{thm:opt-mirror-rkhs}, we have that 
$\MKSD{q}{p}{K} = \|g^*_{q, K}\|_{\cH_K}$.
Our next result, proved in \cref{app:proof-descent-lem}, shows that
the infinite-particle mirrored Stein updates reduce the KL divergence to $p$ whenever the step size is sufficiently small and drive MKSD to $0$ if, for example, $\eps_t = \Omega(\MKSD{q_t^\infty}{p}{K_t}^\alpha)$ for any $\alpha > 0$. 
We also provide two conditions in \Cref{app:exp-convergence} that generalize the Stein Log-Sobolev and Stein Poincar\'e inequalities in \citet{duncan2019geometry,korba2020non} and which imply exponential convergence rates of our algorithms in continuous time.

\begin{theorem}[Infinite-particle mirrored Stein updates decrease KL and MKSD] \label{thm:descent-lem}
	Assume $\kappa_1 \defeq \sup_\theta \opnorm{K_t(\theta, \theta)} < \infty$,  $\kappa_2 \defeq \sum_{i=1}^d \sup_{\theta}\|\nabla_{i, d + i}^2 K_t(\theta, \theta)\|_{\mathrm{op}} < \infty$, $\nabla\log p_{H}$ is $L$-Lipschitz, and $\psi$ is $\alpha$-strongly convex. 
	If $\epsilon_t < 1 / (2\sup_\theta \staticopnorm{\nabla^2\psi(\theta)^{-1}\nabla g^*_{q_t^\infty, K_t}(\theta) + \nabla g^*_{q_t^\infty, K_t}(\theta)^\top\nabla^2\psi(\theta)^{-1}})$,
	\begin{talign}
		\KL{q_{t+1}^\infty}{p} - \KL{q_t^\infty}{p} \leq -\left(\epsilon_t - \left(\frac{L\kappa_1}{2} + \frac{2\kappa_2}{\alpha^2}\right)\epsilon_t^2 \right) \MKSD{q_t^\infty}{p}{K_t}^2.
	\end{talign}
\end{theorem}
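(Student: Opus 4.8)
The plan is to reduce the claim to a single-step descent analysis in the dual $\eta$-space and then split the change in KL into a ``cross-entropy'' piece governed by $\nabla\log p_H$ and an ``entropy'' piece governed by the Jacobian of the update map. Because $\nabla\psi$ is a bijection, I would first write $\KL{q_t^\infty}{p} = \KL{q_{t,H}^\infty}{p_H}$, so that the mirrored update $\eta_{t+1} = \eta_t + \epsilon_t g^*_{q_t^\infty, K_t}(\theta_t)$ becomes the additive map $T(\eta) = \eta + \epsilon_t h(\eta)$ with $h \defeq g^*_{q_t^\infty, K_t}\circ \nabla\psi^*$ and target $p_H$. Writing $\mu = q_{t,H}^\infty$ and using the change-of-variables density of the pushforward $T_\#\mu$, I obtain the exact identity
\begin{talign*}
\KL{q_{t+1,H}^\infty}{p_H} - \KL{q_{t,H}^\infty}{p_H}
= \mathbb{E}_\mu\!\left[\log p_H(\eta) - \log p_H(T(\eta))\right]
- \mathbb{E}_\mu\!\left[\log\det\!\left(I + \epsilon_t \nabla h(\eta)\right)\right].
\end{talign*}

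A second-order Taylor expansion of the cross-entropy term using $L$-Lipschitzness of $\nabla\log p_H$, together with the expansion $\log\det(I+\epsilon_t\nabla h) = \epsilon_t\,\nabla\cdot h + O(\epsilon_t^2)$ of the entropy term, produces the leading contribution $-\epsilon_t\,\mathbb{E}_\mu[(\mathcal{S}_{p_H}h)(\eta)]$, where $\mathcal{S}_{p_H}$ is the $\eta$-space Langevin Stein operator. By invariance of the KL rate under $\nabla\psi$ (equivalently, \cref{eq:mirror-kl-rate}), this equals $-\epsilon_t\,\mathbb{E}_{q_t^\infty}[(\mathcal{M}_{p,\psi}g^*_{q_t^\infty, K_t})(\theta)]$; and \cref{thm:opt-mirror-rkhs} together with the reproducing property gives $\mathbb{E}_{q_t^\infty}[(\mathcal{M}_{p,\psi}g^*_{q_t^\infty, K_t})(\theta)] = \langle g^*_{q_t^\infty, K_t}, g^*_{q_t^\infty, K_t}\rangle_{\cH_{K_t}} = \MKSD{q_t^\infty}{p}{K_t}^2$. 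Hence the leading term is exactly $-\epsilon_t\,\MKSD{q_t^\infty}{p}{K_t}^2$, matching the first-order part of the claim.

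It then remains to bound the two $O(\epsilon_t^2)$ remainders. For the cross-entropy remainder I would use the matrix-RKHS reproducing bound $\|g(\theta)\|_2^2 \le \opnorm{K_t(\theta,\theta)}\,\|g\|_{\cH_{K_t}}^2$, giving $\mathbb{E}_\mu[\|h\|_2^2] = \mathbb{E}_{q_t^\infty}[\|g^*_{q_t^\infty, K_t}\|_2^2] \le \kappa_1\,\MKSD{q_t^\infty}{p}{K_t}^2$, and hence the $\tfrac{L\kappa_1}{2}\epsilon_t^2$ term. The entropy remainder is the main obstacle. I would first observe that $\det(I+\epsilon_t\nabla h) = \det(I+\epsilon_t M)$ with $M \defeq \nabla^2\psi^{-1}\nabla g^*_{q_t^\infty, K_t}$ (same spectrum via $AB$ vs.\ $BA$), note that the step-size hypothesis forces the symmetric part of $\epsilon_t M$ to exceed $-\tfrac12 I$ so that $I+\epsilon_t M$ has positive determinant (justifying the change of variables) and $\opnorm{(I+s\epsilon_t M)^{-1}}\le 2$, and then use the integral identity $-\log\det(I+\epsilon_t M) + \epsilon_t\tr M = \epsilon_t^2\int_0^1 s\,\tr[(I+s\epsilon_t M)^{-1}M^2]\,ds$ to bound the remainder by a constant times $\epsilon_t^2\,\mathbb{E}_\mu[\fronorm{M}^2]$. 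Finally, $\alpha$-strong convexity gives $\opnorm{\nabla^2\psi^{-1}}\le 1/\alpha$, and the derivative reproducing bound $\sum_i\|\partial_{x_i}g(\theta)\|_2^2 \le \big(\sum_i\opnorm{\nabla_{i,d+i}^2 K_t(\theta,\theta)}\big)\|g\|_{\cH_{K_t}}^2$ yields $\mathbb{E}_\mu[\fronorm{M}^2]\le \alpha^{-2}\,\mathbb{E}_{q_t^\infty}[\fronorm{\nabla g^*_{q_t^\infty, K_t}}^2]\le \alpha^{-2}\kappa_2\,\MKSD{q_t^\infty}{p}{K_t}^2$, and a routine trace bound produces the stated $\tfrac{2\kappa_2}{\alpha^2}\epsilon_t^2$ constant. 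The delicate points are verifying the positive-determinant/diffeomorphism property precisely where the symmetrized step-size condition enters, and invoking the derivative reproducing-kernel inequality in the matrix-valued setting to convert $\fronorm{\nabla g^*_{q_t^\infty, K_t}}^2$ into the $\kappa_2$ constant.
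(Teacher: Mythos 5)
Your proposal is correct and follows essentially the same route as the paper's proof: the identical dual-space decomposition of the KL change into a cross-entropy term (bounded via $L$-Lipschitzness of $\nabla\log p_H$) and a log-determinant term, the same identification of the first-order term with $\MKSD{q_t^\infty}{p}{K_t}^2$, and the same $\kappa_1,\kappa_2$ remainder bounds obtained from the matrix-RKHS reproducing inequalities and $\alpha$-strong convexity. The only differences are at the lemma level---where the paper invokes \citet[Lem.~A.1]{liu2017stein} for the log-determinant bound and an explicit divergence/log-determinant identity (\cref{lem:div-logdet}) for the Stein-operator step, you re-derive the former via an integral identity (with an explicit positivity/diffeomorphism check that the paper glosses over) and obtain the latter from invariance of the KL dissipation rate under $\nabla\psi$---and both substitutions are valid.
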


 Our last result, proved in \Cref{app:proof-type-1-control}, shows that
 $q_t^\infty\Rightarrow p$ if  $\MKSD{q_t^\infty}{p}{K_k} \to 0$. 
 Hence, by \cref{thm:large-sample,thm:descent-lem}, $n$-particle MSVGD converges weakly to $p$ if $\eps_t$ decays at a suitable rate.
 
  \begin{theorem}[$\textnormal{MKSD}_{K_k}$ determines weak convergence] \label{thm:type-1-control}
 	Assume $p_{H}$ is distantly dissipative~\citep{eberle2016reflection} with $\nabla \log p_H$ Lipschitz, $\psi$ is %
 	strongly convex 
 	with continuous $\grad \psi^*$, 
 	and $k(\theta, \theta') = \kappa(\nabla\psi(\theta), \nabla\psi(\theta'))$ for $\kappa(x,y) = (c^2+\statictwonorm{x-y}^2)^\beta$ with $\beta \in (-1,0)$. Then, $q_t^\infty \Rightarrow p$ if $\MKSD{q_t^\infty}{p}{K_k} \to 0$. 
 \end{theorem}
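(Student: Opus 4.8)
The plan is to reduce the claim to the known convergence-control property of the inverse-multiquadric (IMQ) kernel Stein discrepancy in the dual $\eta$-space, exploiting that the kernel $k(\theta,\theta') = \kappa(\nabla\psi(\theta),\nabla\psi(\theta'))$ has been engineered precisely so that its dual-space counterpart is the IMQ kernel $\kappa$. The three ingredients are: (i) an isometric identification of the mirrored discrepancy in $\theta$-space with an ordinary Langevin kernel Stein discrepancy in $\eta$-space, (ii) the IMQ convergence-determining theorem of \citet{gorham2017measuring}, and (iii) the continuous mapping theorem to transport weak convergence back through $\nabla\psi^*$.

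First I would establish the identity $\MKSD{q}{p}{K_k} = \mathrm{KSD}_{\kappa}(q_H, p_H)$, where $q_H$ denotes the pushforward of $q$ under $\nabla\psi$ and the right-hand side is the usual Langevin KSD in $\eta$-space built from the IMQ kernel $\kappa$ and target $p_H$. This follows from \cref{thm:msvgd-is-svgd}: with $K_k = kI$, the mirrored Stein update in $\eta$-space coincides with the SVGD update for target $p_H$ and dual kernel $k_{\psi}(\eta,\eta') = k(\nabla\psi^*(\eta),\nabla\psi^*(\eta'))$. Because $\nabla\psi\circ\nabla\psi^* = \mathrm{id}$, one has $k_{\psi}(\eta,\eta') = \kappa(\eta,\eta')$ exactly, so the dual kernel is the IMQ kernel. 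The pullback $g\mapsto g\circ\nabla\psi^*$ is an isometry between $\cH_{K_k}$ and the RKHS of $\kappa I$, and the change of variables $\eta = \nabla\psi(\theta)$ turns $\mathbb{E}_{q}[(\mathcal{M}_{p,\psi}g)(\theta)]$ into $\mathbb{E}_{q_H}[(\mathcal{S}_{p_H}\tilde g)(\eta)]$ for $\tilde g = g\circ\nabla\psi^*$, which is the same change of variables underlying \cref{eq:mirror-kl-rate} together with the invariance of the KL divergence under the diffeomorphism $\nabla\psi$. Taking the supremum over the two matched unit balls yields the claimed equality of discrepancies.

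Next I would invoke the IMQ convergence result of \citet{gorham2017measuring}: since $p_H$ is distantly dissipative with Lipschitz $\nabla\log p_H$ and $\kappa(x,y) = (c^2+\statictwonorm{x-y}^2)^\beta$ with $\beta\in(-1,0)$, the IMQ KSD determines weak convergence, so $\mathrm{KSD}_{\kappa}(q_{t,H}^\infty, p_H)\to 0$ forces $q_{t,H}^\infty\Rightarrow p_H$. Combining with step one, $\MKSD{q_t^\infty}{p}{K_k}\to 0$ gives $q_{t,H}^\infty\Rightarrow p_H$. Finally, since $\psi$ is strongly convex with continuous $\nabla\psi^*$ and $\theta_t = \nabla\psi^*(\eta_t)$, the continuous mapping theorem transports this to $q_t^\infty = (\nabla\psi^*)_{\#}q_{t,H}^\infty\Rightarrow(\nabla\psi^*)_{\#} p_H = p$, which is the desired conclusion.

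The main obstacle I anticipate is making the discrepancy identification in step one fully rigorous: one must verify that the RKHS pullback is a genuine isometry onto the RKHS of $\kappa I$, so that the two unit balls correspond, and that the boundary and integrability conditions justifying the Stein change of variables hold under the stated hypotheses, so that no mass is lost at $\partial\Theta$ when rewriting $\mathbb{E}_{q}[(\mathcal{M}_{p,\psi}g)(\theta)]$ in $\eta$-coordinates. Everything after that is a direct appeal to \citet{gorham2017measuring} and the continuous mapping theorem.
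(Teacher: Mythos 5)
Your proposal is correct and follows essentially the same route as the paper's proof: reduce $\MKSD{q_t^\infty}{p}{K_k}$ to the Langevin KSD between $q_{t,H}^\infty$ and $p_H$ with the IMQ kernel $\kappa$ (the paper does this by computing $\|g^*_{q,K_k}\|_{\cH_{K_k}}^2$ in closed form via \cref{thm:msvgd-is-svgd}, while you phrase it as an RKHS pullback isometry --- the same identification), then invoke Theorem 8 of \citet{gorham2017measuring} and finish with the continuous mapping theorem applied to $\nabla\psi^*$.
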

 
\begin{remark}
The pre-conditions hold, for example, for any Dirichlet target with negative entropy $\psi$.
\end{remark}

\section{Discussion}%
\label{sec:discussion}

This paper introduced the mirrored Stein operator along with three new  particle evolution algorithms for sampling with constrained domains and non-Euclidean geometries.
The first algorithm MSVGD performs SVGD updates in a mirrored space before mapping to the target domain.
The other two algorithms are different discretizations of the same continuous dynamics for exploiting non-Euclidean geometry.
SVMD is a multi-particle generalization of mirror descent for constrained domains, while SVNG is designed for unconstrained problems with informative metric tensors.

We highlight three limitations. First, like SVGD, our
MSVGD require $O(n^2)$ time per update. %
Second, SVMD and SVNG are more costly than MSVGD due to
the adaptive kernel construction. 
Low-rank kernel approximation may be
needed to reduce their complexity.
Third, we leave open the question of convergence when stochastic
gradient estimates are employed, but we suspect the results of \citet[Thm.~7]{gorham2020stochastic} can be extended to our setting.
In the future, we hope to deploy our mirrored Stein operators for other inferential tasks on constrained domains including 
sample quality measurement~\citep{gorham2015measuring,HugginsMa2018}, 
goodness-of-fit testing~\citep{chwialkowski2016kernel,liu2016kernelized,Jitkrittum:2017}, 
graphical model inference~\citep{zhuo2018message,wang2018stein}, 
parameter estimation~\citep{barp2019minimum}, 
thinning~\citep{riabiz2020optimal},
and 
de novo sampling~\citep{chen2018stein,futami2019bayesian}. %

\newpage
\subsubsection*{Reproducibility Statement}

See \cref{app:exp-supplement} for %
experimental details and 
\begin{center}
\url{https://github.com/thjashin/mirror-stein-samplers} 
\end{center}
for Python and R code replicating
all experiments.

\bibliography{ref}
\bibliographystyle{iclr2022_conference}

\appendix
\newpage

\appendix

\begin{algorithm}[t] %
	\caption{Stein Variational Natural Gradient (SVNG)}  
	\label{alg:svng}
	\begin{algorithmic}
		\State {\bf Input:} density $p(\theta)$ on $\reals^d$, kernel $k$, metric tensor $G(\theta)$, particles $(\theta_0^i)_{i=1}^n$, step sizes $(\eps_t)_{t=1}^T$ 
		\For{$t=0:T$}
		\State $\text{for } i \in [n], \theta_{t+1}^i \gets \theta_t^i + \epsilon_t G(\theta_t^i)^{-1} g^*_{G, t}(\theta_t^i)$, where
		\State $g^*_{G, t}(\theta) = \frac{1}{n}\sum_{j=1}^n [K_{G, t}(\theta, \theta_t^j)G(\theta_t^j)^{-1}\nabla\log p(\theta_t^j) + \nabla_{\theta_t^j} \cdot ({K_{G, t}(\theta,\theta_t^j)G(\theta_t^j)^{-1}})]$
		\text{ (see \cref{eq:kernel-svng})}
		\EndFor
		\State {\bf return} $\{\theta^i_{T+1}\}_{i=1}^{n}$.
	\end{algorithmic}
\end{algorithm}

\section{Mirror Descent, Riemannian Gradient Flow, and Natural Gradient}
\label{app:md-vs-ngd}

The equivalence between the mirror flow $d\eta_t = - \nabla f(\theta_t) dt,\; \theta_t =  \nabla\psi^*(\eta_t) dt$ and the Riemannian gradient flow in \eqref{eq:riemann-gf} is a direct result of the chain rule:
\begin{align}
	\deriv{\theta_t}{t} &= - \nabla_{\eta_t}\theta_t \deriv{\eta_t}{t} = - (\nabla_{\theta_t}\eta_t)^{-1} \deriv{\eta_t}{t} = - \nabla^2\psi(\theta_t)^{-1} \nabla f(\theta_t), \label{eq:riemann-gf-theta} \\
	\deriv{\eta_t}{t} &= - \nabla f(\theta_t)  = 
	- \nabla_{\theta_t} \eta_t \nabla_{\eta_t} f(\nabla\psi^*(\eta_t)) = - \nabla^2\psi^*(\eta_t)^{-1}  \nabla_{\eta_t}f(\nabla\psi^*(\eta_t)). \label{eq:riemann-gf-eta}
\end{align}
Depending on discretizing \eqref{eq:riemann-gf-theta} or \eqref{eq:riemann-gf-eta}, there are two natural gradient descent (NGD) updates that can arise from the same gradient flow:
\begin{align}
	\text{NGD (a):}&\quad \theta_{t + 1} = \theta_t - \epsilon_t \nabla^2\psi(\theta_t)^{-1}\nabla f(\theta_t), \\
	\text{NGD (b):}&\quad \eta_{t + 1} = \eta_t - \epsilon_t \nabla^2\psi^*(\eta_t)^{-1} \nabla_{\eta_t}f(\nabla\psi^*(\eta_t)).
\end{align}
With finite step sizes $\epsilon_t$, their updates need not be the same and can lead to different optimization paths. 
Since $\nabla f(\theta_t) = \nabla^2\psi^*(\eta_t)^{-1}\nabla_{\eta_t} f(\nabla\psi^*(\eta_t))$, NGD (b) is equivalent to the dual-space update by mirror descent.
This relationship was pointed out in \citet{raskutti2015information} and has been used for developing natural gradient variational inference algorithms~\citep{khan2018fast}.
We emphasize, however, our SVNG algorithm developed in \Cref{sec:svng} corresponds to the discretization in the primal space as in NGD (a).
Therefore, it does not require an explicit dual space, and allows replacing $\nabla^2\psi$ with more general information metric tensors.

\section{Details of Example~\ref{ex:dirichlet}}
\label{app:dir-exp-details}

For the entropic mirror map $\psi(\theta) = \sum_{j=1}^{d+1} \theta_j\log \theta_j$, we have $\nabla^2\psi(\theta)^{-1} = \diag(\theta) -\theta\theta^\top$. 
Note here $\theta$ denotes a $d$-dimensional vector and does not include $\theta_{d+1} = 1 - \sum_{j=1}^d \theta_d$. 
Since $\Theta$ is a $(d+1)$-simplex,  $\partial \Theta$ is composed of $d+1$ faces with $\theta$ in the $j$-th face satisfies $\theta_j = 0$. 
The outward unit normal vector $n(\theta)$ for the first $d$ faces are $-e_j$ for $1 \leq j \leq d$, where $e_j$ denotes the $j$-th standard basis of $\mathbb{R}^d$. 
The outward unit normal vector for the $(d+1)$-st face is a vector with $1/\sqrt{d}$ in all coordinates. Therefore, we have 
\begin{align}
\int_{\partial \Theta} p(\theta) g(\theta)^\top \nabla^2\psi(\theta)^{-1}n(\theta) d\theta 
= &\int_{\partial \Theta} p(\theta) g(\theta)^\top (\mathrm{diag}(\theta) - \theta\theta^\top) n(\theta) d\theta  \\
= &\int_{\partial \Theta} p(\theta) (\theta \odot g(\theta)  - \theta\theta^\top g(\theta))^\top n(\theta) d\theta  \\
= &\sum_{j=1}^{d} \int_{\theta_j = 0} p(\theta) (\theta^\top g(\theta) - g_j(\theta))\theta_j d\theta_{-j} \\
&+ \frac{1}{\sqrt{d}} \int_{\theta_{d+1}=0} p(\theta)\theta^\top g(\theta)\theta_{d+1} d\theta \\
= & 0,
\end{align}
where in the second to last identity we used $\theta^\top \mathbf{1} = 1 - \theta_{d+1}$.
Finally, we can verify the condition in \Cref{prop:mirror-stein-identity} as
\begin{equation}
\lim_{r \to \infty} 
    \int_{\partial \Theta_r} p(\theta)\statictwonorm{\nabla^2\psi(\theta)^{-1} n_r(\theta)} d\theta 
= \sup_{\supnorm{g} \leq 1} 
    \int_{\partial \Theta} p(\theta) g(\theta)^\top 
    \nabla^2\psi(\theta)^{-1}n(\theta) d\theta = 0.
\end{equation}

\section{Derivation of the Mirrored Stein Operator}
\label{app:mirror-stein-op}

We first review the (overdamped) Langevin diffusion -- a Markov process that underlies %
many recent advances in Stein's method -- along with its recent mirrored generalization.
The Langevin diffusion with equilibrium density $p$ on $\reals^d$ is a Markov process $(\theta_t)_{t \geq 0} \subset \mathbb{R}^d$ satisfying the stochastic differential equation (SDE)
\begin{equation}
\label{eq:ld}
 d \theta_t = \nabla \log p(\theta_t) dt + \sqrt{2} d B_t
\end{equation}
with $(B_t)_{t \geq 0}$ a standard Brownian motion~\citep[Sec.~4.5]{bhattacharya2009stochastic}. 

To identify Stein operators that satisfy \eqref{eq:stein-identity} for broad classes of targets $p$, 
\citet{gorham2015measuring} proposed to build upon %
the generator method of \citet{barbour1988stein}: 
First, identify a Markov process $(\theta_t)_{t \geq 0}$ that has $p$ as the equilibrium density; 
they chose the Langevin diffusion of \cref{eq:ld}. 
Next, build a Stein operator based on the (infinitesimal) generator $A$ of the process~\citep[Def.~7.3.1]{oksendal2003stochastic}:
\begin{talign}
	(Af)(\theta) = \lim_{t\to 0} \frac{1}{t} (\mathbb{E}f(\theta_t) - \mathbb{E}f(\theta_0))\quad\text{for }f: \reals^d\to\reals, 
\end{talign}
as the generator %
satisfies $\mathbb{E}_{\theta \sim p} [(Af)(\theta)] = 0$
under relatively mild conditions. 
We use the following theorem to derive the generator of the processes described by SDEs like \eqref{eq:ld}:

\begin{theorem}[Generator of It\^o diffusion; {\citealp[Thm~7.3.3]{oksendal2003stochastic}}] \label{thm:gen}
	Let $(x_t)_{t \geq 0}$ be the It\^o diffusion 
	in $\mathcal{X} \subseteq \mathbb{R}^d$
	satisfying
	$d x_t = b(x_t) dt + \sigma(x_t) dB_t$.
	For any $f \in \overline{C_c^2(\mathcal{X})}$, 
	the (infinitesimal) generator $A$ of $(x_t)_{t \geq 0}$ is
	\begin{talign} \label{eq:generator}
		(Af)(x) = b(x)^\top \grad f(x) + \frac{1}{2}\tr(\sigma(x)\sigma(x)^\top \Hess f(x)).
	\end{talign}
\end{theorem}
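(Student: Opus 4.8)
The plan is to apply \Ito's formula to $f(x_t)$ and exploit the fact that the stochastic-integral term it produces is a mean-zero martingale. Fixing the deterministic start $x_0 = x$, the generator is by definition $(Af)(x) = \lim_{t\to 0}\tfrac1t(\E f(x_t) - f(x))$, so I would first derive an exact expression for $\E f(x_t) - f(x)$ valid for all small $t > 0$ and then differentiate it at $t = 0$.

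First I would introduce the candidate operator $(Lf)(x) \defeq b(x)^\top\grad f(x) + \tfrac12\tr(\sigma(x)\sigma(x)^\top\Hess f(x))$, which is precisely the right-hand side of \eqref{eq:generator}. Applying \Ito's formula to the $C^2$ function $f$ along $dx_t = b(x_t)\,dt + \sigma(x_t)\,dB_t$, and using the quadratic-covariation rule $d\langle x^i, x^j\rangle_t = (\sigma\sigma^\top)_{ij}(x_t)\,dt$, gives
\begin{align*}
	df(x_t) = (Lf)(x_t)\,dt + \grad f(x_t)^\top \sigma(x_t)\,dB_t.
\end{align*}
Integrating from $0$ to $t$ yields $f(x_t) - f(x) = \int_0^t (Lf)(x_s)\,ds + \int_0^t \grad f(x_s)^\top\sigma(x_s)\,dB_s$. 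Taking expectations, the \Ito integral has zero mean because $\grad f$ is bounded with compact support and $\sigma$ is continuous, so the integrand is bounded and the integral is a true (square-integrable) martingale; hence $\E f(x_t) - f(x) = \E\int_0^t (Lf)(x_s)\,ds$.

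Finally I would divide by $t$ and send $t\to 0$. Since $Lf$ is bounded and continuous and $s\mapsto x_s$ is almost surely continuous with $x_0 = x$, the fundamental theorem of calculus together with dominated convergence give $\lim_{t\to 0}\tfrac1t\,\E\int_0^t (Lf)(x_s)\,ds = (Lf)(x)$, which is exactly \eqref{eq:generator}. The main obstacle is the analytic bookkeeping rather than the algebra: one must justify (i) that the stochastic integral is a genuine martingale and not merely a local one — this is where the compact support of $\grad f$ (or, more generally, suitable growth control on $\grad f$ and $\sigma$) enters, via $\E\int_0^t\|\grad f(x_s)^\top\sigma(x_s)\|_2^2\,ds < \infty$ — and (ii) the interchange of limit and expectation, which follows from boundedness and continuity of $Lf$. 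To reach the full class $f\in\overline{C_c^2(\mathcal{X})}$ stated in the theorem, I would then extend the identity $A = L$ from $C_c^2$ to its closure by a standard density and continuity argument.
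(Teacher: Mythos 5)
This statement is a cited background result (\citealp[Thm.~7.3.3]{oksendal2003stochastic}); the paper offers no proof of its own, and your argument—\Ito's formula, the martingale property of the stochastic integral (justified by compact support of $\grad f$), then Dynkin-type differentiation at $t=0$—is exactly the standard proof given in the cited source. The one soft spot is your closing sentence: extending $A=L$ to $\overline{C_c^2(\mathcal{X})}$ by "density and continuity" needs care, since for closures taken in sup norm the limit functions need not be twice differentiable and the right-hand side of \eqref{eq:generator} is then undefined; but the core case $f\in C_c^2$ is what the paper actually uses, and your proof of it is sound.
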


For the Langevin diffusion (\ref{eq:ld}), substituting $\nabla \log p(\cdot)$ for $b(\cdot)$ and $\sqrt{2} I$ for $\sigma(\cdot)$ in \Cref{thm:gen}, we obtain $Af = (\nabla \log p)^\top \nabla f + \nabla\cdot \nabla f$. 
Replacing $\nabla f$ with a vector-valued function $g$ gives the Langevin Stein operator in \eqref{eq:langevin-stein-op}.

To derive a Stein operator that works well for constrained domains, we consider the Riemannian Langevin diffusion~\citep{patterson2013stochastic,xifara2014langevin,ma2015complete} that extends the Langevin diffusion to non-Euclidean geometries encoded in a positive definite \emph{metric tensor} $G(\theta)$:
\begin{equation}
     d\theta_t = (G(\theta_t)^{-1}\nabla\log p(\theta_t) + \nabla \cdot G(\theta_t)^{-1}) dt + \sqrt{2} G(\theta_t)^{-1/2} dB_t.\footnote{A matrix divergence $\nabla\cdot G(\theta)$ is the vector obtained by computing the divergence of each row of $G(\theta)$.}
\end{equation}

We show in \Cref{app:rld-vs-mirror-langevin} that the choice $G = \nabla^2\psi$ yields the recent mirror-Langevin diffusion~\citep{zhang2020wasserstein,chewi2020exponential} 
\begin{talign} \label{eq:mirror-langevin}
	\theta_t = \grad \psi^*(\eta_t), \quad d \eta_t 
	= \nabla\log p(\theta_t) dt + \sqrt{2} \nabla^2 \psi(\theta_t)^{1/2} d B_t.
\end{talign}

According to \Cref{thm:gen}, the generator of the mirror-Langevin diffusion described by \eqref{eq:mirror-rld-form} is
\begin{align}
	(A_{p, \psi}f)(\theta) &= (\nabla^2\psi(\theta)^{-1}\nabla\log p(\theta) + \nabla\cdot\nabla^2\psi(\theta)^{-1})^\top \nabla f(\theta) + \tr(\nabla^2\psi(\theta)^{-1}\nabla^2 f(\theta)) \\
	&= \nabla f(\theta)^\top \nabla^2\psi(\theta)^{-1}\nabla\log p(\theta) + \nabla \cdot (\nabla^2\psi(\theta)^{-1}\nabla f(\theta)).
\end{align}
Now substituting $g(\theta)$ for $\nabla f(\theta)$, we obtain the associated mirrored  Stein operator:
\begin{align}
	(\mathcal{M}_{p, \psi}g)(\theta) =  g(\theta)^\top \nabla^2\psi(\theta)^{-1}\nabla\log p(\theta) + \nabla \cdot (\nabla^2\psi(\theta)^{-1}g(\theta)).
\end{align}

\section{Riemannian Langevin Diffusions and Mirror-Langevin Diffusions}
\label{app:rld-vs-mirror-langevin}

\citet{zhang2020wasserstein} pointed out \eqref{eq:mirror-langevin} is a particular case of the Riemannian LD. 
However, they did not give an explicit derivation.
The Riemannian LD~\citep{patterson2013stochastic,xifara2014langevin,ma2015complete} with $\nabla^2\psi(\cdot)$ as the metric tensor is
\begin{equation} \label{eq:mirror-rld-form}
	d\theta_t = (\nabla^2\psi(\theta_t)^{-1}\nabla\log p(\theta_t) + \nabla \cdot \nabla^2\psi(\theta_t)^{-1}) dt + \sqrt{2} \nabla^2\psi(\theta_t)^{-1/2} dB_t.
\end{equation}

To see the connection with mirror-Langevin diffusion, we would like to obtain the SDE that describes the evolution of $\eta_t = \nabla\psi(\theta_t)$ under the diffusion. 
This requires the following theorem that provides the analog of the ``chain rule'' in SDEs.

\begin{theorem}[It\^o formula; {\citealp[Thm~4.2.1]{oksendal2003stochastic}}] \label{thm:ito-lemma}
	Let $(x_t)_{t \geq 0}$ be an It\^o process in $\cX \subset \reals^d$ satisfying
	$d x_t = b(x_t) dt + \sigma(x_t) dB_t $.
	Let $f(x) \in C^2: \reals^d \to  \reals^{d'}$. 
	Then $y_t = f(x_t)$ is again an It\^o process, and its $i$-th dimension satisfies
	\begin{align}
		d y_{t, i} =  (\nabla f_{i}(x_t)^\top b(x_t) + \frac{1}{2} \tr(\nabla^2 f_i(x_t) \sigma(x_t) \sigma(x_t)^\top) dt +  \nabla f_i(x_t)^\top \sigma(x_t) dB_t.
	\end{align}
\end{theorem}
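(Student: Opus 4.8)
The plan is to establish the classical multidimensional \Ito formula by reducing to a scalar target, discretizing time, and performing a second-order Taylor expansion whose quadratic term produces the \Ito correction. First I would reduce the problem: since the claimed identity is componentwise in $y_{t,i} = f_i(x_t)$, it suffices to treat a scalar $f \in C^2(\reals^d)$. By a standard localization argument — stopping at $\tau_m = \inf\{s \ge 0 : \|x_s\|_2 \ge m\}$ and sending $m \to \infty$ — I may further assume that $f$ and its first two derivatives are bounded and that $b, \sigma$ are bounded; and by density of elementary (piecewise-constant) integrands in the space of \Ito-integrable processes, I may assume $b$ and $\sigma$ are step processes in time, so the stochastic integrals reduce to finite sums.

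Next I would discretize and Taylor-expand. Fixing a partition $0 = s_0 < \dots < s_N = t$ with mesh tending to $0$ and writing $\Delta x_k = x_{s_{k+1}} - x_{s_k}$, I telescope $f(x_t) - f(x_0) = \sum_k \bigl(f(x_{s_{k+1}}) - f(x_{s_k})\bigr)$ and expand each increment to second order,
\[
f(x_{s_{k+1}}) - f(x_{s_k}) = \nabla f(x_{s_k})^\top \Delta x_k + \tfrac12 \Delta x_k^\top \nabla^2 f(x_{s_k})\, \Delta x_k + R_k .
\]
The first-order sum $\sum_k \nabla f(x_{s_k})^\top \Delta x_k$ converges in $L^2$ to $\int_0^t \nabla f(x_s)^\top b(x_s)\, ds + \int_0^t \nabla f(x_s)^\top \sigma(x_s)\, dB_s$ directly from the definition of the \Ito integral as an $L^2$ limit of left-endpoint Riemann sums. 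Substituting $\Delta x_k \approx b(x_{s_k})\Delta s_k + \sigma(x_{s_k})\Delta B_k$ into the quadratic term, every contribution carrying a factor $\Delta s_k$ is $O(\text{mesh})$ and drops out, leaving $\tfrac12 \sum_k \Delta B_k^\top \sigma(x_{s_k})^\top \nabla^2 f(x_{s_k})\,\sigma(x_{s_k})\,\Delta B_k$.

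The crux of the argument — and the step I expect to be the main obstacle — is to show that this surviving quadratic sum converges to $\tfrac12 \int_0^t \tr\bigl(\sigma\sigma^\top \nabla^2 f\bigr)(x_s)\, ds$ rather than to zero. Concretely, I would prove the key lemma that, for any bounded adapted $g$,
\[
\sum_k g(x_{s_k})\,\Delta B_k^{(j)}\Delta B_k^{(l)} \;\xrightarrow{L^2}\; \delta_{jl}\int_0^t g(x_s)\, ds ,
\]
which is precisely the source of the \Ito correction. This holds because $\E[\Delta B_k^{(j)}\Delta B_k^{(l)} \mid \mathcal{F}_{s_k}] = \delta_{jl}\Delta s_k$, so the sum has the same conditional mean as the Riemann approximation $\delta_{jl}\sum_k g(x_{s_k})\Delta s_k$, while its variance is controlled by the independence of Brownian increments and the bound $\Var(\Delta B_k^{(j)}\Delta B_k^{(l)}) = O(\Delta s_k^2)$, which sums to $O(\text{mesh}) \to 0$. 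Applying the lemma with $g = (\sigma^\top \nabla^2 f\, \sigma)_{jl}(x_{s_k})$ and using the cyclic property of the trace yields the stated limit.

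Finally, I would bound the remainder $\sum_k R_k \to 0$ in probability using uniform continuity of $\nabla^2 f$ together with the total-quadratic-variation bound $\sum_k \|\Delta x_k\|_2^2 = O(1)$, then assemble the three limits into the integral form $f(x_t) - f(x_0) = \int_0^t (\nabla f^\top b + \tfrac12 \tr(\sigma\sigma^\top \nabla^2 f))\, ds + \int_0^t \nabla f^\top \sigma\, dB_s$, whose differential is the claimed SDE. Removing the localization and boundedness assumptions by monotone passage $m \to \infty$ and restoring the component index $i$ completes the argument.
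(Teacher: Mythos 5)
This statement is background material: the paper gives no proof of its own, citing it directly from \citet{oksendal2003stochastic} (Thm.~4.2.1), so there is no in-paper argument to compare against. Your sketch reproduces the classical textbook proof of the multidimensional It\^o formula --- componentwise reduction to scalar $f$, localization, second-order Taylor expansion along a shrinking partition, the $L^2$ quadratic-variation lemma $\sum_k g(x_{s_k})\,\Delta B_k^{(j)}\Delta B_k^{(l)} \to \delta_{jl}\int_0^t g(x_s)\,ds$ as the source of the trace correction, and remainder control via uniform continuity of $\nabla^2 f$ and the boundedness in probability of $\sum_k \|\Delta x_k\|_2^2$ --- and it is correct in outline; it is essentially the argument in the cited reference. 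The only step worth tightening is the reduction to elementary integrands: since the coefficients here are state-dependent, $s \mapsto b(x_s)$ and $s \mapsto \sigma(x_s)$ are not literally step processes, so one approximates these adapted processes by elementary ones in $L^2$ (as Oksendal does) rather than assuming $b$ and $\sigma$ themselves are piecewise constant.
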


Substituting $\nabla\psi$ for $f$ in \Cref{thm:ito-lemma}, we have the SDE of $\eta_t = \nabla\psi(\theta_t)$ as
\begin{align}
	d \eta_t = (\nabla \log p(\theta_t) + \nabla^2\psi(\theta_t)\nabla\cdot\nabla^2\psi(\theta_t)^{-1} + h(\theta_t)) dt + \sqrt{2}\nabla^2\psi(\theta_t)^{1/2} dB_t,
\end{align}
where $h(\theta_t)_i = \tr(\nabla^2_{\theta_t} (\nabla_{\theta_{t, i}}\psi(\theta_t)) \nabla^2\psi(\theta_t)^{-1})$.
Moreover, we have
\begin{align}
	&[\nabla^2\psi(\theta_t)\nabla\cdot\nabla^2\psi(\theta_t)^{-1}]_i + \tr(\nabla^2_{\theta_t} (\nabla_{\theta_{t, i}}\psi(\theta_t)) \nabla^2\psi(\theta_t)^{-1}) \\
	&= \sum_{\ell=1}^d \sum_{j=1}^d \nabla^2\psi(\theta_t)_{ij} \nabla_{\theta_{t, \ell}}[\nabla^2\psi(\theta_t)^{-1}]_{j\ell} + \sum_{\ell=1}^d \sum_{j=1}^d \nabla_{\theta_{t, \ell}} \nabla^2 \psi(\theta_t)_{ij} [\nabla^2\psi(\theta_t)^{-1}]_{j\ell} \\
	&= \sum_{\ell=1}^d \nabla_{\theta_{t, \ell}} \left( \sum_{j=1}^d  \nabla^2 \psi(\theta_t)_{ij} [\nabla^2\psi(\theta_t)^{-1}]_{j\ell} \right) = \sum_{\ell=1}^d \nabla_{\theta_{t, \ell}} I_{i\ell} = 0.
\end{align}
Therefore, the $\eta_t$ diffusion is described by the SDE:
\begin{align}
	d\eta_t = \nabla \log p(\theta_t) dt + \sqrt{2}\nabla^2 \psi(\theta_t)^{1/2} dB_t,\quad \theta_t = \nabla\psi^*(\eta_t).
\end{align}

\section{Mode Mismatch Under Transformations}
\label{app:mode-mismatch}

\begin{figure}[ht]
	\begin{subfigure}{\textwidth}
	    \centering
		\includegraphics[height=4cm]{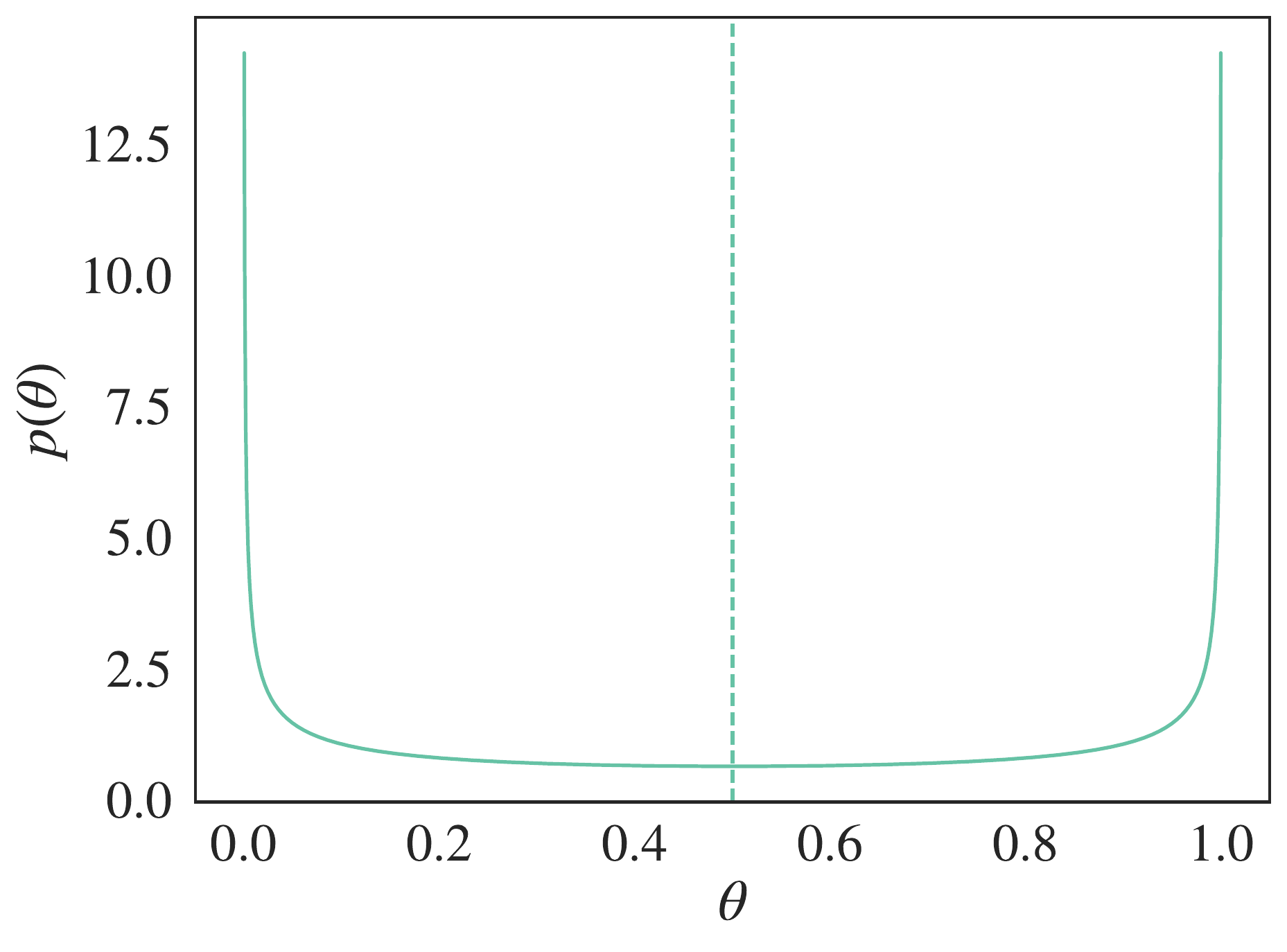}
		\includegraphics[height=4cm]{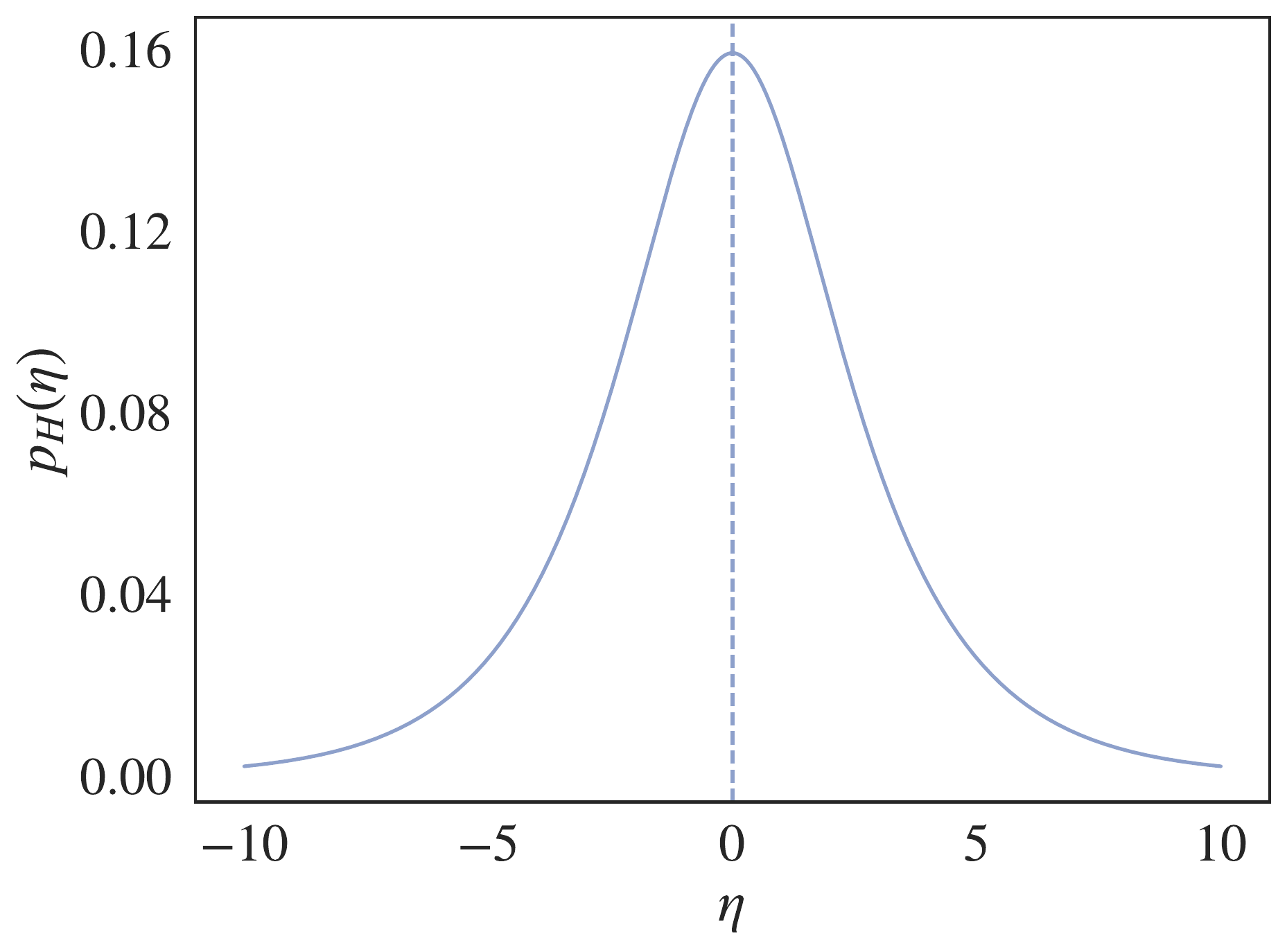}
	\end{subfigure} \\ [3ex]
	\begin{subfigure}{\textwidth}
	    \centering
	    \hskip 0.1in
		\includegraphics[height=4.05cm]{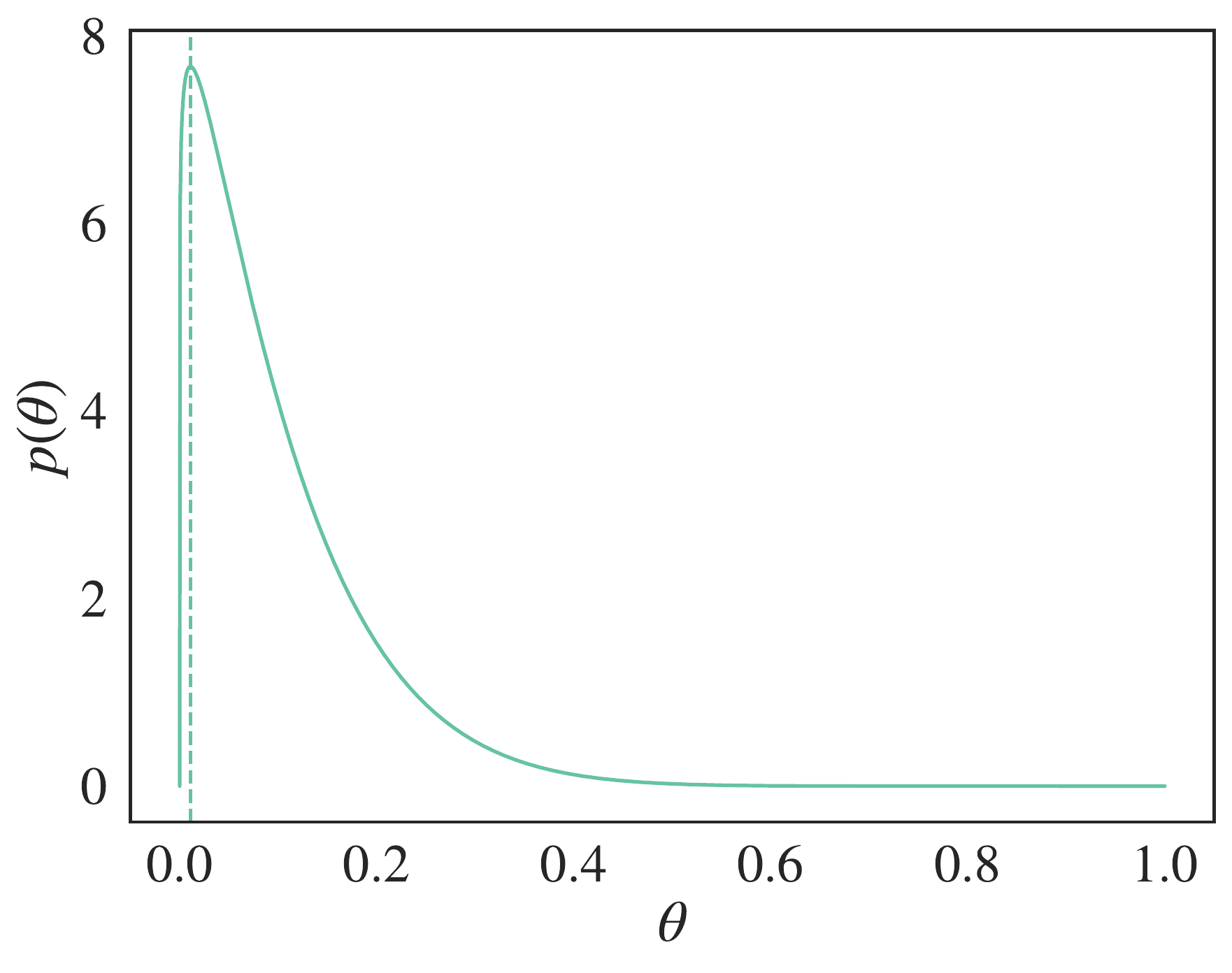}
		\hskip 0.06in
		\includegraphics[height=4cm]{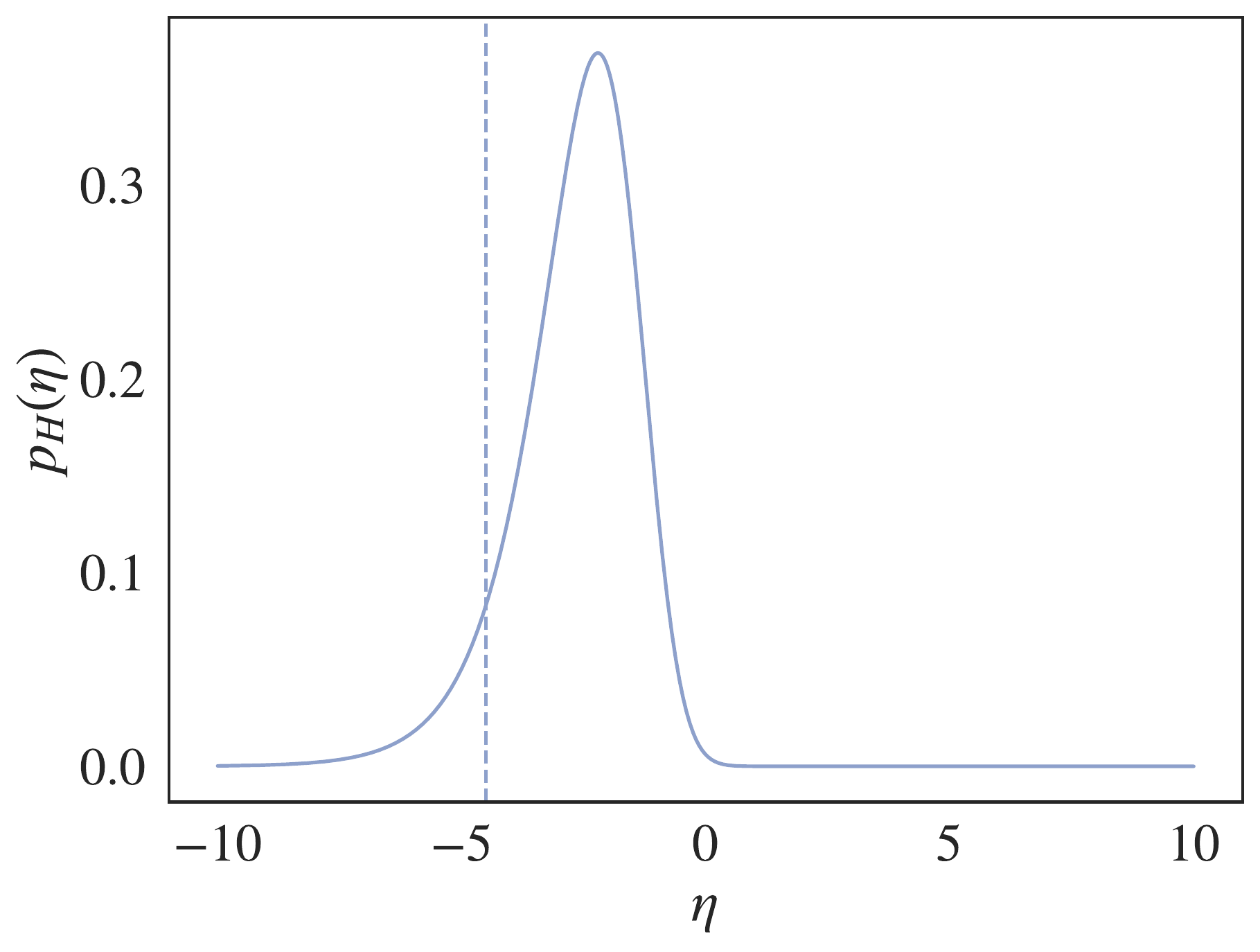}
	\end{subfigure}
	\caption{The density functions of the same distribution in $\theta$ (left) and $\eta$ (right) space under the transformation $\eta = \nabla\psi(\theta)$. Each $\theta$ follows a Beta distributions on $[0, 1]$. We choose the negative entropy $\psi(\theta) = \theta\log \theta + (1 - \theta)\log (1 - \theta)$. Then, the transformation is the logit function $\eta = \log (\theta/(1 - \theta))$ and its reverse is the sigmoid function $\theta = 1 / (1 + e^{-\eta})$.  \emph{Top}: $\theta \sim \mathrm{Beta}(0.5, 0.5)$. Dashed lines mark the mode of the transformed density $p_H(\eta)$ and the corresponding $\theta$, which gives the lowest value of $p(\theta)$; \emph{Bottom}: $\theta \sim \mathrm{Beta}(1.1, 10)$. Dashed lines mark the mode of the target density $p(\theta)$ and the corresponding $\eta$, which clearly does not match the mode of $p_H(\eta)$.}
\end{figure}

\section{Background on Reproducing Kernel Hilbert Spaces}
\label{app:bg-rkhs}

Let $\cH$ be a Hilbert space of functions defined on $\cX$ and taking their values in $\mathbb{R}$.
We say $k$ is a reproducing kernel~(or kernel) of $\cH$ if $\forall x\in \cX, k(x, \cdot) \in \cH$ and $\forall f\in \cH, \inner{f}{k(x, \cdot)}_\cH  = f(x)$.
$\cH$ is called a reproducing kernel Hilbert space (RKHS) if it has a kernel.
Kernels are positive definite~(p.d.) functions, which means that matrices with the form $(k(x_i, x_j))_{ij}$ are positive semidefinite.
For any p.d. function $k$, there is a unique RKHS with $k$ as the reproducing kernel,
which can be constructed by the completion of $\{\sum_{i=1}^n a_i k(x_i, \cdot), x_i \in \cX, a_i \in \mathbb{R}, i \in \mathbb{N}\}$.

Now we assume $\cX$ is a metric space, 
$k$ is a bounded continuous kernel with the RKHS $\cH$, and
$\nu$ is a positive measure on $\cX$. 
$L^2(\nu)$ denote the space of all square-integrable functions w.r.t. $\nu$. 
Then the kernel integral operator $T_k: L^2(\nu) \to L^2(\nu)$ defined by
\begin{equation} \label{eq:kernel-integral-op}
	T_k g = \int_{\cX} g(x) k(x, \cdot) d\nu
\end{equation}
is compact and self-adjoint. 
Therefore, according to the spectral theorem, there exists an at most countable set of positive eigenvalues $\{\lambda_j\}_{j \in J} \subset \mathbb{R}$ with $\lambda_1 \geq \lambda_2 \geq \dots$ converging to zero and orthonormal eigenfunctions $\{u_j\}_{j\in J}$ such that
\begin{equation}
T_k u_j = \lambda_j u_j,
\end{equation}
and $k$ has the representation
$
k(x, x') = \sum_{j\in J} \lambda_j u_j(x)u_j(x')
$
~(Mercer's theorem on non-compact domains), where the convergence of the sum is absolute and uniform on compact subsets of $\cX \times \cX$~\citep{ferreira2009eigenvalues}.

\section{Supplementary Experimental Details and Additional Results}
\label{app:exp-supplement}
In this section, we report supplementary details and additional results from the experiments of \cref{sec:experiments}.
In \cref{sec:simplex,sec:selective}, we use the inverse multiquadric input kernel $k(\theta, \theta') = (1 + {\|\theta - \theta'\|^2_2/}{\ell^2})^{-1/2}$ due to its convergence control properties~\citep{gorham2017measuring}.
In the unconstrained experiments of \Cref{sec:unconstrained}, we use the Gaussian kernel  $k(\theta, \theta') = \exp(-\|\theta - \theta'\|^2_2/\ell^2)$ for consistency with past work.
The bandwidth $\ell$ is determined by the median heuristic \citep{garreau2017large}. 
We select $\tau$ from $\{0.98, 0.99\}$ for all SVMD experiments.
For unconstrained targets, we report, for each method, results from the best fixed step size $\eps\in\{0.01, 0.05, 0.1, 0.5, 1\}$ selected on a separate validation set.
For constrained targets, we select step sizes adaptively to accommodate rapid density growth near the boundary; specifically, we use RMSProp~\citep{hinton2012neural}, an extension of the AdaGrad algorithm~\citep{duchi2011adaptive} used in  \citet{liu2016stein}, and report performance with the best learning rate. 
Results were recorded on an Intel(R) Xeon(R) CPU E5-2690 v4 @ 2.60GHz and an NVIDIA Tesla P100 PCIe 16GB.

\subsection{Approximation quality on the simplex}
\label{app:exp-simplex}

The sparse Dirichlet posterior of \citet{patterson2013stochastic} extended to $20$ dimensions features a sparse, symmetric Dir($\alpha$) prior with $\alpha_k = 0.1$ for $k\in\{1,\dots,20\}$ and sparse count data $n_1 = 90,\;n_2=n_3=5,\;n_j = 0\;(j > 3)$, modeled via a multinomial likelihood.
The quadratic target satisfies 
$
\log p(\theta) = -\frac{1}{2\sigma^2}\theta^\top A \theta + \mathrm{const}
$, 
where we slightly modify the target density of \citet{ahn2020efficient} to make it less flat by introducing a scale parameter $\sigma = 0.01$.
$A \in \mathbb{R}^{19\times 19}$ is a positive definite matrix generated by normalizing products of random matrices with \iid elements drawn from $\mathrm{Unif}[-1, 1]$.

We initialize all methods with i.i.d samples from Dirichlet($5$) to prevent any of the initial particles being too close to the boundary. 
For each method and each learning rate we apply 500 particle updates. 
For SVMD we set $\tau = 0.98$. 
We search the base learning rates of RMSProp in $\{0.1, 0.01, 0.001\}$ for SVMD and MSVGD.
Since projected SVGD applies updates in the $\theta$ space, the appropriate learning rate range is smaller than those of SVMD and MSVGD. 
There we search the base learning rate of RMSProp in $\{0.01, 0.001, 0.0001\}$. 
For all methods the results under each base learning rate are plotted in \Cref{fig:dir20dwrtlr}.

\begin{figure}[ht]
	\centering
	\includegraphics[width=\textwidth]{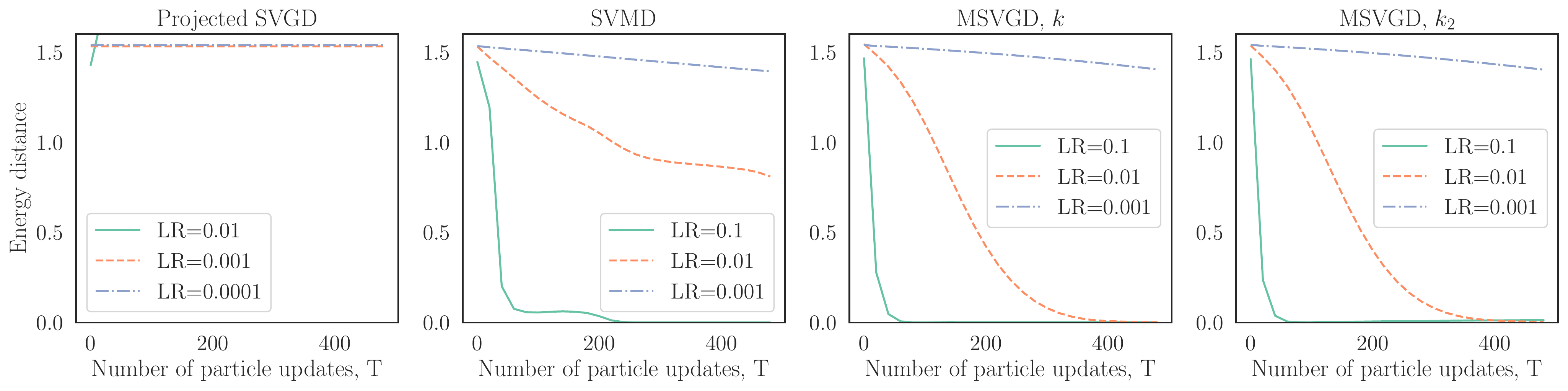}
	\caption{Sampling from a Dirichlet target on a $20$-simplex. We plot the energy distance to a ground truth sample of size 1000.} 
	\label{fig:dir20dwrtlr}
\end{figure}

\begin{figure}[ht]
	\centering
	\includegraphics[width=\textwidth]{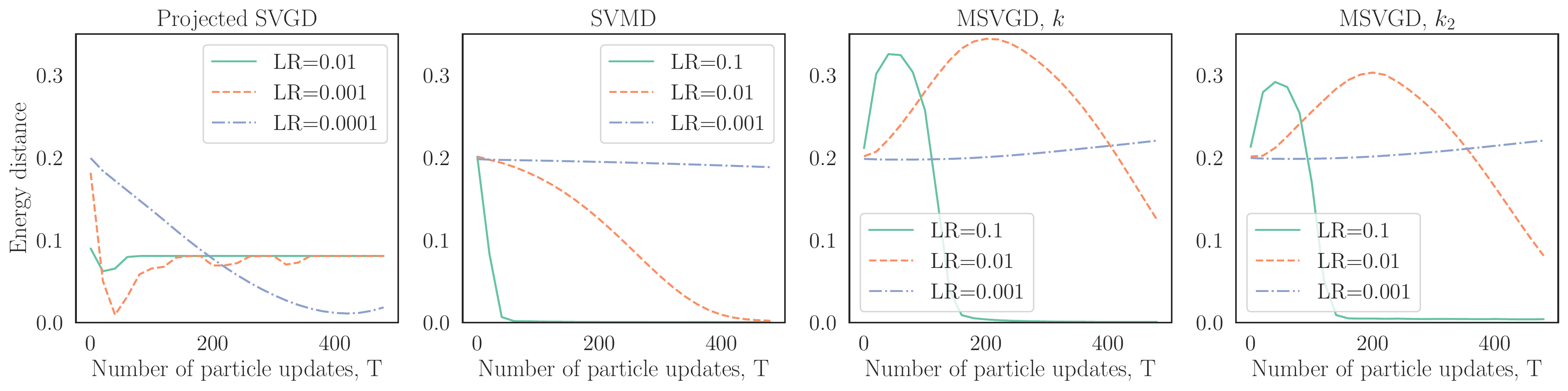}
	\caption{Sampling from a quadratic target on a $20$-simplex. We plot the energy distance to a ground truth sample of size 1000 drawn by NUTS~\citep{hoffman2014no}.}
	\label{fig:quad20dwrtlr}
\end{figure}

\subsection{Confidence intervals for post-selection inference}

Given a dataset $X \in \reals^{\tilde{n} \times p}$, $y \in \mathbb{R}^{\tilde{n}}$, the randomized Lasso \citep{tian2016magic} solves the following problem:
 \begin{talign} \label{eq:rand-lasso}
 	\argmin_{\beta\in\reals^p} \frac{1}{2}\norm{y - X\beta}^2_2 + \lambda \norm{\beta}_1 - w^\top \beta + \frac{\epsilon}{2}\norm{\beta}_2^2,\quad w\sim \mathbb{G}.
 \end{talign}
where $\mathbb{G}$ is a user-specified log-concave distribution with density $g$. 
We choose $\mathbb{G}$ to be zero-mean independent Gaussian distributions while leaving its scale and the ridge parameter $\epsilon$ to be automatically determined by the \texttt{randomizedLasso} function of the \texttt{selectiveInference} package.
We initialize the particles of our SVMD and MSVGD in the following way:
First, we map the solution $\hat{\beta}_E$ to the dual space by $\nabla\psi$. 
Next, we add i.i.d. standard Gaussian noise to $n$ copies of the image in the dual space.
Finally, we map the $n$ particles back to the primal space by $\nabla\psi^*$ and use them as the initial locations. 
Below we discuss the  remaining settings and additional results of the simulation and the HIV-1 drug resistance experiment separately.

\paragraph{Simulation}
In our simulation we mostly follow the settings of \citet{sepehri2017non} except using a different penalty level $\lambda$ recommended in the \texttt{selectiveInference} R package. 
We set $\tilde{n} = 100$ and $p=40$.
The design matrix $X$ is generated from an equi-correlated model, i.e., each datapoint $x_i \in \reals^p$ is generated i.i.d. from $\mathcal{N}(0, \Sigma)$ with $\Sigma_{ii} = 1, \Sigma_{ij} = 0.3\;(i \neq j)$ and then normalized to have almost unit length.
The normalization is done by first centering each dimension by subtracting the mean and dividing the standard deviation of that column of $X$, then additionally multiplying $1/\tilde{n}^{1/2}$.
$y$ is generated from a standard Gaussian which is independent of $X$, i.e., we assume the global null setting where the true value of $\beta$ is zero.
We set $\lambda$ to be the value returned by \texttt{theoretical.lambda} of the \texttt{selectiveInference} R package multiplied a coefficient $0.7\tilde{n}$, where the $0.7$ adjustment is introduced in the test examples of the R package to reduce the regularization effect so that we have a reasonably large set of selected features when $p=40$.
The base learning rates for SVMD and MSVGD are set to $0.01$ and we run them for $T=1000$ particle updates.
$\tau$ is set to $0.98$ for SVMD.

Our 2D example in \Cref{fig:sel2d} is grabbed from one run of the simulation where there happen to be only $2$ features selected by the randomized Lasso. 
The selective distribution in this case has log-density $\log p(\theta) = -8.07193 ((2.39859\theta_1 + 1.90816\theta_2 + 2.39751)^2 + (1.18099\theta_2 -1.46104)^2) + \mathrm{const}, \theta_{1, 2} \geq 0$.

The error bars for actual coverage levels in \Cref{fig:coverage-wrt-k} and \Cref{fig:coverage} are $95\%$ Wilson intervals~\citep{wilson1927probable}, 
which is known to be more accurate than $\pm 2$ standard deviation intervals for binomial proportions like the coverage. 
In \Cref{fig:width-wrt-k} and \Cref{fig:width} we additionally plot the average length of the confidence intervals w.r.t. different sample size $N$ and nominal coverage levels.
For all three methods the CI widths are very close, although MSVGD consistently has wider intervals than SVMD and \texttt{selectiveInference}.
This indicates that SVMD can be preferred over MSVGD when both methods produce coverage above the nominal level.

\paragraph{HIV-1 drug resistance}
We take the vitro measurement of log-fold change under the 3TC drug as response and include mutations that had appeared 11 times in the dataset as regressors. 
This results in $\tilde{n} = 663$ datapoints with $p = 91$ features. 
We choose $\lambda$ to be the value returned by \texttt{theoretical.lambda} of the \texttt{selectiveInference} R package multiplied by $\tilde{n}$. 
The base learning rates for SVMD and MSVGD are set to $0.01$ and we run them for $T=2000$ particle updates.
$\tau$ is set to $0.99$ for SVMD.

\begin{figure}[ht]
	\centering
	\begin{subfigure}[t]{0.45\textwidth}
		\includegraphics[width=0.972\textwidth]{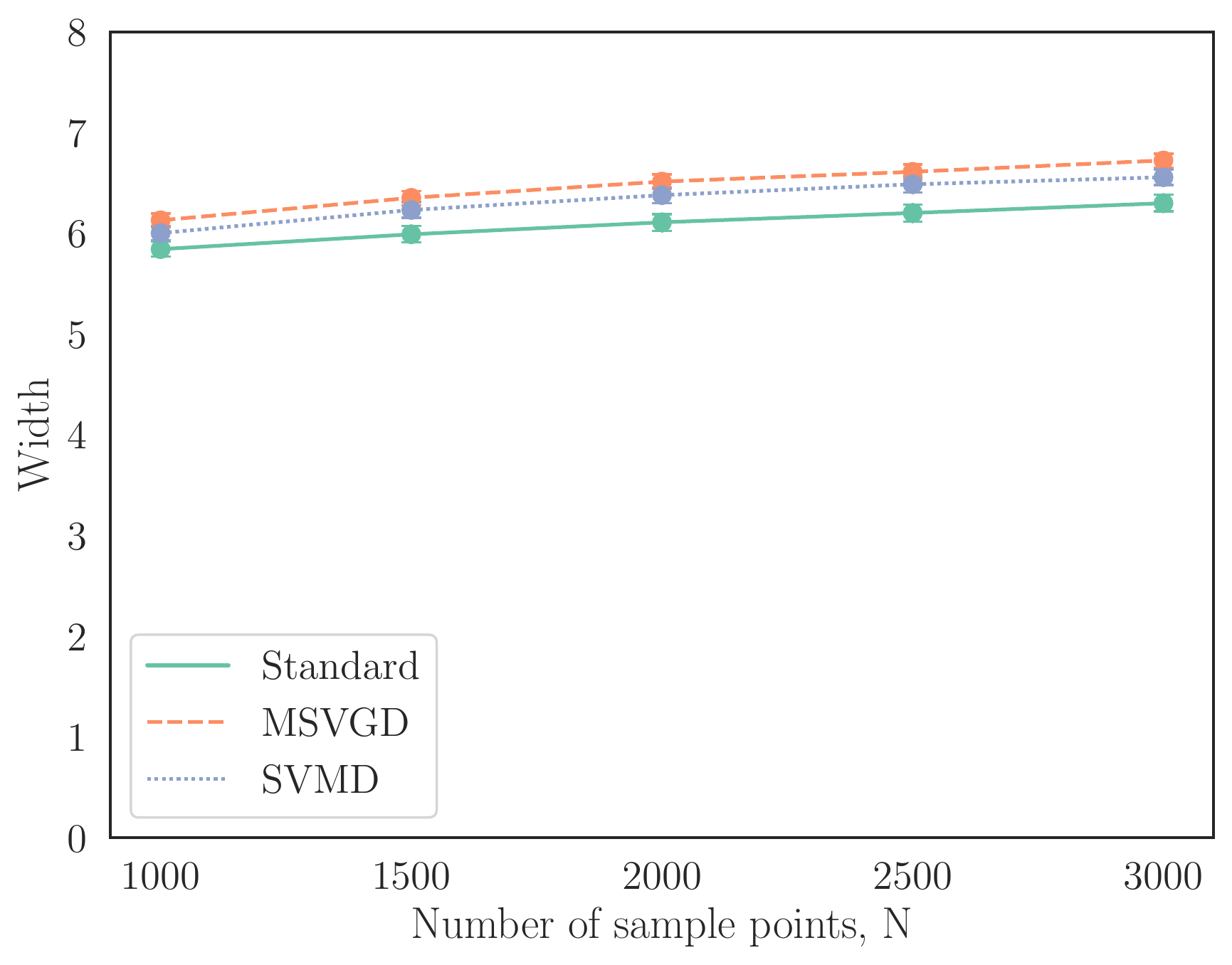}
		\caption{Nominal coverage: 0.9}
		\label{fig:width-wrt-k}
	\end{subfigure}
	\begin{subfigure}[t]{0.45\textwidth}
		\includegraphics[width=\textwidth]{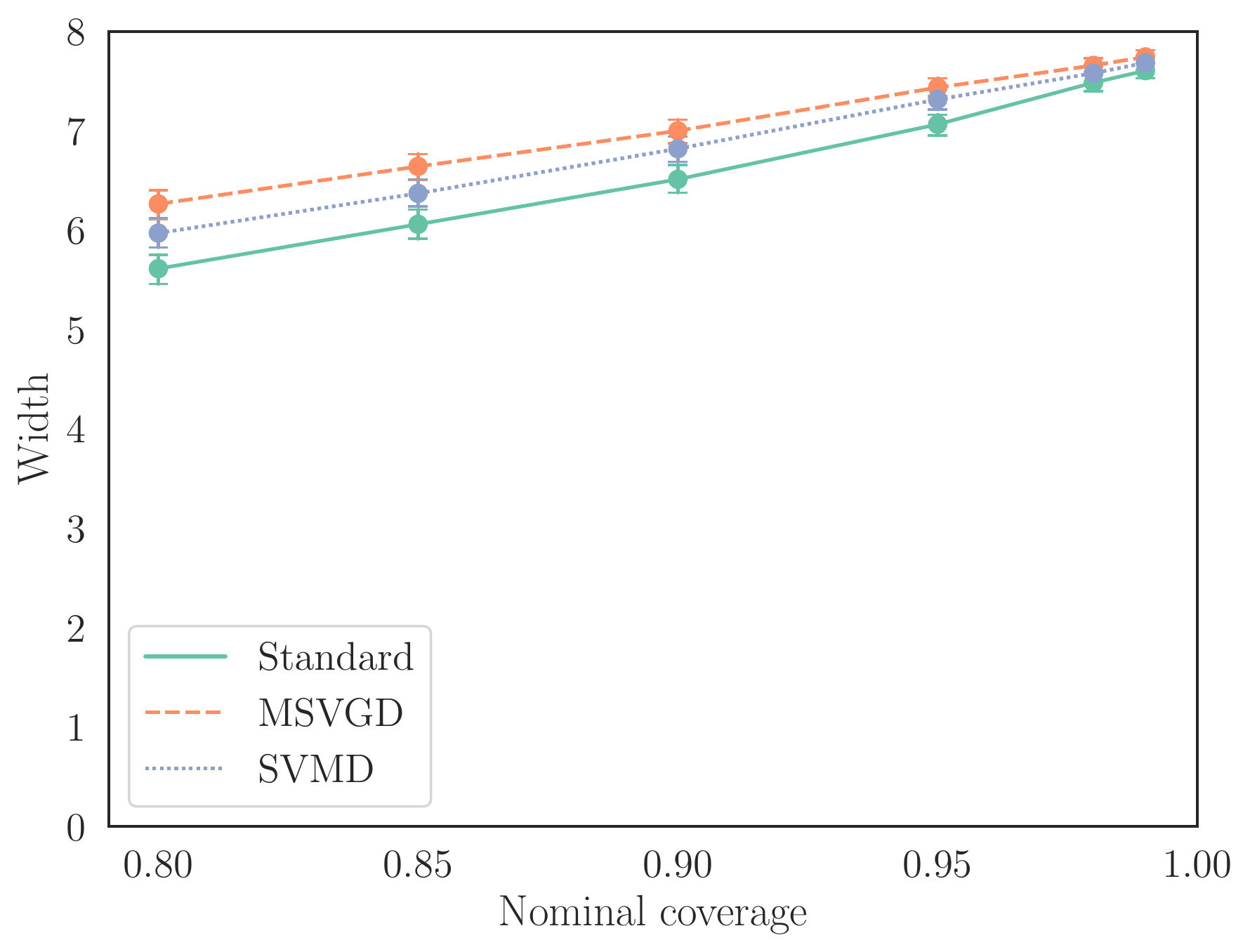}
		\caption{$N=5000$ sample points}
		\label{fig:width}
	\end{subfigure}
	\caption{Width of post-selection CIs across (a) 500  / (b) 200 replications of simulation of \citet{sepehri2017non}.}
\end{figure}

\subsection{Large-scale posterior inference with non-Euclidean geometry}

The Bayesian logistic regression model we consider is $\prod_{i=1}^{\tilde{n}} p(y_i|x_i, w) p(w)$, where $p(w) = \mathcal{N}(w|0, I)$, $p(y_i|x_i, w) = \mathrm{Bernoulli}(\sigma(w^\top x_i))$. 
The bias parameter is absorbed into into $w$ by adding an additional feature $1$ to each $x_i$. 
The gradient of the log density of the posterior distribution of $w$ is
$
\nabla_{w} \log p(w|\{y_i, x_i\}_{i=1}^N) = \sum_{i=1}^N x_i(y_i - \sigma(w^\top x_i)) - w.
$
We choose the metric tensor $\nabla^2 \psi(w)$ to be the Fisher information matrix (FIM) of the likelihood:
\begin{align}
	F &= \frac{1}{\tilde{n}}\sum_{i=1}^{\tilde{n}} \mathbb{E}_{p(y_i|w,x_i)}[\nabla_w \log p(y_i|x_i, w) \nabla_w \log p(y_i|x_i, w)^\top] \\
	&= \frac{1}{\tilde{n}} \sum_{i=1}^{\tilde{n}} \sigma(w^\top x_i) (1 - \sigma(w^\top x_i)) x_i x_i^\top.
\end{align}
Following \citet{wang2019stein}, for each iteration $r$ ($r \geq 1$), we estimate the sum with a stochastic minibatch $\mathcal{B}_r$ of size $256$: $\hat{F}_{\mathcal{B}_r} = \frac{\tilde{n}}{|\mathcal{B}_r|}\sum_{i \in \mathcal{B}_r} \sigma(w^\top x_i) (1 - \sigma(w^\top x_i)) x_i x_i^\top$ and approximate the FIM with a moving average across iterations:
\begin{align}
	\hat{F}_r = \rho_r \hat{F}_{r - 1} + (1 - \rho_r) \hat{F}_{\mathcal{B}_r}, \quad\text{where }\rho_r = \min(1 - 1 / r, 0.95).
\end{align}
To ensure the positive definiteness of the FIM, a damping term $0.01 I$ is added before taking the inverse.
For RSVGD and SVNG, the gradient of the inverse of FIM is estimated with $\nabla_{w_j} F^{-1} \approx -\hat{F}_r^{-1} (\hat{\nabla}_{w_j}^r F) \hat{F}_r^{-1}$, where 
$
	\hat{\nabla}_{w_j}^r F= \rho_r \hat{\nabla}_{w_j}^{r-1} F + (1 - \rho_r) \nabla_{w_j}\hat{F}_{\mathcal{B}_r}.
$

We run each method for $T=3000$ particle updates with learning rates in $\{0.01, 0.05, 0.1, 0.5, 1\}$ and average the results for 5 random trials. 
$\tau$ is set to $0.98$ for SVNG. 
For each run, we randomly keep $20\%$ of the dataset as test data, $20\%$ of the remaining points as the validation set, and all the rest as the training set.
The results of each method on validation sets with all choices of learning rates are plotted in \Cref{fig:lr-valid}. 
We see that the SVNG updates are very robust to the change in learning rates and is able to accommodate very large learning rates (up to 1) without a significant loss in performance.
The results in \Cref{fig:lr} are reported with the learning rate that performs best on the validation set.

\begin{figure}[ht]
	\centering
	\begin{subfigure}[t]{0.45\textwidth}
		\includegraphics[width=\textwidth]{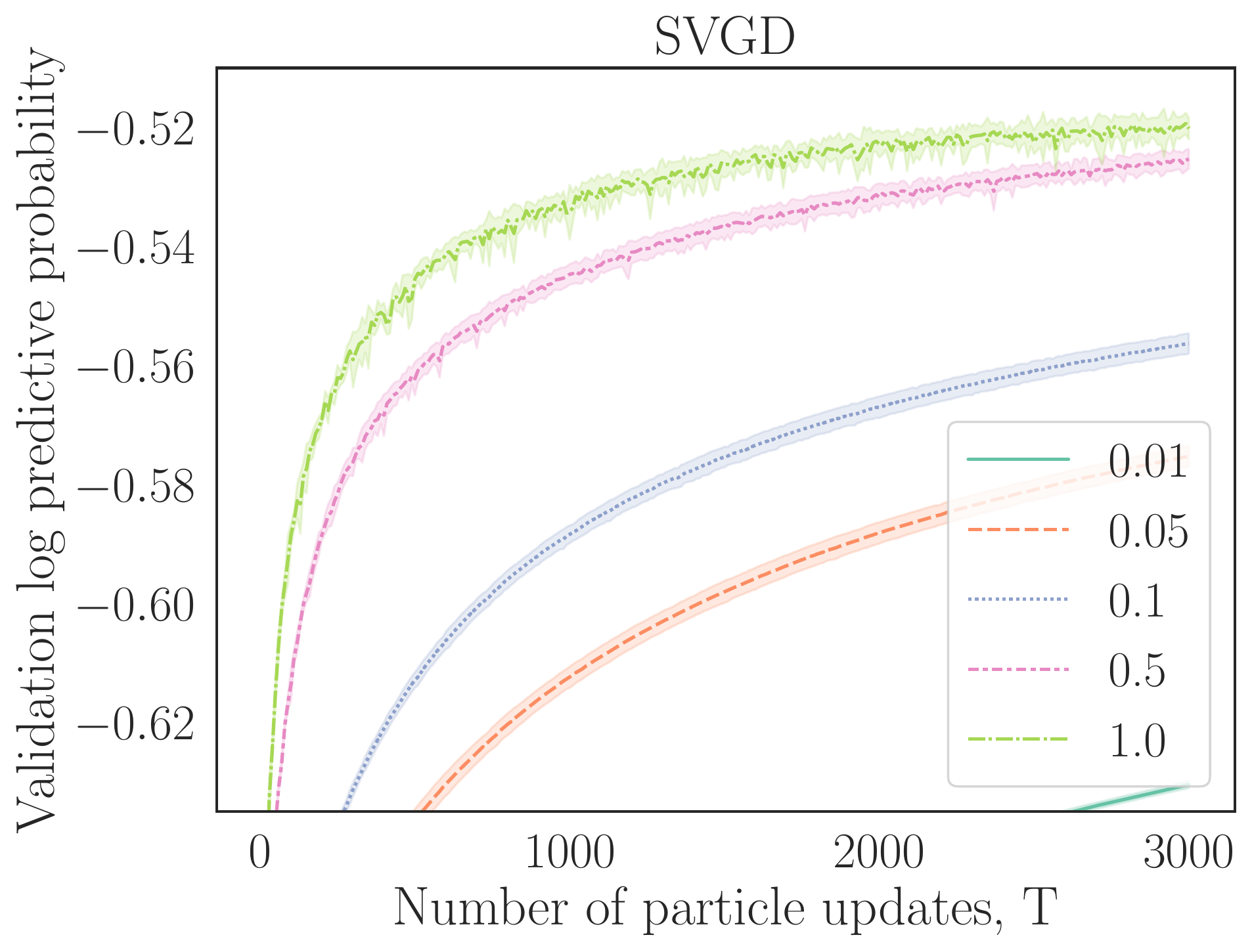}
	\end{subfigure}
	\begin{subfigure}[t]{0.45\textwidth}
		\includegraphics[width=\textwidth]{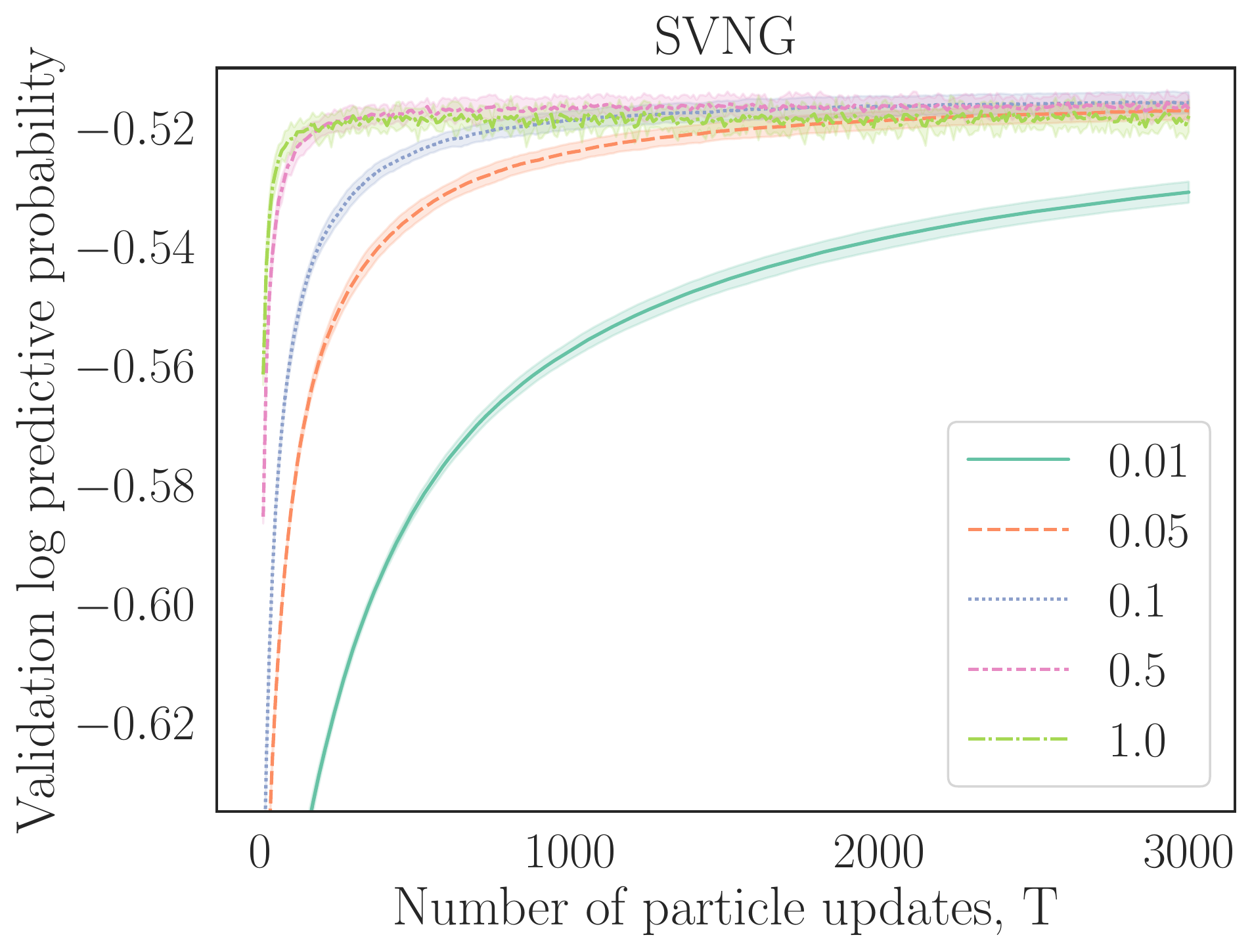}
	\end{subfigure}
	\\
	\centering
	\begin{subfigure}[t]{0.45\textwidth}
		\includegraphics[width=\textwidth]{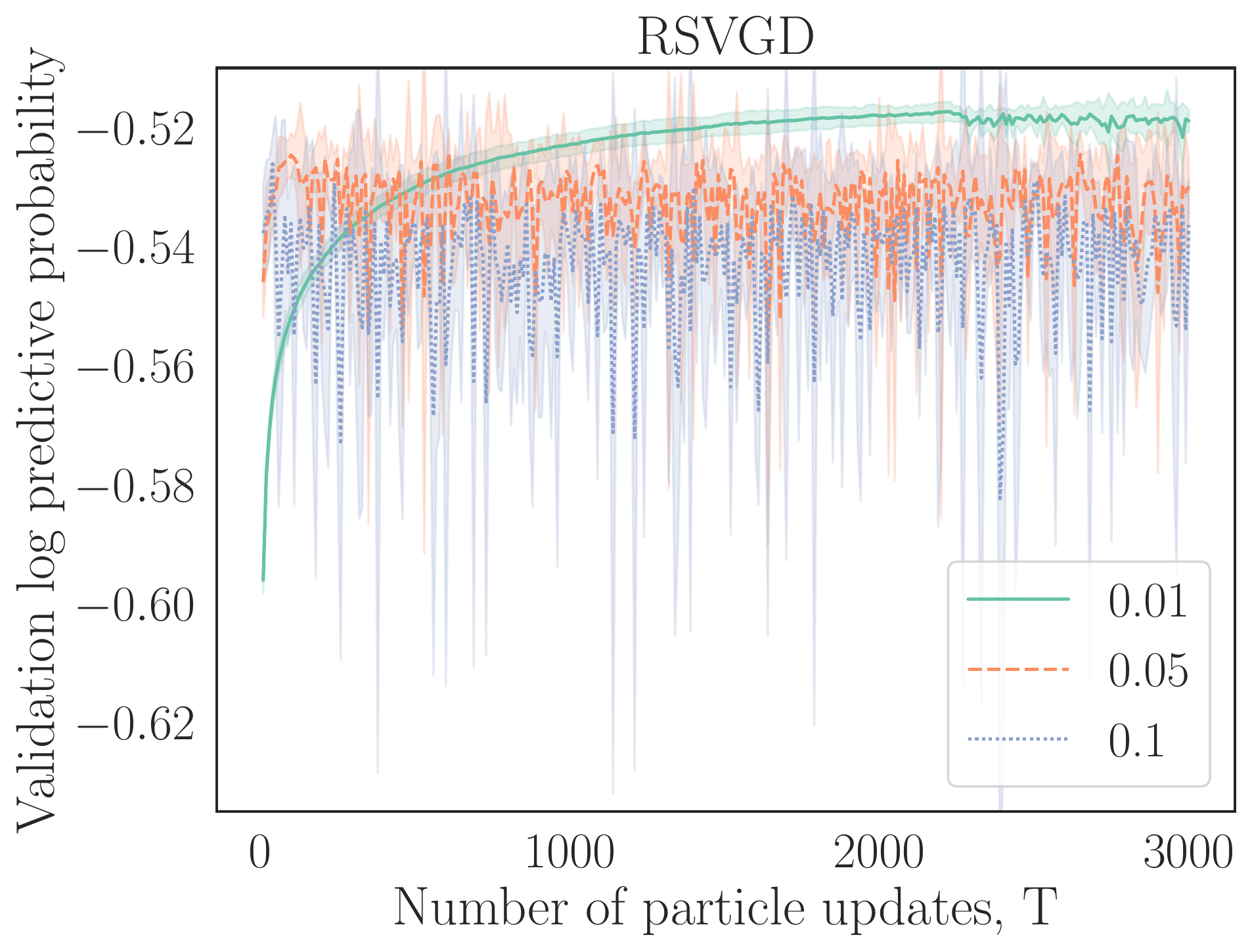}
	\end{subfigure}
	\begin{subfigure}[t]{0.45\textwidth}
		\includegraphics[width=\textwidth]{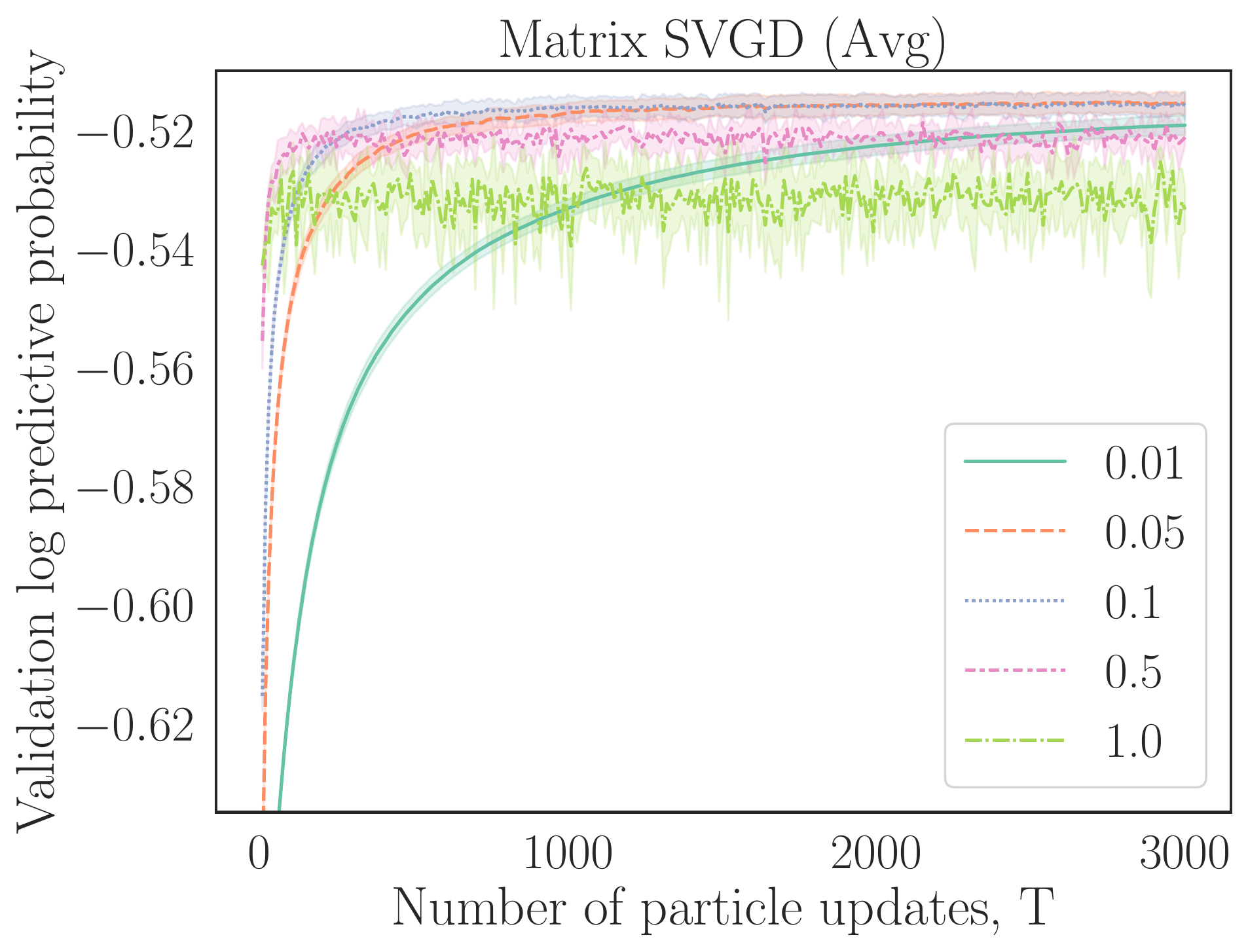}
	\end{subfigure}
	\\
	\begin{subfigure}[t]{0.45\textwidth}
		\includegraphics[width=\textwidth]{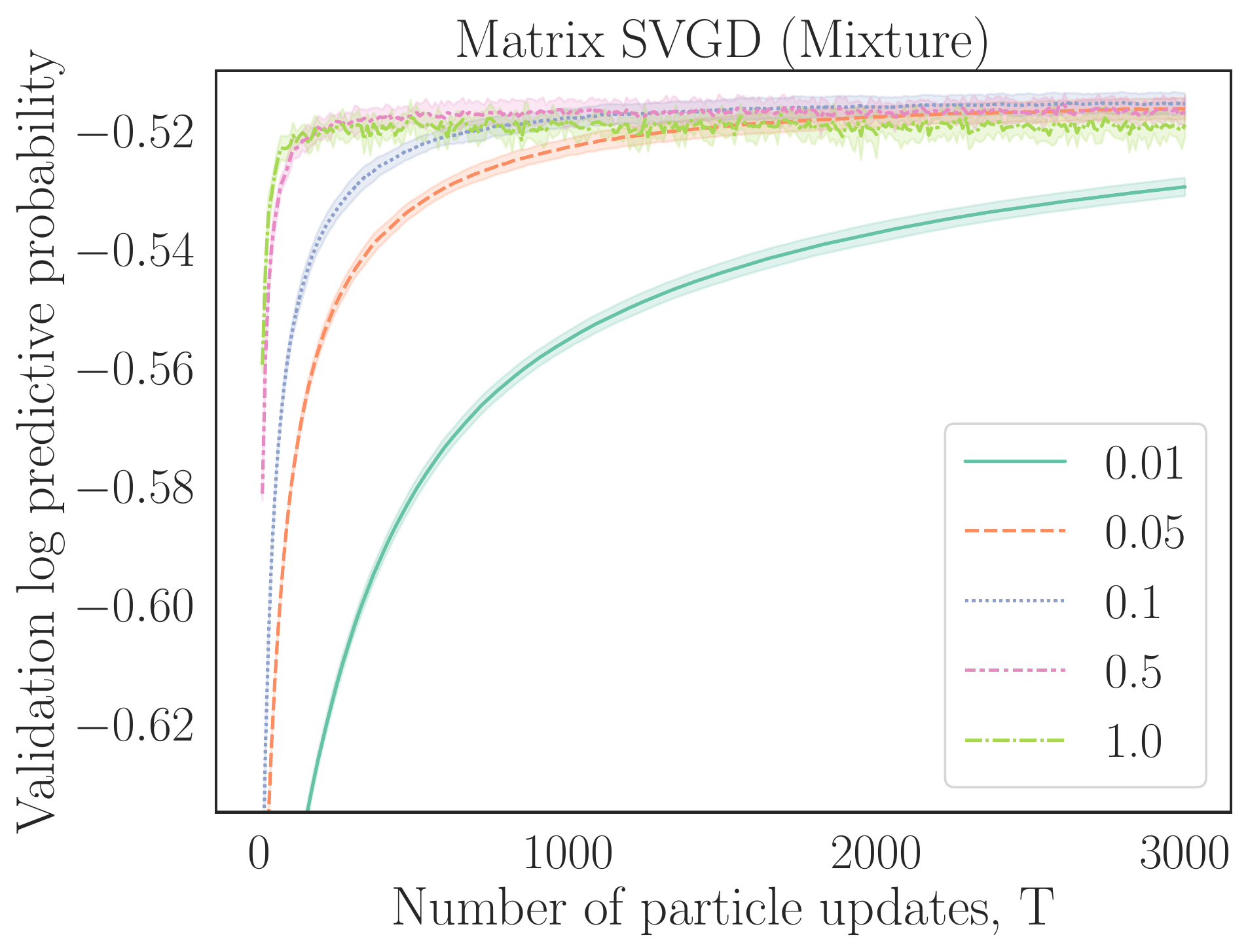}
	\end{subfigure}
	\caption{Logistic regression results on validation sets with learning rates in \{0.01, 0.05, 0.1, 0.5, 1\}. Running RSVGD with learning rates 0.5 and 1 produces numerical errors. Therefore, we did not include them in the plot.}
	\label{fig:lr-valid}
\end{figure}

\section{Exponential Convergence of Continuous-time Algorithms}
\label{app:exp-convergence}

We derive a  time-inhomogeneous generalization of the Stein Log-Sobolev inequality of \citet{duncan2019geometry} and \citet{korba2020non} which ensures the exponential convergence of our continuous-time algorithms and time-inhomogeneous generalization of the Stein Poincar\'e inequality of Duncan et al. (2019) which guarantees exponential convergence near equilibrium (i.e., when $q_t$ is sufficiently close to $p$). As the results hold for a generic sequence of kernels $(K_t)_{t\geq 0}$, the implications apply to both MSVGD and SVMD.

\begin{definition}[Mirror Stein Log-Sobolev inequality] \label{def:mirror-stein-log-sobolev}
We define the Mirror Stein Log-Sobolev inequality (cf., \citealp[Def. 2]{korba2020non}) as
\begin{equation}
\mathrm{KL}(q_t \| p) \leq \frac{1}{2\lambda} \mathrm{MKSD}^2_{K_t}(q_t, p) = \frac{1}{2\lambda} \mathbb{E}_{q_t} [ ( \nabla^2 \psi^{-1} \nabla \log \frac{q_t}{p} )^\top P_{K_t,q_t} \nabla^2 \psi^{-1} \nabla \log \frac{q_t}{p} ],
\end{equation}
where $\mathrm{MKSD}_{K_t}$ is defined in Eq. \eqref{eq:mksd}; $P_{K_t,q_t}: L^2(q_t) \to L^2(q_t)$ is the kernel integral operator: $(P_{K_t,q_t} \varphi) (\cdot) \triangleq \mathbb{E}_{q_t(\theta)} [K_t(\cdot,\theta) \varphi(\theta)]$ for a general kernel $K_t$ and vector-valued function $\varphi$ on $\Theta$, and the stated equality holds whenever integration-by-parts is applicable.
\end{definition}

\begin{proposition}
Suppose $(\theta_t)_{t \geq 0}$ follows the mirrored dynamics \cref{eq:mirrored-dynamics} with $g_t$ chosen to be $g^*_{q_t, K_t}$ as in~\eqref{eq:opt-mirror-rkhs}. 
Then, the dissipation of $\KL{q_t}{p}$ is
\begin{equation}
    \frac{d}{dt}\mathrm{KL}(q_t \| p) = -\mathrm{MKSD}_{K_t}(q_t, p)^2.
\end{equation}
\end{proposition}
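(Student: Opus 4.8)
The plan is to combine the KL-dissipation formula \cref{eq:mirror-kl-rate} with the reproducing-kernel structure already exploited in \cref{thm:opt-mirror-rkhs}. The whole argument hinges on a single Riesz-representation identity: for every $g$ in the matrix-valued RKHS $\cH_{K_t}$,
\begin{talign} \label{eq:witness-identity}
	\mathbb{E}_{\theta_t\sim q_t}[(\mathcal{M}_{p,\psi} g)(\theta_t)] = \inner{g}{g^*_{q_t, K_t}}_{\cH_{K_t}},
\end{talign}
where $g^*_{q_t, K_t} = \mathbb{E}_{\theta_t\sim q_t}[\mathcal{M}_{p,\psi} K_t(\cdot, \theta_t)]$ is exactly the witness function of \cref{eq:opt-mirror-rkhs}. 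This identity is precisely what underlies \cref{thm:opt-mirror-rkhs}, since maximizing the left-hand linear functional over the unit ball $\mathcal{B}_{\cH_{K_t}}$ returns the normalized representer; I would therefore cite the proof of that theorem rather than re-derive \cref{eq:witness-identity} from scratch.

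Granting \cref{eq:witness-identity}, the remaining steps are immediate. First, since $g^*_{q_t, K_t}$ is itself bounded Lipschitz under the standing assumptions inherited from \cref{thm:opt-mirror-rkhs}, I may apply \cref{eq:mirror-kl-rate} with the specific choice $g_t = g^*_{q_t, K_t}$ that drives the dynamics, giving $\frac{d}{dt}\KL{q_t}{p} = -\mathbb{E}_{\theta_t\sim q_t}[(\mathcal{M}_{p,\psi} g^*_{q_t, K_t})(\theta_t)]$. Second, I substitute $g = g^*_{q_t, K_t}$ into \cref{eq:witness-identity} to get $\mathbb{E}_{\theta_t\sim q_t}[(\mathcal{M}_{p,\psi} g^*_{q_t, K_t})(\theta_t)] = \norm{g^*_{q_t, K_t}}_{\cH_{K_t}}^2$. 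Finally, the identity $\MKSD{q_t}{p}{K_t} = \norm{g^*_{q_t, K_t}}_{\cH_{K_t}}$ stated just after \cref{eq:mksd} — itself a one-line Cauchy--Schwarz consequence of \cref{eq:witness-identity}, since the supremum of $\inner{g}{g^*_{q_t, K_t}}_{\cH_{K_t}}$ over the unit ball equals the representer's norm — converts the squared norm into $\MKSD{q_t}{p}{K_t}^2$, yielding the claimed dissipation $\frac{d}{dt}\KL{q_t}{p} = -\MKSD{q_t}{p}{K_t}^2$.

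I expect the only genuine content to lie in justifying \cref{eq:witness-identity}, i.e., that the linear functional $g\mapsto \mathbb{E}_{q_t}[(\mathcal{M}_{p,\psi}g)(\theta)]$ is bounded on $\cH_{K_t}$ and that the differential operator $\mathcal{M}_{p,\psi}$ may be commuted through the RKHS inner product, moving it onto the kernel. This requires the integrability and boundary conditions of \cref{prop:mirror-stein-identity} together with enough kernel regularity for the evaluation, gradient, and divergence functionals at each $\theta$ to be continuous on $\cH_{K_t}$ — exactly the hypotheses already in force for \cref{thm:opt-mirror-rkhs}. Once those are in place the proposition is essentially a bookkeeping exercise, and the main obstacle is purely to confirm that the standing regularity assumptions of \cref{thm:opt-mirror-rkhs} indeed make $g^*_{q_t, K_t}$ an admissible (bounded Lipschitz) driving direction so that \cref{eq:mirror-kl-rate} applies at the self-referential choice $g_t = g^*_{q_t, K_t}$.
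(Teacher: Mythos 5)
Your proposal is correct and follows essentially the same route as the paper: the paper's own proof is the one-line observation that the claim ``directly follows from \cref{thm:opt-mirror-rkhs} since the optimization problem there matches the definition of MKSD in \cref{eq:mksd},'' which is exactly the combination of \cref{eq:mirror-kl-rate}, the reproducing-kernel (witness) identity underlying \cref{thm:opt-mirror-rkhs}, and the identity $\MKSD{q_t}{p}{K_t} = \norm{g^*_{q_t,K_t}}_{\cH_{K_t}}$ that you spell out. Your version simply makes explicit the Riesz-representation step and the regularity bookkeeping that the paper leaves implicit.
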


\begin{proof}
    The proof directly follows from \Cref{thm:opt-mirror-rkhs} since the optimization problem there matches the definition of MKSD in \eqref{eq:mksd}.
\end{proof}

Therefore, when the Mirror Stein Log-Sobolev inequality holds, we have
$$
\frac{\mathrm{d}}{\mathrm{d}t}\mathrm{KL}(q_t \| p) \leq -2\lambda \mathrm{KL}(q_t \| p),
$$
and the exponential convergence $\mathrm{KL}(q_t \| p) \leq \mathrm{KL}(q_0 \| p) e^{-2\lambda t}$ follows by Gronwall’s lemma (Gronwall, 1919).

\begin{definition}[Mirror Stein Poincar\'e inequality]
We say that the distribution $p$ satisfies the Mirror Stein Poincar\'e inequality (cf., Duncan et al. 2019, Eq. (57)) with strongly convex $\psi$ and constant $\lambda$ if
$$
\mathrm{Var}_p [\phi] \leq \frac{1}{\lambda} \mathbb{E}_p [ \nabla \phi ^\top P_{K_t, p} \nabla^2 \psi^{-1} \nabla \phi ]
$$
for all $\phi \in L^2(p) \cap C^\infty(\Theta)$ that is locally Lipschitz, where $P_{K_t,p}$ is the kernel integral operator under $p$ defined similarly as in \Cref{def:mirror-stein-log-sobolev}.
\end{definition}
This inequality can also be viewed as a kernelized generalization  of the mirror Poincar\'e inequality introduced in \citet[Def. 1]{chewi2020exponential} for proving exponential convergence of mirror-Langevin diffusion.
In a manner analogous to Thm. 1 of Chewi et al. (2020), the following proposition relates the Mirror Stein Poincar\'e inequality to chi-squared divergence.

\begin{proposition}
Suppose $(\theta_t)_{t \geq 0}$ follows the mirrored dynamics \cref{eq:mirrored-dynamics} with $g_t$ chosen to be $g^*_{q_t, K_t}$ as in~\eqref{eq:opt-mirror-rkhs}. 
Then, the dissipation of chi-square divergence $\chi^2({q_t}\|{p})$ is
$$
\frac{\mathrm{d}}{\mathrm{d}t} \chi^2(q_t \| p) = -2 \mathbb{E}_{q_t} \left[\nabla \frac{q_t}{p} ^\top P_{K_t, p} \nabla^2 \psi^{-1} \nabla \frac{q_t}{p}\right]
$$
whenever integration-by-parts is applicable. 
\end{proposition}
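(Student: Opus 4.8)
The plan is to combine the continuity (Liouville) equation for the deterministic mirrored flow with a kernel-smoothed representation of the optimal direction $g^*_{q_t, K_t}$, in exact parallel with the preceding proof that $\frac{d}{dt}\KL{q_t}{p} = -\MKSD{q_t}{p}{K_t}^2$. First I would write $\chi^2(q_t\|p) = \int q_t^2/p\,d\theta - 1$ and observe that under \cref{eq:mirrored-dynamics} with $g_t = g^*_{q_t, K_t}$ the particle law $q_t$ is transported by the velocity field $v_t = \nabla^2\psi^{-1} g^*_{q_t,K_t}$, so it satisfies $\partial_t q_t = -\nabla\cdot(q_t v_t)$. Differentiating under the integral sign and integrating by parts once gives $\frac{d}{dt}\chi^2(q_t\|p) = 2\int \frac{q_t}{p}\,\partial_t q_t\,d\theta = -2\int\frac{q_t}{p}\nabla\cdot(q_t v_t)\,d\theta = 2\mathbb{E}_{q_t}[\nabla(q_t/p)^\top\nabla^2\psi^{-1}g^*_{q_t,K_t}]$, where the boundary contributions are assumed to vanish (the stated ``integration-by-parts is applicable'' hypothesis).

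Next I would establish the key representation $g^*_{q_t,K_t} = -P_{K_t,p}\big(\nabla^2\psi^{-1}\nabla(q_t/p)\big)$. This rests on \cref{thm:opt-mirror-rkhs}, which identifies $g^*_{q_t,K_t}$ as the Riesz representer in $\mathcal{H}_{K_t}$ of the functional $g\mapsto\mathbb{E}_{q_t}[(\mathcal{M}_{p,\psi}g)(\theta)]$, together with a generalized Stein identity. Integrating $\mathcal{M}_{p,\psi}$ by parts against $q_t$ (as in the proof of \cref{prop:mirror-stein-identity}) and using the algebraic identity $q_t\nabla\log(q_t/p) = p\,\nabla(q_t/p)$ yields $\mathbb{E}_{q_t}[\mathcal{M}_{p,\psi}g] = -\mathbb{E}_p[g^\top\nabla^2\psi^{-1}\nabla(q_t/p)]$. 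Applying the matrix-kernel reproducing property $g(\theta)^\top h(\theta) = \langle g, K_t(\cdot,\theta)h(\theta)\rangle_{\mathcal{H}_{K_t}}$ and drawing the $p$-expectation inside the inner product then pins down the representer as $-P_{K_t,p}(\nabla^2\psi^{-1}\nabla(q_t/p))$. Substituting this into the dissipation formula of the first paragraph collapses it to the quadratic form $-2\mathbb{E}_{q_t}[\nabla\frac{q_t}{p}^\top P_{K_t,p}\nabla^2\psi^{-1}\nabla\frac{q_t}{p}]$ of the statement (symmetrized via the self-adjointness of $\nabla^2\psi^{-1}$).

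I expect the main obstacle to be the analytic justification of the two formal manipulations rather than the algebra: namely, (i) validating the continuity equation and the interchange of $\frac{d}{dt}$ with $\int$ for the boundary-singular ratio $q_t/p$, and (ii) controlling the boundary terms in both integration-by-parts steps — against $\partial\Theta$ in the dissipation computation and inside the Stein identity for the representation. These are precisely the conditions subsumed by ``whenever integration-by-parts is applicable,'' and on constrained $\Theta$ they would be discharged by the same boundary-decay requirement as in \cref{prop:mirror-stein-identity} (for instance, via the Dirichlet computation in \cref{app:dir-exp-details}). A secondary, purely bookkeeping concern is to track the two metric factors $\nabla^2\psi^{-1}$ carefully so that the resulting symmetric quadratic form aligns with the kernel integral operator $P_{K_t,p}$ of \cref{def:mirror-stein-log-sobolev} in the form displayed.
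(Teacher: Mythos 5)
Your route is the same as the paper's: both arguments combine (i) the continuity/Fokker--Planck equation for the mirrored flow plus one integration by parts, and (ii) the integration-by-parts representation of the optimal direction,
$g^*_{q_t,K_t} = -P_{K_t,q_t}\nabla^2\psi^{-1}\nabla\log\frac{q_t}{p} = -P_{K_t,p}\nabla^2\psi^{-1}\nabla\frac{q_t}{p}$,
and then substitute (ii) into (i). Your derivation of (ii) --- integrating $\mathcal{M}_{p,\psi}$ by parts against $q_t$, using $q_t\nabla\log\frac{q_t}{p}=p\nabla\frac{q_t}{p}$, and invoking the matrix-kernel reproducing property --- is exactly the paper's first step.

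The genuine problem is your last step. Because you (correctly) take the $\theta$-space velocity to be $v_t=\nabla^2\psi^{-1}g^*_{q_t,K_t}$, your first paragraph gives
\begin{equation}
\frac{d}{dt}\chi^2(q_t\|p) \;=\; 2\,\mathbb{E}_{q_t}\!\left[\left(\nabla\tfrac{q_t}{p}\right)^{\!\top}\nabla^2\psi^{-1}g^*_{q_t,K_t}\right] \;=\; -2\,\mathbb{E}_{q_t}\!\left[\left(\nabla\tfrac{q_t}{p}\right)^{\!\top}\nabla^2\psi^{-1}P_{K_t,p}\,\nabla^2\psi^{-1}\nabla\tfrac{q_t}{p}\right],
\end{equation}
which carries \emph{two} factors of $\nabla^2\psi^{-1}$. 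This does not ``collapse'' to the displayed one-factor form: self-adjointness of $\nabla^2\psi^{-1}$ only lets you move it across the $L^2(q_t)$ pairing; it cannot absorb it, since $\nabla^2\psi^{-1}P_{K_t,p}\nabla^2\psi^{-1}\neq P_{K_t,p}\nabla^2\psi^{-1}$ in general. The paper lands on the one-factor display by running the computation in the dual coordinates: $\chi^2$ is reparameterization-invariant, the ratio satisfies $q_{t,H}/p_H=\frac{q_t}{p}\circ\nabla\psi^*$, and in $\eta$ space the velocity is $g^*_{q_t,K_t}$ itself with no metric prefactor, so the single explicit $\nabla^2\psi^{-1}$ arises as the Jacobian in $\nabla_\eta(q_{t,H}/p_H)=\nabla^2\psi^{-1}\nabla_\theta\frac{q_t}{p}$. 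Your two-factor expression and the proposition's display denote the same object once the left-hand $\nabla\frac{q_t}{p}$ in the display is read as this mirror-space gradient (the bookkeeping worry you flagged at the end is precisely the crux, not a side issue). To close the proof, either carry out the dissipation computation in $\eta$ coordinates as the paper effectively does, or prove the identity in the two-factor form and record this identification explicitly.
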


\begin{proof}
We first note that by applying integration-by-parts, $g^*_{q_t, K_t}$ as in \eqref{eq:opt-mirror-rkhs} can be equivalently written as
\begin{equation}
    g^*_{q_t, K_t} = -P_{K_t, q_t} \nabla^2 \psi^{-1} \nabla \log \frac{q_t}{p}.
\end{equation}
Then using the Fokker-Planck equation of $q_t$ under the dynamics, we have
\begin{align}
\frac{d}{dt} \chi^2(q_t \| p) 
&= \frac{d}{dt} \int \left(\frac{q_t}{p}\right)^2 dp 
= 2 \int \frac{q_t}{p}\cdot \frac{dq_t}{dt}
= - 2 \mathbb{E}_{q_t} \left[{g^*_{t,K_t}}^\top \nabla \frac{q_t}{p}\right] \\
&= - 2 \mathbb{E}_{q_t} \left[\left(P_{K_t, q_t} \nabla^2 \psi^{-1} \nabla \log \frac{q_t}{p}\right)^\top \nabla \frac{q_t}{p}\right] 
= -2 \mathbb{E}_{q_t} \left[\nabla \frac{q_t}{p} ^\top P_{K_t, p} \nabla^2 \psi^{-1} \nabla \frac{q_t}{p}\right].
\end{align}
\end{proof}

Note that the right hand side of the equation differs from the Mirror Stein Poincar\'e inequality only in the base measure of the expectation. 
\citet{duncan2019geometry} proposes to replace $q_t$ with $p$ to study the convergence near equilibrium (See their Sec. 6, where Eq. (46) is replaced with Eq. (51)). 
If we do the same and combine this identity with the Mirror Stein Poincar\'e inequality, we obtain 
$\frac{\mathrm{d}}{\mathrm{d}t} \chi^2(q_t \| p) \leq -2 \lambda \mathrm{Var}_p [\frac{q_t}{p}] = -2 \lambda \chi^2(q_t \| p)$, which implies exponential convergence in KL $\mathrm{KL}(q_t \| p) \leq \chi^2(q_t \| p) \leq \chi^2(q_0 \| p) e^{-2\lambda t}$ by Gronwall’s lemma (Gronwall, 1919).

\section{Proofs}
\label{app:proof}

\subsection{Proof  of \Cref{prop:mirror-stein-identity}}
\label{app:mirror-stein-identity}

\begin{proof}
    Fix any $g\in\gset_\psi$.  
	Since $g$ and $\nabla g$ are bounded and $\nabla^2\psi(\theta)^{-1}\nabla\log p(\theta)$ and $\nabla\cdot \nabla^2 \psi(\theta)^{-1}$ are $p$-integrable, the expectation 
	$\mathbb{E}_{\theta \sim p} [(\mathcal{M}_{p, \psi}g)(\theta)]$ exists.
	Because $\Theta$ is convex, $\Theta_r$ is bounded and convex with Lipschitz boundary. 
	Since $p \nabla^2\psi^{-1}g\in C^1$, we have %
	\begin{align}
		|\mathbb{E}_{p} [(\mathcal{M}_{p, \psi}g)(\theta)]| 
	&= 
		|\mathbb{E}_{p} [g(\theta)^\top \nabla^2\psi(\theta)^{-1}\nabla\log p(\theta) + \nabla\cdot (\nabla^2\psi(\theta)^{-1}g(\theta))]| \\
	&= 
	    \left|\int_{\Theta} \nabla p(\theta)^\top \nabla^2\psi(\theta)^{-1}g(\theta)  + p(\theta)\nabla\cdot (\nabla^2\psi(\theta)^{-1}g(\theta)) d\theta\right| \\
	&= 
	    \left|\int_{\Theta} \nabla\cdot(p(\theta)\nabla^2\psi(\theta)^{-1}g(\theta)) d\theta\right| \\
	&= 
	    \left|\lim_{r \to \infty} \int_{\Theta_r} \nabla\cdot(p(\theta)\nabla^2\psi(\theta)^{-1}g(\theta)) d\theta \right|
	\qtext{(by dominated convergence)}\\
	&= 
	    \left|\lim_{r \to \infty} \int_{\partial \Theta_r} (p(\theta)\nabla^2\psi(\theta)^{-1}g(\theta))^\top n_r(\theta) d\theta\right| 
	\qtext{(by the divergence theorem)}\\
	&\leq
    	\lim_{r \to \infty} \int_{\partial \Theta_r} p(\theta) \twonorm{g(\theta)} \twonorm{\nabla^2\psi(\theta)^{-1}n_r(\theta)} d\theta 
    	\qtext{(by Cauchy-Schwarz)}\\
	&\leq
    	\supnorm{g}\lim_{r \to \infty} \int_{\partial \Theta_r} p(\theta) \twonorm{\nabla^2\psi(\theta)^{-1}n_r(\theta)} d\theta 
	= 
	    0 
	\qtext{(by assumption).}
	\end{align}

\end{proof}

\subsection{Proof of \Cref{thm:opt-mirror-rkhs}: Optimal mirror updates in RKHS}
\label{app:proof-opt-mirror-rkhs}

\begin{proof}
	Let $e_i$ denote the standard basis vector of $\reals^d$ with the $i$-th element being $1$ and others being zeros. 
	Since $m \in \cH_K$, we have
	\begin{align}
		m(\theta)^\top\nabla^2\psi(\theta)^{-1}\nabla\log p(\theta) &= \inner{m}{K(\cdot, \theta)\nabla^2\psi(\theta)^{-1}\nabla\log p(\theta)}_{\cH_K} \\
		\nabla\cdot(\nabla^2\psi(\theta)^{-1}m(\theta)) &= \sum_{i=1}^d \nabla_{\theta_i}(m(\theta)^\top \nabla^2\psi(\theta)^{-1} e_i) \\
		&= \sum_{i=1}^d \inner{m}{\nabla_{\theta_i}(K(\cdot, \theta)\nabla^2\psi(\theta)^{-1}e_i)}_{\cH_K} \\
		&=  \inner{m}{\nabla_{\theta} \cdot (K(\cdot, \theta)\nabla^2\psi(\theta)^{-1})}_{\cH_K},
	\end{align}
	where we define the divergence of a matrix as a vector whose elements are the divergences of each row of the matrix.
	Then, we write \cref{eq:mirror-kl-rate} as
	\begin{align}
		&-\mathbb{E}_{q_t} [m(\theta)^\top \nabla^2\psi(\theta)^{-1}\nabla \log p(\theta) + \nabla\cdot (\nabla^2\psi(\theta)^{-1}m(\theta))] \\
		&= -\mathbb{E}_{q_t}[\inner{m}{K(\cdot, \theta)\nabla^2\psi(\theta)^{-1}\nabla\log p(\theta) + \nabla_{\theta} \cdot (K(\cdot, \theta)\nabla^2\psi(\theta)^{-1})}_{\cH_K}] \\
		&= -\inner{m}{\mathbb{E}_{q_t}[K(\cdot, \theta)\nabla^2\psi(\theta)^{-1}\nabla\log p(\theta) + \nabla_{\theta} \cdot (K(\cdot, \theta)\nabla^2\psi(\theta)^{-1})]}_{\cH_K} \\
		&= -\inner{m}{\mathbb{E}_{q_t}[\mathcal{M}_{p, \psi}K(\cdot, \theta)]}_{\cH_K}.
	\end{align}
	Therefore, the optimal direction in the $\cH_K$ norm ball $\mathcal{B}_{\cH_K} = \{g: \norm{g}_{\cH_K} \leq 1\}$ that minimizes \cref{eq:mirror-kl-rate}  is $
		g_t^* \propto g^*_{q_t, K} = \mathbb{E}_{q_t}[\mathcal{M}_{p, \psi}K(\cdot, \theta)].$
\end{proof}

\subsection{Proof of \Cref{thm:msvgd-is-svgd}: Mirrored SVGD updates}
\label{app:proof-msvgd}

\begin{proof}
	A p.d. kernel $k$ composed with any map $\phi$ is still a p.d. kernel. 
	To prove this, let $\{x_1, \dots, x_p\} = \{\phi(\eta_1), \dots, \phi(\eta_n)\},\; p \leq n$. Then
	\begin{align}
		\sum_{i, j}\alpha_i\alpha_j k(\phi(\eta_i), \phi(\eta_j)) = \sum_{\ell, m}\beta_\ell \beta_m k(x_\ell, x_m) \geq 0,
	\end{align}
	where $\beta_\ell = \sum_{i\in S_\ell} \alpha_i$, $S_\ell = \{i: \phi(\eta_i) = x_\ell \}$.
	Therefore, $k_\psi(\eta, \eta') = k(\nabla\psi^*(\eta), \nabla\psi^*(\eta'))$ is a p.d. kernel. 
	Plugging $K = kI$ into \Cref{lem:eta-svgd}, for any $\theta'\in \Theta$ and $\eta' = \nabla\psi(\theta')$, we have
	\begin{align}
		g^*_{q_t, K_k}(\theta') 
		&= \mathbb{E}_{\eta_t \sim q_{t, H}}[\Keta(\nabla\psi(\theta'), \eta_t)\nabla\log p_{H}(\eta_t) + \nabla_{\eta_t} \cdot \Keta(\nabla\psi(\theta'), \eta_t)] \\
		&= \mathbb{E}_{\eta_t \sim q_{t, H}}[k(\nabla\psi^*(\eta'), \nabla\psi^*(\eta_t))\nabla\log p_{H}(\eta_t) + \sum_{j=1}^d \nabla_{\eta_{t,j}} k(\nabla\psi^*(\eta'), \nabla\psi^*(\eta_t)) e_j] \\
		&=  \mathbb{E}_{\eta_t \sim q_{t, H}}[k_\psi(\eta', \eta_t)\nabla\log p_{H}(\eta_t) + \nabla_{\eta_t} k_\psi(\eta', \eta_t)].
	\end{align}
\end{proof}

\subsection{Proof of \Cref{prop:svmd-to-md}: Single-particle SVMD is mirror descent}
\label{app:proof-svmd-to-md}

\begin{proof}
When $n=1$,  $\lambda_1=k(\theta_t, \theta_t), u_1 = 1$, and thus $K_{\psi, t}(\theta_t, \theta_t) = k(\theta_t, \theta_t)\nabla^2\psi(\theta_t)$. 
\end{proof}

\subsection{Proof of \Cref{thm:large-sample}: Convergence of mirrored updates as $n\to \infty$}
\label{app:proof-large-sample}

\begin{proof}
	The idea is to reinterpret our mirrored updates as one step of a matrix SVGD in $\eta$ space based on \Cref{lem:eta-svgd} and then follow the path of
	\citet[Thm.~7]{gorham2020stochastic}.
	Assume that $q_{t, H}^n$ and $q_{t, H}^\infty$ have integrable means.
	Let $\eta^n, \eta^{\infty}$ be an optimal Wasserstein-1 coupling of $q_{t, H}^n$ and $q_{t, H}^\infty$. 
	Let $\Phi_{q_t, K_t}$ denote the transform through one step of mirrored update: $\theta_t = \nabla\psi^\star(\eta_t)$, $\eta_{t + 1} = \eta_t + \epsilon_t g^*_{q_t, K_t}(\theta_t)$. 
	Then, with \Cref{lem:eta-svgd}, we have
	\begin{align}
		&\| \Phi_{q_t, K_t}(\eta) - \Phi_{q_t, K_t}(\eta') \|_2 \\
		&= \|\eta + \epsilon_t g^*_{q_t^n, K_t}(\theta) - \eta' - \epsilon_t g^*_{q_t^\infty, K_t}(\theta')\|_2 \\
		&\leq \|\eta - \eta'\|_2 + \epsilon_t \|g^*_{q_t^n, K_t}(\theta) - g^*_{q_t^\infty, K_t}(\theta') \|_2 \\
		&\leq \|\eta - \eta'\|_2 \\
		&+ \epsilon_t\|\mathbb{E}_{\eta^n}[\Ketat(\eta, \eta^n)\nabla\log p_H(\eta^n) + \nabla_{\eta^n} \cdot \Ketat(\eta, \eta^n) \\
		&\quad\quad - (\Ketat(\eta', \eta^n)\nabla\log p_H(\eta^n) + \nabla_{\eta^n} \cdot \Ketat(\eta', \eta^n))]\|_2 \\
		& + \epsilon_t\| \mathbb{E}_{\eta^n, \eta^\infty}[\Ketat(\eta', \eta^n)\nabla\log p_H(\eta^n) + \nabla_{\eta^n} \cdot \Ketat(\eta, \eta^n) \\
		&\quad\quad -(\Ketat(\eta', \eta^\infty)\nabla\log p_H(\eta^\infty) + \nabla_{\eta^\infty} \cdot \Ketat(\eta', \eta^\infty)) ] \|_2 \\
		&\leq  \|\eta - \eta'\|_2 + \epsilon_t c_1 (1 + \mathbb{E}[\norm{\eta^n}_2)] \|\eta - \eta'\|_2 +  \epsilon_t c_2 (1 + \norm{\eta'}_2) \mathbb{E}_{\eta^n, \eta^\infty}[\|\eta^n - \eta^\infty\|_2] \\
		&= \|\eta - \eta'\|_2 + \epsilon_t c_1 (1 + \mathbb{E}_{q_{t, H}^n}[\norm{\cdot}_2]) \|\eta - \eta'\|_2 +  \epsilon_t c_2 (1 + \norm{\eta'}_2) W_1(q_{t, H}^n, q_{t, H}^\infty).
	\end{align}
	Since $\Phi_{q_t, K_t}(\eta^n) \sim q_{t+1, H}^n$, $\Phi_{q_t, K}({\eta^\infty}) \sim q_{t+1, H}^\infty$, we conclude
	\begin{align}
		&W_1(q_{t+1, H}^n, q_{t+1, H}^\infty) \\
		&\leq \mathbb{E} [\| \Phi_{q_t, K}(\eta^n) - \Phi_{q_t, K}(\eta^\infty) \|_2] \\
		&\leq (1 + \epsilon_t c_1 (1 + \mathbb{E}_{q_{t, H}^n}[\norm{\cdot}_2])) \mathbb{E}[\|\eta^n - \eta^\infty\|_2] +  \epsilon_t c_2 (1 + \norm{\eta'}_2) W_1(q_{t, H}^n, q_{t, H}^\infty)] \\
		&\leq (1 + \epsilon_t c_1 (1 + \mathbb{E}_{q_{t, H}^n}[\norm{\cdot}_2]) +  \epsilon_t c_2 (1 + \mathbb{E}_{q_{t, H}^\infty}[\norm{\cdot}_2])) W_1(q_{t, H}^n, q_{t, H}^\infty).
	\end{align}
The final claim $q_{t}^n \Rightarrow q_{t}^\infty$ now follows by the continuous mapping theorem as $\grad \psi^*$ is continuous.
\end{proof}

\subsection{Proof of \Cref{thm:descent-lem}: Infinite-particle mirrored Stein updates decrease KL and MKSD}
\label{app:proof-descent-lem}

\begin{proof}
	Let $T_{q_t^\infty, K_t}$ denote transform of the density function through one step of mirrored update: $\theta_t = \nabla\psi^\star(\eta_t)$, $\eta_{t + 1} = \eta_t + \epsilon_t g^*_{q_t^\infty, K_t}(\theta_t)$. Then
	\begin{align}
		&\KL{q_{t + 1}^\infty}{p} - \KL{q_{t}^\infty}{p} \\
		&= \KL{q_t^\infty}{T_{q_t^\infty, K_t}^{-1}p} - \KL{q_t^\infty}{p} \\
		&= \mathbb{E}_{\eta_t\sim q_{t, H}^\infty}[\log p_H(\eta_t) - \log p_H(\eta_t + \epsilon_t g^*_{q_t^\infty, K_t}(\theta_t)) - \log | \det(I + \epsilon_t \nabla_{\eta_t} g^*_{q_t^\infty, K_t}(\theta_t))|],
	\end{align}
	where we have used the invariance of KL divergence under reparameterization: $\KL{q_t}{p} = \KL{q_{t, H}}{p_H}$ .
	Following \citet{liu2017stein}, we bound the difference of the first two terms as
	\begin{align}
		&\log p_H(\eta_t) - \log p_H(\eta_t + \epsilon_t g_{q_t^\infty, K_t}^*(\theta_t)) \\
		&= -\int_0^1 \nabla_s \log p_H(\eta_t(s)) \;ds,\quad \text{where }\eta_t(s) \defeq \eta_t + s\epsilon_t g^*_{q_t^\infty, K_t}(\theta_t) \\
		&= -\int_0^1 \nabla \log p_H(\eta_t(s))^\top (\epsilon_t g^*_{q_t^\infty, K_t}(\theta_t)) \;ds \\
		&= -\epsilon_t \nabla \log p_H(\eta_t)^\top g_{q_t^\infty, K_t}^*(\theta_t) + \int_0^1 (\nabla \log p_H(\eta_t) - \nabla \log p_H(\eta_t(s)))^\top (\epsilon_t g_{q_t^\infty, K_t}^*(\theta_t)) \;ds \\
		&\leq -\epsilon_t \nabla \log p_H(\eta_t)^\top g_{q_t^\infty, K_t}^*(\theta_t) + \epsilon_t \int_0^1 \norm{\nabla\log p_H(\eta) - \nabla\log p_H(\eta_t(s))}_2\cdot\|g^*_{q_t^\infty, K_t}(\theta_t)\|_2 \;ds \\
		&\leq -\epsilon_t \nabla \log p_H(\eta_t)^\top g_{q_t^\infty, K_t}^*(\theta_t) + \frac{L\epsilon_t^2}{2} \|g^*_{q_t^\infty, K_t}(\theta_t)\|_2^2,
	\end{align}
	and bound the log determinant term using \Cref{lem:logdet}:
	\begin{align}
		- \log |\det(I + \epsilon_t \nabla_{\eta_t} g^*_{q_t^\infty, K_t}(\theta_t)) \leq - \epsilon_t \tr(\nabla_{\eta_t}g^*_{q_t^\infty, K_t}(\theta_t)) + 2\epsilon_t^2\|\nabla_{\eta_t}g^*_{q_t^\infty, K_t}(\theta_t)\|_{F}^2.
	\end{align}
	The next thing to notice is that $\mathbb{E}_{\eta_t \sim q_{t, H}^\infty}[\nabla \log p_H(\eta_t)^\top g_{q_t^\infty, K_t}^*(\theta_t) + \tr(\nabla_{\eta_t}g^*_{q_t^\infty, K_t}(\theta_t))]$ is the square of the MKSD in \Cref{eq:mksd}. 
	We can show this equivalence using the identity proved in \Cref{lem:div-logdet}:
	\begin{align}
		&\mathbb{E}_{\eta_t \sim q_{t, H}^\infty} [ g_{q_t^\infty, K_t}^*(\theta_t)^\top\nabla \log p_H(\eta_t) + \tr(\nabla_{\eta_t}g^*_{q_t^\infty, K_t}(\theta_t))] \\
		&= \mathbb{E}_{\theta_t \sim q_t^\infty}[g_{q_t^\infty, K_t}^*(\theta_t)^\top \nabla^2\psi(\theta_t)^{-1} \nabla_{\theta_t}(\log p(\theta_t) - \log \det \nabla^2\psi(\theta_t)) \\
		&\quad + \tr(\nabla^2\psi(\theta_t)^{-1}\nabla g^*_{q_t^\infty, K_t}(\theta_t)) ] \\
		&= \mathbb{E}_{\theta_t \sim q_t^\infty}[g_{q_t^\infty, K_t}^*(\theta_t)^\top \nabla^2\psi(\theta_t)^{-1} \nabla \log p(\theta_t) + \nabla\cdot(\nabla^2\psi(\theta_t)^{-1} g^*_{q_t^\infty, K_t}(\theta_t)) ] \quad\text{(\Cref{lem:div-logdet})} \\
		&= \mathbb{E}_{\theta_t \sim q_t^\infty}[(\mathcal{M}_{p,\psi} g_{q_t^\infty, K_t}^*)(\theta_t)] \\
		&= \MKSD{q_t^\infty}{p}{K_t}^2.
	\end{align}
	Finally, we are going to bound $\|g^*_{q_t^\infty, K_t}(\theta_t)\|_2^2$ and $\|\nabla_{\eta_t}g^*_{q_t^\infty, K_t}(\theta_t)\|_F^2$. 
	From the assumptions we have $\psi$ is $\alpha$-strongly convex and thus $\psi^*$ is $\frac{1}{\alpha}$-strongly smooth~\citep{kakade2009duality}, therefore $\|\nabla^2\psi^*(\cdot)\|_2 \leq \frac{1}{\alpha}$. 
	By \Cref{lem:bound-2norm}, we know
	\begin{align}
		\|g^*_{q_t^\infty, K_t}(\theta_t)\|_2^2 &\leq \|g^*_{q_t^\infty, K_t}\|_{\cH_{K_t}}^2\|K(\theta_t, \theta_t)\|_{\mathrm{op}}   = \MKSD{q_t^\infty}{p}{K_t}^2\|K_t(\theta_t, \theta_t)\|_{\mathrm{op}},  \\
		\|\nabla_{\eta_t} g^*_{q_t^\infty, K_t}(\theta_t)\|_F^2  
		&= \|\nabla^2 \psi^*(\eta_t) \nabla g^*_{q_t^\infty, K_t}(\theta_t)\|_F^2 \\
		&\leq \|\nabla^2 \psi^*(\eta_t)\|_2^2 \|\nabla g^*_{q_t^\infty, K_t}(\theta_t)\|_F^2 \\
		&\leq \frac{1}{\alpha^2}\|g^*_{q_t^\infty, K_t}\|_{\cH_{K_t}}^2\sum_{i=1}^d \|\nabla_{i, d + i}^2 K_t(\theta_t, \theta_t)\|_{\mathrm{op}}  \\
		&= \frac{1}{\alpha^2}\MKSD{q_t^\infty}{p}{K_t}^2 \sum_{i=1}^d \|\nabla_{i, d + i}^2 K_t(\theta_t, \theta_t)\|_{\mathrm{op}},
	\end{align}
	where $\nabla^2_{i, d+i} K(\theta, \theta)$ denotes $\nabla^2_{\theta_i, \theta_i'}K(\theta, \theta')|_{\theta'=\theta}$. 
	Combining all of the above, we have
	\begin{align}
		&\KL{q_{t + 1}^\infty}{p} - \KL{q_{t}^\infty}{p} \\
		&\leq -\left(\epsilon_t - \frac{L\epsilon_t^2}{2}\sup_{\theta}\|K_t(\theta, \theta)\|_{\mathrm{op}} - \frac{2\epsilon_t^2}{\alpha^2}\sum_{i=1}^d \sup_{\theta}\|\nabla_{i, d + i}^2 K_t(\theta, \theta)\|_{\mathrm{op}} \right) \MKSD{q_t^\infty}{p}{K_t}^2.
	\end{align}
	Plugging in the definition of $\kappa_1$ and $\kappa_2$ finishes the proof.
\end{proof}

\subsection{Proof of \Cref{thm:type-1-control}: $\textnormal{MKSD}_{K_k}$ determines weak convergence}
\label{app:proof-type-1-control}

\begin{proof}
	According to \Cref{thm:msvgd-is-svgd},
	\begin{align}
		 g^*_{q, K_k} &= \mathbb{E}_{q_{H}}[k(\cdot, \nabla\psi^*(\eta)) \nabla \log p_H(\eta) + \nabla_\eta k(\nabla\psi^*(\eta), \cdot)], 
	\end{align}
	where $q_H(\eta)$ denotes the density of $\eta = \nabla\psi(\theta)$ under the distribution $\theta \sim q$. 
	From the assumptions we have $k(\theta, \theta') = \kappa(\nabla\psi(\theta), \nabla\psi(\theta'))$. 
	With this specific choice of $k$, the squared MKSD is
	\begin{align}
		\MKSD{q}{p}{K_k}^2 &= \| g^*_{q, K_k} \|_{\cH_{K_k}}^2 \\
		&= \mathbb{E}_{\eta, \eta' \sim q_{H}} \left[\frac{1}{p_H(\eta)p_H(\eta')}\nabla_\eta \nabla_{\eta'} (p_H(\eta)k(\nabla\psi^*(\eta), \nabla\psi^*(\eta'))p_H(\eta'))\right] \\
		&=  \mathbb{E}_{\eta, \eta' \sim q_{H}} \left[\frac{1}{p_H(\eta)p_H(\eta')}\nabla_\eta \nabla_{\eta'} (p_H(\eta)\kappa(\eta, \eta')p_H(\eta'))\right].  \label{eq:mksd-is-ksd}
	\end{align}
	The final expression in \eqref{eq:mksd-is-ksd} is the squared kernel Stein discrepancy (KSD)~\citep{liu2016kernelized,chwialkowski2016kernel,gorham2017measuring} between $q_H$ and $p_H$ with the kernel $\kappa$: $\textnormal{KSD}_{\kappa}(q_H, p_H)^2$. 
	Recall that it is proved in \citet[Theorem 8]{gorham2017measuring} that, for $\kappa(x, y) = (c^2 + \|x - y\|^2_2)^\beta$ with $\beta\in (-1, 0)$ and distantly dissipative $p_H$ with Lipschitz score functions, $q_H \Rightarrow p_H$ if $\textnormal{KSD}_{\kappa}(q_H, p_H) \to 0$.
	The advertised result ($q \Rightarrow p$ if $\MKSD{q}{p}{K_k} \to 0$) now follows by the continuous mapping theorem as $\nabla\psi^*$ is continuous.
\end{proof}

\section{Lemmas}

\begin{lemma} \label{lem:eta-svgd}
	Let $\Keta(\eta, \eta') \defeq K(\nabla\psi^*(\eta), \nabla\psi^*(\eta'))$.
	The mirrored updates $g^*_{q_t, K}$ in \eqref{eq:opt-mirror-rkhs} can be equivalently expressed as
	\begin{talign}
		g^*_{q_t, K} = \mathbb{E}_{q_{t, H}}[\Keta(\nabla\psi(\cdot), \eta)\nabla\log p_H(\eta) + \nabla_\eta \cdot \Keta(\nabla\psi(\cdot), \eta)].
	\end{talign}
\end{lemma}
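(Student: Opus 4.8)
The plan is to reduce the lemma to a single operator-level identity: that the mirrored Stein operator $\mathcal{M}_{p,\psi}$ in $\theta$-space is the pushforward, under the bijection $\eta = \nabla\psi(\theta)$, of the Langevin Stein operator $\mathcal{S}_{p_H}$ for the mirrored target $p_H$ in $\eta$-space. Throughout I write $\eta = \nabla\psi(\theta)$ and $\tilde g(\eta) \defeq g(\nabla\psi^*(\eta))$, and I use $\nabla^2\psi^*(\eta) = \nabla^2\psi(\theta)^{-1}$ together with the density relation $p_H(\eta) = p(\theta)\,\lvert\det\nabla^2\psi(\theta)\rvert^{-1}$. The first step is to rewrite both Stein operators in flux (divergence) form; a product-rule computation, using the symmetry of $\nabla^2\psi^{-1}$, gives
\begin{equation*}
(\mathcal{M}_{p,\psi}g)(\theta) = \frac{1}{p(\theta)}\,\nabla_\theta\cdot\big(p(\theta)\,\nabla^2\psi(\theta)^{-1}g(\theta)\big)
\quad\text{and}\quad
(\mathcal{S}_{p_H}\tilde g)(\eta) = \frac{1}{p_H(\eta)}\,\nabla_\eta\cdot\big(p_H(\eta)\,\tilde g(\eta)\big).
\end{equation*}

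The heart of the argument is the change-of-variables (Piola) identity for divergences: for the $C^2$ diffeomorphism $\Phi = \nabla\psi$ with Jacobian $J(\theta) = \nabla^2\psi(\theta)$ and any $C^1$ vector field $W$,
\begin{equation*}
\frac{1}{\lvert\det J(\theta)\rvert}\,(\nabla_\theta\cdot W)(\theta) = (\nabla_\eta\cdot\tilde W)(\eta)
\qquad\text{where}\qquad
\tilde W(\eta) \defeq \frac{J(\theta)\,W(\theta)}{\lvert\det J(\theta)\rvert}.
\end{equation*}
I would apply this with $W(\theta) = p(\theta)\,\nabla^2\psi(\theta)^{-1}g(\theta)$: the Jacobian $J = \nabla^2\psi$ cancels the drift matrix $\nabla^2\psi^{-1}$, and the determinant combines with the density relation to give exactly $\tilde W(\eta) = p_H(\eta)\,\tilde g(\eta)$. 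Dividing by $p(\theta)$ and using $\lvert\det\nabla^2\psi(\theta)\rvert/p(\theta) = 1/p_H(\eta)$ then turns the flux form of $\mathcal{M}_{p,\psi}$ into the flux form of $\mathcal{S}_{p_H}$, yielding the operator identity $(\mathcal{M}_{p,\psi}g)(\theta) = (\mathcal{S}_{p_H}\tilde g)(\eta)$ for every $C^1$ field $g$.

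Finally I would apply the operator identity row-wise to the kernel. By \cref{thm:opt-mirror-rkhs}, $g^*_{q_t, K} = \mathbb{E}_{\theta_t \sim q_t}[\mathcal{M}_{p,\psi}K(\cdot, \theta_t)]$, where $\mathcal{M}_{p,\psi}$ differentiates in the integrated argument and is applied to each row. Evaluating at a free point $\theta'$, the operator acts on the field $\theta \mapsto K(\theta', \theta)$; writing $\eta = \nabla\psi(\theta)$ and $\eta' = \nabla\psi(\theta')$, this field equals $\eta \mapsto K(\theta', \nabla\psi^*(\eta)) = \Keta(\eta', \eta)$, so the operator identity converts $\mathcal{M}_{p,\psi}$ (at $\theta$) into $\mathcal{S}_{p_H}$ (at $\eta$). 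Changing variables in the expectation, $\mathbb{E}_{\theta_t \sim q_t}[f(\theta_t)] = \mathbb{E}_{\eta_t \sim q_{t,H}}[f(\nabla\psi^*(\eta_t))]$, then gives
\begin{equation*}
g^*_{q_t, K}(\theta') = \mathbb{E}_{\eta_t \sim q_{t,H}}\big[\Keta(\eta', \eta_t)\nabla\log p_H(\eta_t) + \nabla_{\eta_t}\cdot\Keta(\eta', \eta_t)\big],
\end{equation*}
which is the claim since $\eta' = \nabla\psi(\theta')$. The main obstacle is the Piola identity above: it is exactly where the symmetry of the third derivatives of $\psi$ (equivalently $\sum_i \partial_{\theta_i}\mathrm{cof}(J)_{ij} = 0$) enters, and one must track the determinant factors carefully so that the $\nabla^2\psi^{-1}$ drift cancels and leaves the plain Langevin Stein operator for $p_H$; everything else is routine bookkeeping under the paper's standing smoothness and integrability assumptions. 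As a sanity check and an alternative route, the same conclusion follows from the reparameterization-invariance of the KL-dissipation rate in \cref{thm:kl-change-ls} together with uniqueness of the RKHS minimizer, since $g \mapsto \tilde g$ is an isometry between $\cH_K$ and $\cH_{\Keta}$.
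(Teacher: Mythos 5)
Your proof is correct, but it takes a genuinely different route from the paper's. The paper proves \cref{lem:eta-svgd} by a term-by-term computation in $\theta$-coordinates: it expands $\mathcal{M}_{p,\psi}K(\cdot,\theta)$, substitutes the change-of-variables formula $\log p(\theta) = \log p_H(\nabla\psi(\theta)) + \log\det\nabla^2\psi(\theta)$, and invokes \cref{lem:div-logdet}, which writes $\nabla\cdot(\nabla^2\psi(\theta)^{-1}g(\theta))$ as $\tr(\nabla^2\psi(\theta)^{-1}\nabla g(\theta)) - g(\theta)^\top\nabla^2\psi(\theta)^{-1}\nabla_\theta\log\det\nabla^2\psi(\theta)$; the log-determinant gradients then cancel, and the chain rule $\sum_{i}[\nabla^2\psi(\theta)^{-1}]_{ij}\nabla_{\theta_i} = \nabla_{\eta_j}$ converts the surviving derivatives to $\eta$-space before the expectation is reparameterized. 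You instead package everything as a single operator-level pushforward: both Stein operators are put in flux form, and the Piola identity for divergences under the diffeomorphism $\nabla\psi$ yields $(\mathcal{M}_{p,\psi}g)(\theta) = (\mathcal{S}_{p_H}\tilde g)(\eta)$ with $\tilde g = g \circ \nabla\psi^*$, which you then specialize row-wise to the kernel. The two arguments rest on the same underlying fact, the divergence-free property of cofactor matrices: the identity $\nabla^2\psi(\theta)^{-1}\nabla\log\det\nabla^2\psi(\theta) = -\nabla\cdot\nabla^2\psi(\theta)^{-1}$ established inside the proof of \cref{lem:div-logdet} is exactly your Piola identity specialized to the symmetric Jacobian $\nabla^2\psi$, and both need the symmetry of third derivatives of $\psi$. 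What differs is the decomposition, and each has its advantages. Your version is more conceptual: it exhibits $\mathcal{M}_{p,\psi}$ as the pullback of the Langevin Stein operator of $p_H$ under the mirror map, so \cref{thm:msvgd-is-svgd} becomes an immediate corollary (take $K = kI$) rather than a separate derivation, and the determinant bookkeeping is confined to one reusable identity. The paper's version is more elementary and self-contained: it proves its cofactor-type identity from scratch via Jacobi's formula and explicit index manipulation, without introducing flux forms or appealing to the Piola transform as an external result. Your closing alternative route (invariance of the KL dissipation rate under the reparameterization, plus uniqueness of the minimizer over the RKHS ball) is also viable, but it leans on the claim that $g\mapsto\tilde g$ is an isometry from $\cH_K$ onto $\cH_{\Keta}$, which would itself need a short justification (e.g., that $\Keta$ is the reproducing kernel of the image space $\{\tilde g : g \in \cH_K\}$ with the transported inner product); as written it is a sanity check rather than a complete second proof.
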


\begin{proof}
	We will use the identity proved in \Cref{lem:div-logdet}.
	\begin{align}
		&g^*_{q_t, K} = \mathbb{E}_{q_t}[\mathcal{M}_{p, \psi}K(\cdot, \theta)] \\
		&= \mathbb{E}_{q_t}[K(\cdot, \theta)\nabla^2\psi(\theta)^{-1}\nabla\log p(\theta) + \nabla_{\theta} \cdot (K(\cdot, \theta)\nabla^2\psi(\theta)^{-1})] \\
		&= \mathbb{E}_{q_t}[K(\cdot, \theta)\nabla^2\psi(\theta)^{-1}\nabla_{\theta}(\log p_{H}(\nabla\psi(\theta)) + \log\det \nabla^2\psi(\theta)) + \nabla_{\theta} \cdot (K(\cdot, \theta)\nabla^2\psi(\theta)^{-1})]
		\tag{by change-of-variable formula}  \\
		&=\mathbb{E}_{q_t}[K(\cdot, \theta)\nabla^2\psi(\theta)^{-1}\nabla_{\theta}\log p_{H}(\nabla\psi(\theta)) + \sum_{i,j=1}^d[\nabla^2\psi(\theta)^{-1}]_{ij}\nabla_{\theta_i} K(\cdot, \theta)_{:,j}] 
		\tag{by applying \Cref{lem:div-logdet} to each row of $K(\cdot, \theta)$} \\
		&=\mathbb{E}_{q_t}[K(\cdot, \theta)\nabla^2\psi(\theta)^{-1}\nabla_{\theta}\log p_{H}(\nabla\psi(\theta)) + \sum_{j=1}^d \nabla_{\eta_j} K(\cdot, \theta)_{:,j}] \\
		&=\mathbb{E}_{q_{t, H}}[K(\cdot, \nabla\psi^*(\eta))\nabla\log p_{H}(\eta) + \sum_{j=1}^d \nabla_{\eta_j} K(\cdot, \nabla\psi^*(\eta))_{:,j}] \\
		&=\mathbb{E}_{q_{t, H}}[\Keta(\nabla\psi(\cdot), \eta)\nabla\log p_{H}(\eta) + \nabla_\eta \cdot \Keta(\nabla\psi(\cdot), \eta)],
	\end{align}
	where $A_{:, j}$ denotes the $j$-th column of a matrix $A$.
\end{proof}

\begin{lemma} \label{lem:div-logdet}
	For a strictly convex function $\psi \in C^2: \reals^d \to \reals$ and any vector-valued $g \in C^1: \reals^d \to \reals^d$, the following relation holds:
	\begin{align}
		\nabla \cdot (\nabla^2 \psi(\theta)^{-1}g(\theta)) = \tr(\nabla^2\psi(\theta)^{-1}\nabla g(\theta)) - g(\theta)^\top \nabla^2\psi(\theta)^{-1}\nabla_{\theta} \log\det \nabla^2\psi(\theta).
	\end{align}
\end{lemma}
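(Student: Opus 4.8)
The plan is to expand the left-hand divergence by the product rule and then reduce everything to a single matrix identity for the Hessian inverse. Writing $H \defeq \nabla^2\psi$ and $A \defeq H^{-1}$ (which exists and is symmetric positive definite since $\psi$ is strictly convex; note the statement implicitly requires $\psi\in C^3$, as its right-hand side differentiates $\nabla^2\psi$), I first compute $\nabla\cdot(A g) = \sum_i \partial_{\theta_i}\big(\sum_j A_{ij} g_j\big) = \sum_{i,j} A_{ij}\,\partial_{\theta_i} g_j + \sum_j g_j \sum_i \partial_{\theta_i} A_{ij}$. The first sum is exactly $\tr(A\,\nabla g)$, and, using that $A$ is symmetric so the column- and row-divergences of $A$ coincide, the second sum is $g^\top (\nabla\cdot A)$. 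Thus the lemma reduces to the claim $\nabla\cdot A = -A\,\nabla_\theta\log\det H$, i.e. $\sum_i \partial_{\theta_i} A_{ij} = -[H^{-1}\nabla_\theta\log\det H]_j$ for every $j$.

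To prove this reduced identity I would differentiate the inverse via $\partial_{\theta_k} H^{-1} = -H^{-1}(\partial_{\theta_k} H) H^{-1}$, giving $\sum_i \partial_{\theta_i} A_{ij} = -\sum_{i,a,b}(H^{-1})_{ia}(H^{-1})_{bj}\,\psi_{iab}$, where $\psi_{iab}\defeq\partial_{\theta_i}\partial_{\theta_a}\partial_{\theta_b}\psi$. For the right-hand side I invoke Jacobi's formula $\partial_{\theta_b}\log\det H = \tr(H^{-1}\partial_{\theta_b}H) = \sum_{a,c}(H^{-1})_{ac}\psi_{bac}$, so that after relabeling $-[H^{-1}\nabla_\theta\log\det H]_j = -\sum_{i,a,b}(H^{-1})_{ij}(H^{-1})_{ab}\,\psi_{iab}$. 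It then remains to check that these two contractions of the tensor $\psi_{iab}$ against the two different pairings of $H^{-1}$-factors agree.

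The one genuinely load-bearing step is this last matching, and it is where the Hessian structure is essential: by Clairaut's theorem $\psi_{iab}$ is fully symmetric in $(i,a,b)$, so a contraction $\sum_{i,a,b}C_{iab}\psi_{iab}$ depends only on the symmetrization of $C$ over the six permutations in $S_3$. A short computation shows the symmetrizations of $(H^{-1})_{ia}(H^{-1})_{bj}$ and $(H^{-1})_{ij}(H^{-1})_{ab}$ coincide, both equal to $\tfrac13[(H^{-1})_{ij}(H^{-1})_{ab}+(H^{-1})_{ja}(H^{-1})_{ib}+(H^{-1})_{jb}(H^{-1})_{ia}]$, which closes the argument. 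I expect this index bookkeeping to be the main obstacle; it is precisely the full symmetry of the third partials — which fails for a generic symmetric matrix field $A$ — that makes the clean identity $\nabla\cdot A=-A\nabla\log\det H$ hold. A $d=1$ sanity check, $\partial_\theta(g/\psi'')=g'/\psi''-g\,\psi'''/(\psi'')^2$, already matches both sides of the lemma and guards against sign or convention errors.
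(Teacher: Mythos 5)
Your proof is correct and takes essentially the same route as the paper's: both reduce the claim via the product rule to the identity $\nabla\cdot(\nabla^2\psi(\theta)^{-1}) = -\nabla^2\psi(\theta)^{-1}\nabla_\theta\log\det\nabla^2\psi(\theta)$, and both establish that identity using Jacobi's formula, the derivative-of-inverse formula, and Clairaut symmetry of the third partials. The only cosmetic difference is bookkeeping: you symmetrize both contractions of $\psi_{iab}$ over all of $S_3$, whereas the paper achieves the same matching with a single transposition of indices ($\partial_{\theta_j}\nabla^2\psi_{m\ell} = \partial_{\theta_m}\nabla^2\psi_{j\ell}$) inside a chain of equalities.
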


\begin{proof}
	By the product rule of differentiation:
	\begin{align} \label{eq:lem-div-logdet-eq1}
		\nabla\cdot(\nabla^2\psi(\theta)^{-1} g(\theta)) = \tr(\nabla^2\psi(\theta)^{-1}\nabla g(\theta)) + g(\theta)^\top \nabla\cdot(\nabla^2\psi(\theta)^{-1}).
	\end{align}
	This already gives us the first term on the right side. 
	Next, we have
	\begin{align}
		&[\nabla^2\psi(\theta)^{-1}\nabla\log\det \nabla^2\psi(\theta)]_i \\
		&=  \sum_{j=1}^d [\nabla^2\psi(\theta)^{-1}]_{ij}\tr(\nabla^2\psi(\theta)^{-1}\nabla_{\theta_j}\nabla^2\psi(\theta)) \\
		&= \sum_{j=1}^d [\nabla^2\psi(\theta)^{-1}]_{ij} \sum_{\ell, m=1}^d [\nabla^2\psi(\theta)^{-1}]_{\ell m}[\nabla_{\theta_j} \nabla^2\psi(\theta)]_{m\ell} \\
		&= \sum_{j,\ell,m=1}^d [\nabla^2\psi(\theta)^{-1}]_{ij}[\nabla^2\psi(\theta)^{-1}]_{\ell m} \nabla_{\theta_j}\nabla^2\psi(\theta)_{m\ell} \\
		&= \sum_{j,\ell,m=1}^d [\nabla^2\psi(\theta)^{-1}]_{ij} \nabla_{\theta_m}\nabla^2\psi(\theta)_{j\ell} [\nabla^2\psi(\theta)^{-1}]_{\ell m}  \\
		&= -\sum_{m=1}^d \nabla_{\theta_m} (\nabla^2 \psi(\theta)^{-1})_{im} \\
		&= -[\nabla \cdot \nabla^2\psi(\theta)^{-1}]_i.
	\end{align}
	Plugging the above relation into \eqref{eq:lem-div-logdet-eq1} proves the claimed result.
\end{proof}

\begin{lemma}[{\citealp[Lemma A.1]{liu2017stein}}] \label{lem:logdet}
	Let $A$ be a square matrix, and $0 < \epsilon < \frac{1}{2\| A + A^\top\|_{\mathrm{op}}}$.
	Then,
	\begin{equation}
		\log |\det(I + \epsilon A)| \geq \epsilon\tr(A) - 2\epsilon^2 \norm{A}^2_F,
	\end{equation}
	where $\|\cdot\|_F$ denotes the Frobenius norm of a matrix.
\end{lemma}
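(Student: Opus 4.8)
The plan is to prove the bound by reducing to a signed determinant and then applying the fundamental theorem of calculus to $s\mapsto\log\det(I+sA)$ together with Jacobi's formula. The first step is to verify that $\det(I+sA)>0$ for every $s\in[0,\epsilon]$, so that $\log|\det(I+\epsilon A)|=\log\det(I+\epsilon A)$ and the logarithm is differentiable along the path. Since $s\mapsto\det(I+sA)$ is a real polynomial equal to $1$ at $s=0$, by continuity it suffices to rule out a zero on $[0,\epsilon]$. A zero at some $s$ would force $-1/s$ to be a real eigenvalue of $A$; but any real eigenvalue $\lambda$ with real unit eigenvector $v$ satisfies $2\lambda=v^\top(A+A^\top)v$, whence $|\lambda|\leq\tfrac12\opnorm{A+A^\top}<\tfrac{1}{4\epsilon}$, which contradicts $|\lambda|=1/s\geq1/\epsilon$. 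Hence positivity holds throughout.

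Next I would set $g(s)\defeq\log\det(I+sA)$, so that Jacobi's formula gives $g'(s)=\tr((I+sA)^{-1}A)$ and $g(\epsilon)=\int_0^\epsilon\tr((I+sA)^{-1}A)\,ds$. The key algebraic step is the resolvent identity $(I+sA)^{-1}=I-s(I+sA)^{-1}A$, which yields $\tr((I+sA)^{-1}A)=\tr A-s\,\tr((I+sA)^{-1}A^2)$ and therefore $g(\epsilon)=\epsilon\tr A-\int_0^\epsilon s\,\tr((I+sA)^{-1}A^2)\,ds$. It then remains to show the remainder integral is at most $2\epsilon^2\fronorm{A}^2$. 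Writing $R_s\defeq(I+sA)^{-1}$ and using the Frobenius Cauchy--Schwarz inequality, $|\tr(R_sA\cdot A)|=|\inner{(R_sA)^\top}{A}_F|\leq\fronorm{R_sA}\,\fronorm{A}\leq\opnorm{R_s}\fronorm{A}^2$, so only a uniform resolvent bound remains.

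The remaining ingredient is a bound on $\opnorm{R_s}$ for $s\in[0,\epsilon]$. Since $\opnorm{R_s}=\sigma_{\min}(I+sA)^{-1}$ and $\sigma_{\min}(I+sA)^2=\mineig{(I+sA)^\top(I+sA)}=\mineig{I+s(A+A^\top)+s^2A^\top A}$, the fact that $s^2A^\top A\psdge0$ and Weyl's inequality give $\sigma_{\min}(I+sA)^2\geq1+s\,\mineig{A+A^\top}\geq1-s\opnorm{A+A^\top}>\tfrac12$, where the final step uses the hypothesis $s\leq\epsilon<1/(2\opnorm{A+A^\top})$. Thus $\opnorm{R_s}<\sqrt2$, and integrating gives $\int_0^\epsilon s\,|\tr(R_sA^2)|\,ds<\sqrt2\,\fronorm{A}^2\int_0^\epsilon s\,ds=\tfrac{1}{\sqrt2}\epsilon^2\fronorm{A}^2\leq2\epsilon^2\fronorm{A}^2$, which combined with the previous display yields $g(\epsilon)\geq\epsilon\tr A-2\epsilon^2\fronorm{A}^2$, as claimed. (The slack between $\tfrac{1}{\sqrt2}$ and $2$ shows the constant is not tight.)

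The main obstacle is precisely this resolvent operator-norm bound, and the reason the integral representation is the right vehicle. A tempting alternative is to diagonalize the symmetrized matrix $(I+\epsilon A)^\top(I+\epsilon A)=I+N$ with $N=\epsilon(A+A^\top)+\epsilon^2A^\top A$ and apply $\log(1+x)\geq x-x^2$ eigenvalue-by-eigenvalue; however, the resulting remainder $\tr(N^2)$ grows quartically in $\fronorm{A}$ and cannot be absorbed into $2\epsilon^2\fronorm{A}^2$ (already visible for skew-symmetric $A$, where the hypothesis is vacuous yet $\fronorm{A}$ is unbounded). The integral approach succeeds because $\opnorm{R_s}$ depends only on the symmetric part $A+A^\top$ that the hypothesis controls, so the quadratic term $s^2A^\top A$ enters only as a favorable positive-semidefinite contribution.
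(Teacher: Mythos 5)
Your proof is correct, but there is nothing in the paper to compare it against: \cref{lem:logdet} is imported verbatim, with citation, as Lemma A.1 of \citet{liu2017stein}, and this paper never reproves it, so your argument stands as a self-contained alternative to the cited one. Checking it on its own merits, every step is sound: positivity of $\det(I+sA)$ on $[0,\epsilon]$ (a real zero would force a real eigenvalue of $A$ of modulus $1/s\geq 1/\epsilon$, contradicting $|\lambda|\leq\tfrac12\opnorm{A+A^\top}<\tfrac{1}{4\epsilon}$), the representation $\log\det(I+\epsilon A)=\epsilon\tr(A)-\int_0^\epsilon s\,\tr\big((I+sA)^{-1}A^2\big)\,ds$ via Jacobi's formula and the resolvent identity, the Cauchy--Schwarz bound $|\tr((I+sA)^{-1}A^2)|\leq\opnorm{(I+sA)^{-1}}\fronorm{A}^2$, and the resolvent bound $\opnorm{(I+sA)^{-1}}<\sqrt{2}$ from $\sigma_{\min}(I+sA)^2\geq 1-s\opnorm{A+A^\top}>1/2$ (Weyl plus $s^2A^\top A\psdge 0$). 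Together these even sharpen the constant from $2$ to $1/\sqrt{2}$.

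One substantive remark on your closing paragraph: the symmetrization route you dismiss does work, and is the more elementary proof; the quartic obstruction appears only because you keep $\epsilon^2A^\top A$ inside the logarithm. Instead, discard it first by monotonicity of the determinant on the positive-definite cone: $2\log|\det(I+\epsilon A)|=\log\det\big(I+\epsilon(A+A^\top)+\epsilon^2A^\top A\big)\geq\log\det\big(I+\epsilon(A+A^\top)\big)$, since $\epsilon^2A^\top A\psdge 0$ and $I+\epsilon(A+A^\top)\psdgt 0$ under the stated hypothesis. Applying the scalar inequality $\log(1+x)\geq x-x^2$ (valid for $x\geq-\tfrac12$) to the eigenvalues of $\epsilon(A+A^\top)$, each of modulus below $\tfrac12$, and using $\fronorm{A+A^\top}^2\leq 4\fronorm{A}^2$, yields exactly $\epsilon\tr(A)-2\epsilon^2\fronorm{A}^2$ with no resolvent analysis at all. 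In short: the symmetrization proof is shorter and recovers precisely the stated constant, while your integral approach costs more machinery but buys a sharper constant ($1/\sqrt{2}$ in place of $2$) and isolates exactly where the hypothesis on $\opnorm{A+A^\top}$ enters.
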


\begin{lemma} \label{lem:bound-2norm}
	Let $K$ be a matrix-valued kernel and $\mathcal{H}_K$ be the corresponding RKHS. 
	Then, for any $f\in \mathcal{H}_K$ ($f$ is vector-valued), we have
	\begin{equation}
		\norm{f(x)}_2 \leq \norm{f}_{\cH_K}  \norm{K(x, x)}_{\mathrm{op}}^{1/2},\quad \fronorm{\nabla f(x)}^2 \leq \norm{f}_{\cH_K}^2 \sum_{i=1}^d \|\nabla^2_{x_i, x_i'}K(x, x')\vert_{x'=x}\|_{\mathrm{op}},
	\end{equation}
	where $\|\cdot\|_{\mathrm{op}}$ denotes the operator norm of a matrix induced by the vector 2-norm.
\end{lemma}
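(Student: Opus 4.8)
The plan is to exploit the reproducing property of the matrix-valued RKHS $\cH_K$ together with Cauchy--Schwarz, deducing each bound by pairing $f(x)$ (respectively its partial derivatives) against an arbitrary unit vector $c \in \reals^d$. For a matrix-valued kernel, the reproducing property~\citep{micchelli2005learning} guarantees $K(\cdot, x) c \in \cH_K$ for every $c \in \reals^d$, with $\inner{f}{K(\cdot, x) c}_{\cH_K} = \inner{f(x)}{c}$ for all $f \in \cH_K$; in particular $\norm{K(\cdot, x) c}_{\cH_K}^2 = c^\top K(x, x) c$.

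\emph{First bound.} Fix a unit vector $c \in \reals^d$. By the reproducing property and Cauchy--Schwarz in $\cH_K$,
\begin{talign}
\inner{f(x)}{c} = \inner{f}{K(\cdot, x) c}_{\cH_K} \leq \norm{f}_{\cH_K} \norm{K(\cdot, x) c}_{\cH_K} = \norm{f}_{\cH_K} (c^\top K(x, x) c)^{1/2} \leq \norm{f}_{\cH_K} \opnorm{K(x, x)}^{1/2}.
\end{talign}
Taking the supremum over unit vectors $c$ gives $\twonorm{f(x)} \leq \norm{f}_{\cH_K} \opnorm{K(x, x)}^{1/2}$.

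\emph{Second bound.} Since $\fronorm{\nabla f(x)}^2 = \sum_{i=1}^d \twonorm{\partial_{x_i} f(x)}^2$, it suffices to bound each term. The crucial ingredient is the derivative reproducing property: differentiating $\inner{f}{K(\cdot, x) c}_{\cH_K} = \inner{f(x)}{c}$ in the $i$-th coordinate of $x$ yields $\inner{f}{\partial_{x_i} K(\cdot, x) c}_{\cH_K} = \inner{\partial_{x_i} f(x)}{c}$, where $\partial_{x_i} K(\cdot, x)$ differentiates the kernel in the $i$-th coordinate of its second argument. Setting $h \defeq \partial_{x_i} K(\cdot, x) c \in \cH_K$ and applying this identity with $f = h$ gives
\begin{talign}
\norm{h}_{\cH_K}^2 = \inner{\partial_{x_i} h(x)}{c} = c^\top \big(\nabla^2_{x_i, x_i'} K(x, x')\vert_{x'=x}\big) c \leq \opnorm{\nabla^2_{x_i, x_i'} K(x, x')\vert_{x'=x}},
\end{talign}
where the second equality uses that differentiating $h(y) = \partial_{x_i} K(y, x) c$ in the $i$-th coordinate of its argument $y$ produces the mixed second derivative of $K$. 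Hence, for a unit $c$, a further Cauchy--Schwarz step gives $\inner{\partial_{x_i} f(x)}{c} \leq \norm{f}_{\cH_K} \opnorm{\nabla^2_{x_i, x_i'} K(x, x')\vert_{x'=x}}^{1/2}$; taking the supremum over $c$, squaring, and summing over $i$ yields the claimed bound.

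The main obstacle is the rigorous justification of the derivative reproducing property, i.e., that $f \mapsto \partial_{x_i} f(x)$ is a bounded linear functional on $\cH_K$ with Riesz representer $\partial_{x_i} K(\cdot, x) c$ for each $c$. This requires $K$ to possess continuous mixed second-order partial derivatives so that differentiation commutes with the $\cH_K$ inner product; the standard argument controls the relevant difference quotients via the first bound and passes to the limit by dominated convergence, precisely the smoothness already assumed of the kernels used in SVMD and SVNG. Two conventions should be kept in mind: $\nabla f$ denotes the Jacobian, justifying the coordinatewise decomposition of $\fronorm{\nabla f(x)}^2$, and $\nabla^2_{x_i, x_i'} K(x, x')\vert_{x'=x}$ is the Gram matrix of $\{\partial_{x_i} K(\cdot, x) c\}_c$, hence positive semidefinite with a well-defined operator norm that controls $c^\top (\nabla^2_{x_i, x_i'} K(x, x')\vert_{x'=x}) c$ for unit $c$.
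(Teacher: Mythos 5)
Your proof is correct and follows essentially the same route as the paper's: both bounds are obtained by pairing $f(x)$ (resp.\ $\partial_{x_i} f(x)$) against a unit vector, invoking the (derivative) reproducing property $\inner{f}{K(\cdot,x)c}_{\cH_K} = \inner{f(x)}{c}$ and $\inner{f}{\partial_{x_i}K(\cdot,x)c}_{\cH_K} = \inner{\partial_{x_i}f(x)}{c}$, applying Cauchy--Schwarz, and computing the RKHS norm of the representer as a quadratic form in $K(x,x)$ or $\nabla^2_{x_i,x_i'}K(x,x')\vert_{x'=x}$ bounded by its operator norm. Your additional remarks on the boundedness of the evaluation-of-derivative functional and the positive semidefiniteness of the mixed-derivative Gram matrix supply rigor that the paper's proof leaves implicit, but they do not change the argument.
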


\begin{proof}
	We first bound the $\|f(x)\|_2$ as
	\begin{align}
		\norm{f(x)}_2 &=  \sup_{\norm{y}_2 = 1} f(x)^\top y = \sup_{\norm{y}_2 = 1} \inner{f}{K(\cdot, x) y}_{\cH_K} \leq  \norm{f}_{\cH_K} \sup_{\norm{y}_2 = 1}\norm{K(\cdot, x)y}_{\cH_K} \\
		&= \norm{f}_{\cH_K}  \sup_{\norm{y}_2 = 1} (y^\top K(x, x) y)^{1/2} \leq \norm{f}_{\cH_K}  \sup_{\norm{y}_2 = 1} \sup_{\norm{u}_2=1} (u^\top K(x, x) y)^{1/2} \\
		&= \norm{f}_{\cH_K} \sup_{\norm{y}_2 = 1} \norm{K(x,x)y}_2^{1/2} = \norm{f}_{\cH_K} \norm{K(x, x)}_{\mathrm{op}}^{1/2}. \\
		\end{align}
	The second result follows similarly, 
	\begin{align}
		\fronorm{\nabla f(x)}^2 &= \sum_{i=1}^d \norm{\nabla_{x_i} f(x)}^2_2 = \sum_{i=1}^d \sup_{\norm{y}_2 = 1} (\nabla_{x_i} f(x)^\top y)^2 
		= \sum_{i=1}\sup_{\norm{y}_2 = 1} (\inner{f}{\nabla_{x_i}K(\cdot, x)y}_{\cH_K})^2 \\
		&\leq \norm{f}_{\cH_K}^2 \sum_{i=1}^d \sup_{\norm{y}_2 = 1}\norm{\nabla_{x_i}K(\cdot, x)y}_{\cH_K}^2 
		= \norm{f}_{\cH_K}^2 \sum_{i=1}^d  \sup_{\norm{y}_2 = 1} (y^\top \nabla^2_{x_i, x_i'}K(x, x')\vert_{x=x'} y) \\
		&\leq \norm{f}_{\cH_K}^2 \sum_{i=1}^d  \sup_{\norm{y}_2 = 1}\sup_{\norm{u}_2 = 1}  (u^\top \nabla^2_{x_i, x_i'}K(x, x')\vert_{x=x'} y) \\
		&= \norm{f}^2_{\cH_K}  \sum_{i=1}^d \|\nabla^2_{x_i, x_i'}K(x, x')\vert_{x'=x}\|_{\mathrm{op}}.
	\end{align}
\end{proof}

\end{document}